\newcommand\dbover[1]{\overline{\overline{#1}}}
\newcommand{\bl}[1]{\textcolor{blue}{#1}}
\definecolor{mypurple}{rgb}{.4,.0,.5}
\def\s{{\bf s}}
\def\y{{\bf y}}
\def\v{{\bf v}}
\def\x{{\bf x}}
\def\x{{\mathbf x}}
\def\s{{\bf s}}
\def\v{{\bf v}}
\def\x{{\bf x}}
\def\y{{\bf y}}
\def\z{{\bf z}}
\def\c{{\bf c}}
\def\h{{\bf h}}
\def\tr{\mbox{Tr}}
\def\tr{{\rm tr}\,}
\def\be{\begin{equation}}
\def\ee{\end{equation}}
\def\ba{\left[\begin{array}}
\def\ea{\end{array}\right]}
\def\t{{\bf t}}
\def\s{{\bf s}}
\def\v{{\bf v}}
\def\x{{\bf x}}
\def\y{{\bf y}}
\def\z{{\bf z}}
\def\c{{\bf c}}
\def\e{{\bf e}}
\def\1{{\bf 1}}
\def\g{{\bf g}}
\def\0{{\bf 0}}
\def\X{{\bf X}}
\def\mR{{\mathbb R}}
\def\mE{{\mathbb E}}
\def\mP{{\mathbb P}}
\def\lp{\left (}
\def\rp{\right )}
\def\s{{\bf s}}
\def\y{{\bf y}}
\def\v{{\bf v}}
\def\x{{\bf x}}
\def\x{{\mathbf x}}
\def\s{{\bf s}}
\def\v{{\bf v}}
\def\x{{\bf x}}
\def\y{{\bf y}}
\def\z{{\bf z}}
\def\c{{\bf c}}
\def\h{{\bf h}}
\def\tr{\mbox{Tr}}
\def\tr{{\rm tr}\,}
\def\be{\begin{equation}}
\def\ee{\end{equation}}
\def\ba{\left[\begin{array}}
\def\ea{\end{array}\right]}
\def\t{{\bf t}}
\def\s{{\bf s}}
\def\v{{\bf v}}
\def\x{{\bf x}}
\def\y{{\bf y}}
\def\z{{\bf z}}
\def\c{{\bf c}}
\def\e{{\bf e}}
\def\({\left (}
\def\){\right )}
\def\1{{\bf 1}}
\def\g{{\bf g}}
\def\0{{\bf 0}}
\def\cL{{\mathcal L}}
\def\cA{{\mathcal A}}
\definecolor{darkgreen}{rgb}{0, 0.4,0}
\newcommand{\dgr}[1]{\textcolor{darkgreen}{#1}}
\definecolor{purplebrown}{rgb}{0.5,0.1,0.6}
\definecolor{ultclupcol}{rgb}{0.1,0.5,0.5}
\definecolor{mytrycolor}{rgb}{0.5,0.7,0.2}
\definecolor{ultclupcola}{rgb}{.5,0,.5}
\definecolor{shadebrown}{rgb}{0.1,0.1,0.9}
\definecolor{lightblue}{rgb}{0.2,0,1}
\newtcbox{\xmybox}{on line,
arc=7pt,
before upper={\rule[-3pt]{0pt}{10pt}},boxrule=0pt,
boxsep=0pt,left=6pt,right=6pt,top=0pt,bottom=0pt,enhanced, coltext=blue, colback=white!10!yellow}
\newtcbox{\xmyboxa}{on line,
arc=7pt,
before upper={\rule[-3pt]{0pt}{10pt}},boxrule=0pt,
boxsep=0pt,left=6pt,right=6pt,top=0pt,bottom=0pt,enhanced, colback=white!10!yellow}
\newtcbox{\xmyboxb}{on line,
arc=7pt,
before upper={\rule[-3pt]{0pt}{10pt}},boxrule=1pt,colframe=darkgreen!100!blue,
boxsep=0pt,left=6pt,right=6pt,top=0pt,bottom=0pt,enhanced, colback=white!10!yellow}
\newtcbox{\xmyboxc}{on line,
arc=7pt,
before upper={\rule[-3pt]{0pt}{10pt}},boxrule=.7pt,colframe=blue!100!blue,
boxsep=0pt,left=6pt,right=6pt,top=0pt,bottom=0pt,enhanced, coltext=blue, colback=white!10!yellow}
\newtcbox{\xmytboxa}{on line,
arc=7pt,
before upper={\rule[-3pt]{0pt}{10pt}},boxrule=.0pt,colframe=pink!50!yellow,
boxsep=0pt,left=6pt,right=6pt,top=0pt,bottom=0pt,enhanced, coltext=white, colback=blue!40!red}
\newtcbox{\xmytboxb}{on line,
arc=7pt,
before upper={\rule[-3pt]{0pt}{10pt}},boxrule=.0pt,colframe=pink!50!yellow,
boxsep=0pt,left=6pt,right=6pt,top=0pt,bottom=0pt,enhanced, coltext=white, colback=white!40!green}
\newcommand\subsubsubsection{\@startsection{paragraph}{4}{\z@}{-2.5ex\@plus -1ex \@minus -.25ex}{1.25ex \@plus .25ex}{\normalfont\normalsize\bfseries}}
\newcommand\subsubsubsubsection{\@startsection{subparagraph}{5}{\z@}{-2.5ex\@plus -1ex \@minus -.25ex}{1.25ex \@plus .25ex}{\normalfont\normalsize\bfseries}}
\newtheorem{theorem}{Theorem}
\newtheorem{corollary}{Corollary}
\newtheorem{lemma}{Lemma}
\begin{document}

\begin{singlespace}

\title {Ridge interpolators in correlated \emph{factor} regression models  -- exact risk analysis  %A tight variant of Gordon's escape through a mesh theorem
%\footnote{ This work was supported in
%part.}
}
\author{
\textsc{Mihailo Stojnic
\footnote{e-mail: {\tt flatoyer@gmail.com}} }}
\date{}
\maketitle

%%%%%%%%%%%%%%%%%%%%%%%%%%%%%%%%%%%%%%%%%%%%%%%%%%%%%%%%%%%%%%%%%%%%%%%%%%%%%%%%
%%%%%%%%%%%%%%%%%%%%%%%%%%%%%%%%%%%%%%%%%%%%%%%%%%%%%%%%%%%%%%%%%%%%%%%%%%%%%%%%
\centerline{{\bf Abstract}} \vspace*{0.1in}
%%%%%%%%%%%%%%%%%%%%%%%%%%%%%%%%%%%%%%%%%%%%%%%%%%%%%%%%%%%%%%%%%%%%%%%%%%%%%%%%
%%%%%%%%%%%%%%%%%%%%%%%%%%%%%%%%%%%%%%%%%%%%%%%%%%%%%%%%%%%%%%%%%%%%%%%%%%%%%%%%

We consider correlated \emph{factor} regression models (FRM) and analyze the performance of classical ridge interpolators. Utilizing powerful \emph{Random Duality Theory} (RDT) mathematical engine, we obtain \emph{precise} closed form characterizations of the underlying optimization problems and  all associated optimizing quantities. In particular, we provide \emph{excess prediction risk} characterizations that clearly show the dependence on all key model parameters, covariance matrices, loadings, and dimensions. As a function of the over-parametrization ratio, the generalized least squares (GLS) risk also exhibits the well known \emph{double-descent} (non-monotonic) behavior. Similarly to the classical linear regression models (LRM), we demonstrate that such FRM phenomenon can be smoothened out by the optimally tuned ridge regularization. The theoretical results are supplemented by numerical simulations and an excellent agrement between the two is observed. Moreover, we note that ``ridge smootenhing'' is often of limited effect already for over-parametrization ratios above $5$ and of virtually no effect for those above $10$. This solidifies the notion that one of the recently most popular neural networks paradigms -- \emph{zero-training (interpolating) generalizes well} -- enjoys wider applicability, including the one within the FRM estimation/prediction context.

\vspace*{0.25in} \noindent {\bf Index Terms: Factor regression model; Interpolators. Ridge estimators; Correlations}.

\end{singlespace}

%%%%%%%%%%%%%%%%%%%%%%%%%%%%%%%%%%%%%%%%%%%%%%%%%%%%%%%%%%%%%%%%%
%%%%%%%%%%%%%%%%%%%%%%%%%%%%%%%%%%%%%%%%%%%%%%%%%%%%%%%%%%%%%%%%%
%%%%%%%%%%%%%%%%%%%%%%%%%%%%%%%%%%%%%%%%%%%%%%%%%%%%%%%%%%%%%%%%%
%%%%%%%%%%%%%%%%%%%%%%%%%%%%%%%%%%%%%%%%%%%%%%%%%%%%%%%%%%%%%%%%%
%%%%%%%%%%%%%%%%%%%%%%%%%%%%%%%%%%%%%%%%%%%%%%%%%%%%%%%%%%%%%%%%%
%%%%%%%%%%%%%%%%%%%%%%%%%%%%%%%%%%%%%%%%%%%%%%%%%%%%%%%%%%%%%%%%%
\section{Introduction}
\label{sec:back}
%%%%%%%%%%%%%%%%%%%%%%%%%%%%%%%%%%%%%%%%%%%%%%%%%%%%%%%%%%%%%%%%%
%%%%%%%%%%%%%%%%%%%%%%%%%%%%%%%%%%%%%%%%%%%%%%%%%%%%%%%%%%%%%%%%%
%%%%%%%%%%%%%%%%%%%%%%%%%%%%%%%%%%%%%%%%%%%%%%%%%%%%%%%%%%%%%%%%%
%%%%%%%%%%%%%%%%%%%%%%%%%%%%%%%%%%%%%%%%%%%%%%%%%%%%%%%%%%%%%%%%%
%%%%%%%%%%%%%%%%%%%%%%%%%%%%%%%%%%%%%%%%%%%%%%%%%%%%%%%%%%%%%%%%%

Studying random structures parametric characterizations with  \emph{non-monotonic} behavior became very popular over the last decade. A substantial success in understanding and utilization of these phenomena is particularly associated with the training and testing of neural networks (NN). Observation of a somewhat surprising NN property that zero training error might not necessarily prevent excellent generalization abilities (see, e.g., \cite{BelkinHM18,BelkinMM18,ZhangBHRV21,ZBHRV17})  has been among the key motivating examples for a tone of recent studies in various scientific and engineering areas (see, e.g., \cite{SGDSBW19,BHMM19,HMRT22,Dicker16,BHX20,DobWag18}). For example, positioning such a seemingly contradictory phenomenon \cite{HastieFT01,HastieTF09} within a bias-variance tradeoff discussion, \cite{BHMM19} emphasized existence of a \emph{double-descent} NN generalization error behavior. An (already non-monotonic) U-shape error's dependence on the network size exhibited before the interpolating limit can in fact be followed by a \emph{second} descent after the limit (see, also \cite{ASS20,MuthukumarVSS20,NeyshaburTS14,ZhangBHRV21,ZBHRV17} and particularly  \cite{VCR89} for early double-decent displays). Such a positioning directly established a statistical interconnection that was substantially furthered in recent years.

Early connections were made with the classical  regression statistical models where similar double-descent behavior was empirically observed \cite{SGDSBW19,BHMM19} and additionally theoretically substantiated in e.g., \cite{MuthukumarVSS20,BHX20} (an introductory statistical learning context approach can  also be seen in, e.g., \cite{BRT19}). The appearance of \cite{HMRT22} is without a doubt one of the big breakthroughs that in a way skyrocketed the popularity of studying these phenomena. Choosing a particularly attractive linear regression model (LRM), \cite{HMRT22} focused on \emph{precise} studying of the behavior of several classical LRM associated linear estimators. Through such a studying it then highlighted that the presence of the non-monotonic prediction risk behavior is not only a feature of the complex models (such as generic or deep NNs) but also of very simple classical ones as well. Some of the \cite{HMRT22}'s results were already known (see, e.g., \cite{Dicker16,DobWag18}). However, reconnecting back to the NN and considering various other related models including the nonlinear ones, it basically emphasized a potential universality and usefulness of the type of behavior already present in simple models and exhibited by relatively simple algorithmic techniques. It also strengthened the belief that similar behavior might be happening in other well known models as well.

Our interest here is precisely in one of such well known models -- the factor regression model (FRM). After classical factor modeling considerations from the middle and second half of the last century (see, e.g., \cite{Jolliffe82,AndRub56,Joreskog67,Lawley40,Lawley41,Lawley43}), the last two decades have seen various forms of these models finding applications in estimating/predicting type of problems in a host of fields ranging from statistics, machine learning (ML), to many  branches of engineering and biotechnology (see, e.g., \cite{FanYM11,FanYM13,BhatDun11,StockW02,HCM113,BHPT06}). While the algorithmic techniques centered around principal component (PC) approaches  have been shown to have excellent properties \cite{FanYM11,FanYM13,BhatDun11,StockW02,HCM113,BHPT06,BaiNg02,BaiNg06} (especially in regimes with limited number of factors), the classical least squares based ones regained popularity and virtually out of nowhere came into the spotlight in the last several years in large part due to the above mentioned revamped interest essentially originating from studying interpolating zero-training neural networks properties.

After initial consideration of closely related latent space models in \cite{HMRT22}, \cite{BSMW22} studied the pure FRM (where both response variables and covariates are directly connected to the low-dimensional space) and for the GLS (generalized least squares) estimators (which in the over-parameterized regime correspond to the minimum norm ridge interpolators), observed a similar (non-monotonic) double-descent phenomenon and theoretically justified the existence of the scenarios where its associated excess risk can be smaller than the null one. It also showed that some of the considered scenarios allow that the GLS exhibits a full \emph{consistency} (vanishing excess risk). Similar results were obtained a bit earlier in \cite{BLT20} (see, also \cite{TB20}) for related models. These studies provided a strong advancement in understanding the estimation and prediction  within the FRM/LRM contexts, but are of the \emph{qualitative} type and as such are more focused on the confirmation of the existence of certain intriguing and popular phenomena (the double-descent and (large dimensional limiting) consistency being among the key ones). At the same time they are very different from the studies that focus on the so-called \emph{precise analyses}. For example, \cite{Dicker16,DobWag18} in the LRM context and \cite{HMRT22} in a bit wider context, utilized spectral random matrix (free probability) theory to obtain precise/exact characterizations of the excess risk as functions of the over- or under-parametrization ratio, thereby confirming and \emph{quantitatively describing} the underlying non-monotonicity  (appearance of singularities, ascents, descents, double-descents and so on). Within the scenarios where the utilized spectral methods techniques are applicable, they indeed produce excellent results. However, outside of such scenarios, they remain a bit limited, and potential alternatives are rather welcome.

In this paper, we present such a powerful alternative and do so while studying the pure factor regression models and their classical GLS and ridge estimators. The methodology is based on a completely different mathematical engine, called Random Duality Theory (RDT), that we developed in a long line of work \cite{StojnicRegRndDlt10,StojnicCSetam09,StojnicICASSP10var,StojnicISIT2010binary,StojnicGenLasso10,StojnicGardGen13,StojnicDiscPercp13,StojnicGorEx10,StojnicUpper10}.  The emphasis is on \emph{correlated} FRM and the RDT's ability to produce very \emph{precise} and generic analyses.

%%%%%%%%%%%%%%%%%%%%%%%%%%%%%%%%%%%%%%%%%%%%%%%%%%%%%%%%%%%%%%%%%
%%%%%%%%%%%%%%%%%%%%%%%%%%%%%%%%%%%%%%%%%%%%%%%%%%%%%%%%%%%%%%%%%
%%%%%%%%%%%%%%%%%%%%%%%%%%%%%%%%%%%%%%%%%%%%%%%%%%%%%%%%%%%%%%%%%
%%%%%%%%%%%%%%%%%%%%%%%%%%%%%%%%%%%%%%%%%%%%%%%%%%%%%%%%%%%%%%%%%
%%%%%%%%%%%%%%%%%%%%%%%%%%%%%%%%%%%%%%%%%%%%%%%%%%%%%%%%%%%%%%%%%
%%%%%%%%%%%%%%%%%%%%%%%%%%%%%%%%%%%%%%%%%%%%%%%%%%%%%%%%%%%%%%%%%
\section{Correlated factor regression models --- mathematical setup}
\label{sec:randlincons}
%%%%%%%%%%%%%%%%%%%%%%%%%%%%%%%%%%%%%%%%%%%%%%%%%%%%%%%%%%%%%%%%%
%%%%%%%%%%%%%%%%%%%%%%%%%%%%%%%%%%%%%%%%%%%%%%%%%%%%%%%%%%%%%%%%%
%%%%%%%%%%%%%%%%%%%%%%%%%%%%%%%%%%%%%%%%%%%%%%%%%%%%%%%%%%%%%%%%%
%%%%%%%%%%%%%%%%%%%%%%%%%%%%%%%%%%%%%%%%%%%%%%%%%%%%%%%%%%%%%%%%%
%%%%%%%%%%%%%%%%%%%%%%%%%%%%%%%%%%%%%%%%%%%%%%%%%%%%%%%%%%%%%%%%%
%%%%%%%%%%%%%%%%%%%%%%%%%%%%%%%%%%%%%%%%%%%%%%%%%%%%%%%%%%%%%%%%%

We consider the following classical (linear) pure factor regression model (FRM)
\begin{eqnarray}
\y=\bar{Z}\bar{\beta}+\e,\quad X=\bar{Z}L+\bar{E} \label{eq:model01}
\end{eqnarray}
where $X\in \mR^{m\times n}$ is a matrix of covariates or feature vectors, $\y\in\mR^m$ is a vector of response variables, and vector $\e\in\mR^m$ and matrix $\bar{E}\in\mR^{m\times n}$ characterize deviations from the idealized noiseless pure factor models (throughout the paper, unless clear from the context or explicitly stated otherwise, all vectors are assumed to be \emph{column vectors}). $\bar{Z}\in\mR^{m\times k}$ ($k<m$) represents a low-dimensional space of relevant factors and $L\in\mR^{k\times n}$ is the matrix of associated covariates/factors loadings. The model is well known and has been extensively used in a variety of fields, statistics, machine learning, econometrics, finance and so on. As such it needs no extensive introduction besides a few basic pointers. \textbf{\emph{(i)}} First, while it clearly resembles classical linear regression model (LRM), its a particularly  distinguishing feature is that both response variables $\y$ and covariates $X$ are directly connected to a low-dimensional space (here represented through the matrix of factors, $\bar{Z}$). In fact, if both $\e$ and $\bar{E}$ are zero, we have the \emph{idealized} FRM where both response variables and covariates reside exclusively in a common low-dimensional space $Z$. Assuming that $\e$ and $\bar{E}$ are significantly smaller (in magnitude) than $L$, model in (\ref{eq:model01}) effectively represents a realistic version of the idealized FRM. \textbf{\emph{(ii)}} Second, differently from the classical LRM, the factors $\bar{Z}$ are \emph{not} observable (within the LRM context, on the other hand, the factors would be viewed as the covariates $X$ and as such would be observable). In other words, the factors $\bar{Z}$ are hidden and the model is therefore often called \emph{hidden factor model} as well. \emph{(iii)} While it is clear that the chosen model is a classical FRM, it should also be noted that it is a bit different from similar ones studied throughout the literature. For example, the ones studied in \cite{BLT20,TB20} (and, in a way, revisited in \cite{BSMW22}) also in their essence rely on a connection to a lower dimensional space. However, a bit extra algebraic (and statistical) work is needed to make the precise connection. A very similar observation applies to the discussion of the latent space model in \cite{HMRT22} mentioned earlier.

%%%%%%%%%%%%%%%%%%%%%%%%%%%%%%%%%%%%%%%%%%%%%%%%%%%%%%%%%%%%%%%%%
%%%%%%%%%%%%%%%%%%%%%%%%%%%%%%%%%%%%%%%%%%%%%%%%%%%%%%%%%%%%%%%%%
%%%%%%%%%%%%%%%%%%%%%%%%%%%%%%%%%%%%%%%%%%%%%%%%%%%%%%%%%%%%%%%%%
\subsection{FRM versus LRM estimation}
\label{sec:frmlrm}
%%%%%%%%%%%%%%%%%%%%%%%%%%%%%%%%%%%%%%%%%%%%%%%%%%%%%%%%%%%%%%%%%
%%%%%%%%%%%%%%%%%%%%%%%%%%%%%%%%%%%%%%%%%%%%%%%%%%%%%%%%%%%%%%%%%
%%%%%%%%%%%%%%%%%%%%%%%%%%%%%%%%%%%%%%%%%%%%%%%%%%%%%%%%%%%%%%%%%

Besides the above key technical differences, there is actually a very relevant conceptual similarity between the FRM and the LRM. Namely, as the LRM, the FRM can be tightly connected to the training/testing data internal relations paradigm. In such a context, one takes $X$ and $\y$ as fully observable and views the rows of $X$ ($X_{i,:}, i=1,\dots,m$) as vectors of features that correspond to the response variables which are the elements of $\y$ ($\y_i,i=1,\dots,m$). Given the access to $m$ data pairs $(X_{i,:},\y_i)$. one would like to determine the common relation that presumably exists within each of them. Given a new data pair $(\x^{(t)},y^{(t)})$ and restricting to linearity, the estimation goal can then be summarized as
 \begin{eqnarray}
\bl{\textbf{Ultimate \underline{\emph{estimation}} goal:}} \quad \quad \quad \quad \mbox{Find}  & & \hat{\beta}\in\mR^n \nonumber \\
\mbox{such that} & &   \lp\x^{(t)}\rp^T\hat{\beta}\quad \mbox{is ``close'' to}\quad  y^{(t)}. \label{eq:model02}
\end{eqnarray}
One should note right here that, differently from the LRM,  $\hat{\beta}$ is not an estimate of $\bar{\beta}$. Moreover, their dimensions are even different, $\bar{\beta}$ is a $k$-dimensional whereas $\hat{\beta}$ is an $n$-dimensional vector. This precisely pinpoints the key difference between the LRM and the FRM estimation. Namely, while in  both, LRM and FRM, the estimators aim to provide linear relation between the response variables and covariates, in the FRM, the covariates are not factors themselves (as they are in the LRM) but rather admit a factor type linear decomposition given in (\ref{eq:model01}). As is clear from (\ref{eq:model01}) and (\ref{eq:model02}), $\bar{\beta}$ is the linear representation of $\y$ in factors $Z$ space, whereas $\hat{\beta}$ should be such a representation in covariates $X$ space.

One should also note that the estimation goal in (\ref{eq:model02}) is somewhat flexibly imprecise. It states that the aimed linear representation should be ``close'' to the response variable. Various measures of accuracy could be of interest depending on the area of application (for a similar consideration within a sparse regression approach see, e.g., \cite{StojnicGenLasso10,StojnicGenSocp10,StojnicPrDepSocp10}). We here consider the classical \emph{prediction risk} or, in the machine learning/neural networks terminology, the \emph{testing/generalization error}. Assuming a statistical context with random $X$ and $\e$, the prediction risk of an FRM type of estimator, $\hat{\beta}$, is formally defined as the following
\begin{eqnarray}
R_e(\bar{\beta},\hat{\beta})\triangleq \mE_{\x^{(t)}}\lp\lp\lp \x^{(t)}\rp^T\hat{\beta}- y^{(t)} \rp^2 |X,\y\rp, \label{eq:model03}
\end{eqnarray}
where, the \emph{testing} data pair $(\x^{(t)},y^{(t)})$ (testing vector of features $\x^{(t)}\in\mR^n$ and testing response variable $y^{(t)}$), are independent of the \emph{training} data pairs $(X_{i,:},\y_i)$. One should note that the expectation is over $\x^{(t)}$ (throughout the paper, the subscripts next to $\mE$ and $\mP$ denote the randomness to which they relate (when clear from the context, the subscript(s) are not specified)). This basically means that $R_e(\bar{\beta},\hat{\beta})$, as a function of $X$ and $\y$, is still a random quantity. Throughout the paper, we will find it convenient to work with the closely related, so-called, \emph{excess (prediction) risk} which we will introduce a bit later on. As studying the risk will be the central topic of this paper, two points particularly related to a line of work \cite{StojnicPrDepSocp10,StojnicGenLasso10,StojnicGenSocp10} should be emphasized: \textbf{\emph{1)}} In  \cite{StojnicPrDepSocp10,StojnicGenLasso10,StojnicGenSocp10} a (sparse) LRM context is studied where the main goal was the (\emph{in-sample}) estimation of the regression coefficients. As a measure of accuracy of the considered LASSO and SOCP estimators, the residual RMSE (root mean square error) was considered. Here a general (non-sparse) FRM context is studied where the predictive (\emph{out-of-sample}) abilities of the associated estimators are of interest. This effectively dictates a natural replacement of the RMSE by the risk from (\ref{eq:model03}). \emph{\textbf{2)}} The definition of the risk given in  (\ref{eq:model03}) clearly indicates that it depends on both $\bar{\beta}$ (the ``true'' $\y$ linear representation in the factors $Z$ space) and $\hat{\beta}$ (the estimated linear representations in the covariates $X$ space). Besides the natural difference between the meaning of $\bar{\beta}$ and $\hat{\beta}$, this also means that studying $\hat{\beta}$ falls into the category of, what \cite{StojnicPrDepSocp10} terms as, the \emph{problem dependent analyses}. It should be noted that, differently from  \cite{StojnicGenLasso10,StojnicGenSocp10} (which considered the usual problem data insensitive worst case context), \cite{StojnicPrDepSocp10} is the first problem dependent consideration within the \emph{precise} analyses of regression models. Moreover, within such a context, \cite{StojnicPrDepSocp10} uncovered the existence of a non-monotonicity (the U-shape RMSE functional dependence of the SNR) and effectively pioneered studying the appearance of such phenomena in precise analyses. As mentioned earlier, these phenomena would later on become very popular in the context of prediction risk dimensional dependence and associated double-descent (see, e.g., \cite{SGDSBW19,BHMM19,ZBHRV17,HMRT22,Dicker16,DobWag18,BHX20}).

%%%%%%%%%%%%%%%%%%%%%%%%%%%%%%%%%%%%%%%%%%%%%%%%%%%%%%%%%%%%%%%%%
%%%%%%%%%%%%%%%%%%%%%%%%%%%%%%%%%%%%%%%%%%%%%%%%%%%%%%%%%%%%%%%%%
%%%%%%%%%%%%%%%%%%%%%%%%%%%%%%%%%%%%%%%%%%%%%%%%%%%%%%%%%%%%%%%%%
\subsection{Key problem infrastructure features}
\label{sec:dse}
%%%%%%%%%%%%%%%%%%%%%%%%%%%%%%%%%%%%%%%%%%%%%%%%%%%%%%%%%%%%%%%%%
%%%%%%%%%%%%%%%%%%%%%%%%%%%%%%%%%%%%%%%%%%%%%%%%%%%%%%%%%%%%%%%%%
%%%%%%%%%%%%%%%%%%%%%%%%%%%%%%%%%%%%%%%%%%%%%%%%%%%%%%%%%%%%%%%%%

Before starting the technical analysis, we below first introduce the scenarios that will be considered and the associated analytical infrastructure. Problem dimensions, underlying statistics, and relevant estimators are the three architectural components of particular interest. We discuss all of them separately in the following subsections.

%%%%%%%%%%%%%%%%%%%%%%%%%%%%%%%%%%%%%%%%%%%%%%%%%%%%%%%%%%%%%%%%%
%%%%%%%%%%%%%%%%%%%%%%%%%%%%%%%%%%%%%%%%%%%%%%%%%%%%%%%%%%%%%%%%%
%%%%%%%%%%%%%%%%%%%%%%%%%%%%%%%%%%%%%%%%%%%%%%%%%%%%%%%%%%%%%%%%%
\subsubsection{\emph{Linearly} related dimensions}
\label{sec:dims}
%%%%%%%%%%%%%%%%%%%%%%%%%%%%%%%%%%%%%%%%%%%%%%%%%%%%%%%%%%%%%%%%%
%%%%%%%%%%%%%%%%%%%%%%%%%%%%%%%%%%%%%%%%%%%%%%%%%%%%%%%%%%%%%%%%%
%%%%%%%%%%%%%%%%%%%%%%%%%%%%%%%%%%%%%%%%%%%%%%%%%%%%%%%%%%%%%%%%%

We will consider large dimensional scenario with \emph{all} key underlying dimensions going to infinity. Within such a context, of particular interest is mathematically the most challenging, large $n$ \emph{linear} (or proportional) regime, where
\begin{eqnarray}
\alpha=\lim_{n\rightarrow \infty} \frac{m}{n}, \quad \kappa=\lim_{n\rightarrow \infty} \frac{k}{m}, \label{eq:model03a0}
\end{eqnarray}
and both $\alpha$ and $\kappa$ clearly remain constant as $n$ grows. In a way, $\alpha$ can be connected to what is in the classical LRM called the under-parametrization ratio (or the inverse of the over-parametrization ratio). Also, unless otherwise stated, all underlying matrices are assumed to be of full rank and we consider $\alpha\kappa<1$ context (the number of factors $k$ smaller than the number of features $n$). Even outside such regimes, the results that we will present hold with very minimal (or sometimes even without any) adjustments. However, constantly separating special cases would fill the presentation with a tone of unnecessary details that, besides slightly enhanced generality, would bring no novel conceptual insights. We instead opt to keep the presentation neater and, along the same lines, also often simplify writing by avoiding repeatedly using the $\lim_{n\rightarrow \infty}$ notation throughout the derivations. It will not be that difficult to see from the context that all expressions involving $n$ are assumed to be taken in the $\lim_{n\rightarrow \infty}$ sense.

%%%%%%%%%%%%%%%%%%%%%%%%%%%%%%%%%%%%%%%%%%%%%%%%%%%%%%%%%%%%%%%%%
%%%%%%%%%%%%%%%%%%%%%%%%%%%%%%%%%%%%%%%%%%%%%%%%%%%%%%%%%%%%%%%%%
%%%%%%%%%%%%%%%%%%%%%%%%%%%%%%%%%%%%%%%%%%%%%%%%%%%%%%%%%%%%%%%%%
\subsubsection{Underlying statistics}
\label{sec:stats}
%%%%%%%%%%%%%%%%%%%%%%%%%%%%%%%%%%%%%%%%%%%%%%%%%%%%%%%%%%%%%%%%%
%%%%%%%%%%%%%%%%%%%%%%%%%%%%%%%%%%%%%%%%%%%%%%%%%%%%%%%%%%%%%%%%%
%%%%%%%%%%%%%%%%%%%%%%%%%%%%%%%%%%%%%%%%%%%%%%%%%%%%%%%%%%%%%%%%%

To further strengthen the overall elegance of the exposition, a general Gaussian statistical scenario will be considered (it should though be kept in mind that the obtained results are universal and hold way more generally). Also, $\bar{\beta}$ in (\ref{eq:model01}) will be viewed as deterministic and fixed which will render $X$ and $\e$ as the only source of randomness. To make everything a bit more concrete, all three $\bar{Z}$, $\e$, and $\bar{E}$ are assumed to have Gaussian entries and, for three full rank (non-necessarily symmetric) matrices $A\in\mR^{n\times n}$, $\overline{A}\in\mR^{m\times m}$, and $\dbover{A}\in\mR^{n\times n}$ given in the following way
\begin{eqnarray}
\bar{Z}=ZA, \quad \quad \e=\sigma \overline{A}\v, \quad \quad \bar{E}=E\dbover{A}, \label{eq:model04}
\end{eqnarray}
where $Z\in\mR^{m\times n}$, $\v\in\mR^m$, and $E\in\mR^{n\times n}$ are independent of each other and have iid standard normal entries and $\sigma$ is a deterministic parameter independent of $n$ or any other problem dimension. Utilizing (\ref{eq:model04}), (\ref{eq:model01}) can be rewritten as
\begin{eqnarray}
\y=\bar{Z}\bar{\beta}+\e=ZA\bar{\beta}+\sigma\overline{A}\v, \quad  X=\bar{Z}L+\bar{E}=ZAL+E\dbover{A}. \label{eq:model05}
\end{eqnarray}
Clearly,  $\bar{Z}_{i:,}$ (the rows of factor matrix $\bar{Z}$) are independent of each other but the elements within each row are correlated among themselves. Moreover,the correlation matrix associated with each row is given as
\begin{eqnarray}
 \mE \bar{Z}_{i,:}^T\bar{Z}_{i,:}=A^TA. \label{eq:model06}
\end{eqnarray}
One then analogously has for the rows of matrix $\bar{E}$, $\bar{E}_{i,:}$
\begin{eqnarray}
 \mE \bar{E}_{i,:}^T\bar{E}_{i,:}=\dbover{A}^T\dbover{A}, \label{eq:model06a0}
\end{eqnarray}
and for the noise correlations
\begin{eqnarray}
\mE \e\e^T=\overline{A}\overline{A}^T. \label{eq:model07}
\end{eqnarray}
It will also be useful to consider the following SVDs of all three $A$, $\overline{A}$, and $\dbover{A}$
\begin{eqnarray}
A=U\Sigma V^T, \quad\overline{A}=\overline{U}\overline{\Sigma} \overline{V}^T
, \quad\dbover{A}=\dbover{U}\dbover{\Sigma} \dbover{V}^T. \label{eq:model08}
\end{eqnarray}
A combination of (\ref{eq:model05}) and (\ref{eq:model08}) easily gives
\begin{eqnarray}
\y=ZA\bar{\beta}+\sigma\overline{A}\v=ZU\Sigma V^T\bar{\beta}+\sigma \overline{U}\overline{\Sigma} \overline{V}^T\v, \label{eq:model09}
\end{eqnarray}
and
\begin{eqnarray}
 X= ZAL+E\dbover{A} =ZU\Sigma V^TL + E\dbover{U}\dbover{\Sigma} \dbover{V}^T. \label{eq:model09a0}
\end{eqnarray}
Utilizing the rotational symmetry of the standard normal entries of $Z$, $\v$, and $E$, (\ref{eq:model09}) and (\ref{eq:model09a0}) are statistically equivalent to the following
\begin{eqnarray}
\y=Z \Sigma V^T\bar{\beta}+\sigma \overline{U}\overline{\Sigma} \v,
\quad  X= Z\Sigma V^TL + E\dbover{\Sigma} \dbover{V}^T. \label{eq:model010}
\end{eqnarray}
In other words, one can basically view $\bar{Z}$, $\e$, and $\bar{E}$ as
\begin{eqnarray}
\bar{Z}=Z \Sigma V^T, \quad \e=\sigma\overline{U}\overline{\Sigma} \v, \quad \bar{E}= E\dbover{\Sigma} \dbover{V}^T. \label{eq:model010a0}
\end{eqnarray}
This also means that the model is statistically fully characterized by three iid standard normal objects $Z$, $\v$, and $E$, three fixed (deterministic) unitary matrices $V$, $\overline{U}$, and $\dbover{V}$, and three fixed (deterministic) diagonal matrices $\Sigma$, $\overline{\Sigma}$, and $\dbover{\Sigma}$. The above assumed statistics  allows for the existence of the cross-sectional (or inter-factors and inter-features) correlations. It also allows for intra-sample (or time-series) noise correlations. Taking $\overline{U}$ and $\overline{\Sigma}$ to be identity matrices easily removes these correlations. We keep them to ensure an easier parallelism with the already existing results.

As is typically the case when considering the predictive abilities of the estimators, unless otherwise stated, we assume that $\x^{(t)}$ (the testing features vector) has the same statistics as each of the rows of the training features matrix $X$ and that $y^{(t)}$ (the testing response variable) has the same statistics as any of the elements of the vector of the training response variables, $\y$. In more concrete terms, this means that we take
\begin{eqnarray}
\lp\x^{(t)}\rp^T=\lp\z^{(t)}\rp^T \Sigma V^TL+\lp \e^{(t)}\rp^T\dbover{\Sigma} \dbover{V}^T, \quad y^{(t)}=\lp\z^{(t)}\rp^T \Sigma V^T\bar{\beta}+\sigma \sqrt{\sum_{j=1}^{m}\lp\overline{U}_{ij}\overline{\Sigma}_{jj}\rp^2} v, \label{eq:model010a1}
\end{eqnarray}
where $\z^{(t)}\in\mR^k$, $\e^{(t)}\in\mR^{n}$, and $v\in\mR$ are comprised of iid standard normals and independent among themselves and of $X$ and $\y$ (basically, $X$ and $\e$, or $Z$, $\v$, and $E$). One should note that, due to intra-sample noise correlations,  $y^{(t)}$ is a function of $i$. This will however play no role in the risk analysis as we now for any chosen $i$ formally introduce the following \emph{excess risk} that will be the main subject of our study
\begin{eqnarray}
R(\bar{\beta},\hat{\beta}) & \triangleq &
R_e(\bar{\beta},\hat{\beta}) -\sigma ^2\lp \sum_{j=1}^{m}\lp\overline{U}_{ij}\overline{\Sigma}_{jj}\rp^2 \rp \nonumber \\
&= &
\mE_{\x^{(t)}}\lp\lp\lp \x^{(t)}\rp^T\hat{\beta}- y^{(t)} \rp^2 |X,\y\rp
-\sigma ^2\lp \sum_{j=1}^{m}\lp\overline{U}_{ij}\overline{\Sigma}_{jj}\rp^2 \rp \nonumber \\
&= &
\mE_{\z^{(t)},\e^{(t)},v}\lp\lp
\lp\z^{(t)}\rp^T \Sigma V^T \lp L\hat{\beta} - \bar{\beta} \rp  +   \lp \e^{(t)}\rp^T\dbover{\Sigma} \dbover{V}^T
 \hat{\beta}- \sigma \sqrt{\sum_{j=1}^{m}\lp\overline{U}_{ij}\overline{\Sigma}_{jj}\rp^2} v \rp^2 |X,\y\rp \nonumber \\
 & &
-\sigma ^2\lp \sum_{j=1}^{m}\lp\overline{U}_{ij}\overline{\Sigma}_{jj}\rp^2 \rp \nonumber \\
&= & \left  \|\Sigma V^T\lp \bar{\beta} -L\hat{\beta} \rp\right \|_2^2
+ \left \|\dbover{\Sigma} \dbover{V}^T \hat{\beta}\right \|_2^2, \label{eq:model010a1a0}
\end{eqnarray}
where we utilized
\begin{eqnarray}
 \mE V\Sigma \z^{(t)} \lp\z^{(t)}\rp^T \Sigma V^T = V\Sigma\Sigma V^T, \quad
  \mE \dbover{V}\dbover{\Sigma} \e^{(t)} \lp\e^{(t)}\rp^T \dbover{\Sigma} \dbover{V}^T = \dbover{V}\dbover{\Sigma}\dbover{\Sigma}\dbover{V}^T, \label{eq:model010a2}
\end{eqnarray}
and independence of $\z^{(t)},\e^{(t)}$ , and $v$. Also, as having both $\sigma$ and magnitude of $\bar{\beta}$ simultaneously vary brings no conceptual advantage over having one of them fixed, we scale one of them to one. For example, unless otherwise stated, we take $\sigma$ as potentially varying and $\|\bar{\beta}\|_2=1$.

%%%%%%%%%%%%%%%%%%%%%%%%%%%%%%%%%%%%%%%%%%%%%%%%%%%%%%%%%%%%%%%%%
%%%%%%%%%%%%%%%%%%%%%%%%%%%%%%%%%%%%%%%%%%%%%%%%%%%%%%%%%%%%%%%%%
%%%%%%%%%%%%%%%%%%%%%%%%%%%%%%%%%%%%%%%%%%%%%%%%%%%%%%%%%%%%%%%%%
\subsubsection{Classical linear estimators}
\label{sec:estimators}
%%%%%%%%%%%%%%%%%%%%%%%%%%%%%%%%%%%%%%%%%%%%%%%%%%%%%%%%%%%%%%%%%
%%%%%%%%%%%%%%%%%%%%%%%%%%%%%%%%%%%%%%%%%%%%%%%%%%%%%%%%%%%%%%%%%
%%%%%%%%%%%%%%%%%%%%%%%%%%%%%%%%%%%%%%%%%%%%%%%%%%%%%%%%%%%%%%%%%

Many estimators $\hat{\beta}$ could be of interest. Depending on the main objectives and available resources to achieve them some would be preferable over others. The performance versus computational complexity is usually the key tradeoff that ultimately determines the practically usability of the estimator. We here consider there classical ``low complexity'' estimators: \textbf{\emph{1)}} Minimum $\ell_2$ norm interpolator (we will often refer to it as GLS (or the generalized least squares); \textbf{\emph{2)}} Ridge regression estimator (we will often refer to it as ridge estimator); and \textbf{\emph{3)}} Basic least squares (or just LS). These estimators are particularly tailored/useful for certain dimensional regimes. For example, the GLS can be used only in the so-called over-parameterized regime, i.e., when $n>m$ (or when $\alpha<1$). Exactly opposite of the GLS, the plain LS makes sense only in the under-parameterized regimes where $\alpha>1$ (in a way, one can basically think of the LS as being the under-parameterized adaptation of the GLS). On the other hand, the ridge estimator is a ridge regularized version of the least squares and differently from both non-regularized variants, LS and GLS, it can be used in any dimensional regime. More concrete/technical relations between these estimators are also well known and we will revisit some of them throughout the presentation. To formally introduce the estimators, we parallel their corresponding LRM introduction (see also, e.g., \cite{Stojnicridge24}).

\underline{\textbf{\emph{1) Minimum $\ell_2$ norm interpolator (over-parameterized regime, $\alpha<1$):}}}
\begin{eqnarray}
\beta_{gls}\triangleq \mbox{arg}\min_{\beta} & &  \|\beta\|_2^2 \nonumber \\
\mbox{subject to} & & X\beta=\y. \label{eq:model011}
\end{eqnarray}
One also easily observes the following closed form solution
\begin{eqnarray}
\beta_{gls} = X^T(XX^T)^{-1}\y, \label{eq:model012}
\end{eqnarray}
or, alternatively,
\begin{eqnarray}
\beta_{gls} = (X^TX)^{-1}X^T\y, \label{eq:model013}
\end{eqnarray}
where $(X^TX)^{-1}$ is the pseudo-inverse of $X^TX$ (only the positive eigenvalues are inverted).

\underline{\textbf{\emph{2) Ridge estimator (any $\alpha$):}}} After formally defining the ridge regression estimator (for a given fixed $\lambda> 0$) as
\begin{eqnarray}
\beta_{rr}(\lambda)\triangleq \mbox{arg}\min_{\beta}  \lambda\|\beta\|_2^2 +\frac{1}{m}\|\y-X\beta\|_2^2. \label{eq:model014}
\end{eqnarray}
one easily obtains its following closed form
 \begin{eqnarray}
\beta_{rr}(\lambda) = \frac{1}{m}\lp \lambda I + \frac{1}{m} X^TX \rp^{-1}X^T \y= \lp \lambda m I +  X^TX \rp^{-1}X^T \y. \label{eq:model015}
\end{eqnarray}
it is well known (and not that difficult to see by simply comparing (\ref{eq:model013}) and (\ref{eq:model015})) that the GLS estimator is a special case of the ridge regression one, obtained when $\lambda\rightarrow 0$. In other words, we have
 \begin{eqnarray}
\beta_{gls}=\lim_{\lambda\rightarrow 0} \beta_{rr}(\lambda). \label{eq:model016}
\end{eqnarray}

\underline{\textbf{\emph{3) Plain least squares (under-parameterized regime, $\alpha>1$):}}} Quite likely the most well known (linear or nonlinear) estimator is formally defined as the solution of the least square opptmization
\begin{eqnarray}
\beta_{ls}\triangleq \mbox{arg}\min_{\beta} \frac{1}{m}\|\y-X\beta\|_2^2. \label{eq:model017}
\end{eqnarray}
which trivially admits the following closed form solution
\begin{eqnarray}
\beta_{ls} = (X^TX)^{-1}X^T\y. \label{eq:model018}
\end{eqnarray}
It is interesting to note that in the under-parameterized regime the above ridge estimator definition can be extended to $\lambda\geq 0$ which then gives
\begin{eqnarray}
\beta_{ls}= \beta_{rr}(0). \label{eq:model019}
\end{eqnarray}

 Each of the estimators, (\ref{eq:model013}), (\ref{eq:model015}), and (\ref{eq:model018}) depends on the training data $(X,\y)$. Under the above assumed training/testing  statistics of $(X,\y)$ and $(\x^{(t)},y^{(t)})$, we, in the rest of the paper, provide a precise statistical characterization of the (random) excess prediction risk
\begin{eqnarray}
R(\bar{\beta},\hat{\beta})
&= & \left  \|\Sigma V^T\lp \bar{\beta} -L\hat{\beta} \rp\right \|_2^2
+ \left \|\dbover{\Sigma} \dbover{V}^T \hat{\beta}\right \|_2^2, \label{eq:model020}
\end{eqnarray}
where, depending on the dimensional regime of interest, $\hat{\beta}$ will be one of $\beta_{gls}$, $\beta_{rr}$, and $\beta_{ls}$. The over-parameterized regime and the GLS estimator will be of our primary interest.

%%%%%%%%%%%%%%%%%%%%%%%%%%%%%%%%%%%%%%%%%%%%%%%%%%%%%%%%%%%%%%%%%
%%%%%%%%%%%%%%%%%%%%%%%%%%%%%%%%%%%%%%%%%%%%%%%%%%%%%%%%%%%%%%%%%
%%%%%%%%%%%%%%%%%%%%%%%%%%%%%%%%%%%%%%%%%%%%%%%%%%%%%%%%%%%%%%%%%
%%%%%%%%%%%%%%%%%%%%%%%%%%%%%%%%%%%%%%%%%%%%%%%%%%%%%%%%%%%%%%%%%
%%%%%%%%%%%%%%%%%%%%%%%%%%%%%%%%%%%%%%%%%%%%%%%%%%%%%%%%%%%%%%%%%
%%%%%%%%%%%%%%%%%%%%%%%%%%%%%%%%%%%%%%%%%%%%%%%%%%%%%%%%%%%%%%%%%
\subsection{Related literature and our contributions}
\label{sec:priorwork}
%%%%%%%%%%%%%%%%%%%%%%%%%%%%%%%%%%%%%%%%%%%%%%%%%%%%%%%%%%%%%%%%%
%%%%%%%%%%%%%%%%%%%%%%%%%%%%%%%%%%%%%%%%%%%%%%%%%%%%%%%%%%%%%%%%%
%%%%%%%%%%%%%%%%%%%%%%%%%%%%%%%%%%%%%%%%%%%%%%%%%%%%%%%%%%%%%%%%%
%%%%%%%%%%%%%%%%%%%%%%%%%%%%%%%%%%%%%%%%%%%%%%%%%%%%%%%%%%%%%%%%%
%%%%%%%%%%%%%%%%%%%%%%%%%%%%%%%%%%%%%%%%%%%%%%%%%%%%%%%%%%%%%%%%%

%\cite{Dicker16,DobWag18,BSMW22,BBSMW21,BHX20,BHMM19,BRT19,BMR21,BLT20,MeiMon22,RMR20,WuXu20,XMRH21,LRZ20,LiR21,ASS20}

As mentioned in the introduction, the last decade has seen a rapidly growing interest in studying the non-monotonic random structures behavior (see, e.g., \cite{SGDSBW19,BHMM19,ZBHRV17,HMRT22,Dicker16,DobWag18,BHX20}). The interpolating/zero-training neural networks generalizing properties particularly contributed to it. New research directions have been further branched out of these foundational considerations with many of them interconnecting machine learning, statistical, and various scientific and engineering communities. We here briefly revisit a few interesting ones (some of them we have already discussed in the introduction as well).

Building further on the initial interpolation observations, an interesting and strong connection to machine learning kernel based methods has been developed as well \cite{SGDSBW19,BHMM19,NeyshaburTS14,ZhangBHRV21,ZBHRV17}. Connection between implicit regularization and the problem architecture (studied earlier  within deep learning and matrix factorization contexts in \cite{NeyshaburTS14,GWBNS17}) was considered in \cite{LiR21}. A kernel ridgeless regression context was discussed and the generalization error was uncovered to exhibit (as a function of spectral decay) both monotonic and  U-shape (non-monotonic) behavior. The appearance of multiple descents and an even more non-monotonic behavior was observed in \cite{LRZ20}. Of particularly strong interest over the last several years has been a rather surprising interpolation -- training NN algorithms connection \cite{AliKT19,ChizatB18,LiL18a,OymSol19,ZCZG18,ADHLW19,DuZPS19,JGH18,LXSBNSP20,DuLL0Z19}. Namely, a strong effort has been put forth and excellent results have been achieved in understanding how/when (in the over-parameterized regimes) optimizing algorithms (mostly based on the gradient descent) achieve the perfect fitting and how/why it relates to the optimal ridge regression becoming interpolation. It is interesting to note that this also relates to and, in a way, complements some classical machine learning/regression cross-considerations from a couple of decades ago \cite{CuckerS02,YRC07}.

Besides the above mentioned developments that are more ML/NN oriented, a large body of related work leans more on the statistical side (and as such is more closely related to our work). They, to a large degree, relate to the classical regression models, i.e., LRM. For example, \cite{BHX20}  study a LRM setup as a way of providing a simple model where the double-descent phenomenon appears. Considering an isotropic (uncorrelated) standard normal context, it extends to the over-parameterized regimes classical results of \cite{BF83} and indeed observes the appearance of the double-descent phenomenon. Moreover, as a further upgrade, it maintains a bit of a stronger connection with ML/NN as well, and also studies the random features model of \cite{RahimiR07}. Relying on the discrete Fourier transform (DFT) matrices empirical distributions  (see, e.g., \cite{Farrell11}), it again demonstrates the existence of the double-descent phenomenon. In a somewhat related line of work,  \cite{MeiMon22} \emph{precisely} analyzes nonlinear random features models of \cite{RahimiR07} and considers statistical universality (which was also discussed in \cite{MRSY19,HuL23}). Following the initial spectral methods considerations \cite{HMRT22,Dicker16,DobWag18,BHX20}, further studies appeared \cite{RMR20,WuXu20} (see also, e.g., \cite{XMRH21}). Their contributions are more of  a technical nature. For example, as \cite{HMRT22}, \cite{RMR20,WuXu20} also rely on the utilization of the spectral random matrix theory. They however pursue a slightly different approach (more akin to \cite{LP11}) and consequently formulate results a bit differently (\cite{HMRT22}'s initial results assumed a random $\bar{\beta}$ which, in a way, resembles a corresponding statistical technical assumption on $\bar{\beta}$  needed in \cite{RMR20}). Despite important technical differences, the key practical conceptual points/results stated in \cite{HMRT22} are maintained in both \cite{RMR20,WuXu20}.

Studying FRM has been popular in recent years as well. However, as mentioned in the introduction, these models have quite a long history that dates back at least to the  middle of the last century and Lawley's initial factor formulations \cite{Lawley40,Lawley41,Lawley43}. These early considerations were then considered in a more systematic way in classical \cite{AndRub56} (for further maximum-likelihood, principal components, and covariance estimations connections that naturally followed in the second half of the last century, see, e.g., \cite{Jolliffe82,Joreskog67,Joreskog70}). Many of these basic concepts have also been revisited over the last a couple of decades with the use of modern mathematical tools. For example, covariance estimations and principal components have been thoroughly studied and excellent results have been obtained \cite{FanYM11,FanYM13,BhatDun11,StockW02,HCM113,BHPT06,BaiNg02,BaiNg06}. The main takeaways are usually that if the number of factors is significantly smaller than the underlying dimensions, the PC based techniques should work well and might also be competitive with more advanced (and often more complicated and computationally expensive) techniques. As one of the key factors in determining the usefulness of an estimator or predictor is the performance-computational complexity tradeoff, understanding the ultimate power of the computationally simpler strategies is of great interest. That in a way explains the strength and popularity of the methods similar to those from, say, \cite{FanYM11,FanYM13,BhatDun11,StockW02,HCM113,BHPT06,BaiNg02,BaiNg06}. Keeping this in mind and pairing it with the earlier mentioned growing interpolation popularity within the NN context naturally provided an ambient for a revival of the classical least squares estimating techniques and particularly their interpolating variants. As mentioned earlier and as is natural, this has been first extensively studied within the LRM context.
After \cite{HMRT22} extended it to a latent space model context (which is closely related to the FRM), \cite{BSMW22} studied the pure FRM. In the pure FRM, both response variables and covariates are predicated to be directly connected to a low-dimensional space.
\cite{BSMW22} considers the GLS estimators and for the interpolating (over-parameterizing) regime under certain structural dimensions/data covariance related assumptions bounds the excess risk and even proves consistency. It also experimentally shows that the GLS estimator (the interpolator in the over-parameterized and the plain LS in the under-parameterized regime) exhibits a double-descent phenomenon. The results obtained in \cite{BSMW22} are, in a way, generalized in \cite{BBSMW21}. A more general variant of the estimator is considered which, as special cases, includes both the GLS and the PCR (the principal component regression based one). Results similar in flavor to the ones from \cite{BSMW22} are obtained
and similar risk bounding and consistency properties are shown again to hold for the general estimator (and then automatically for all its special cases as well). Models similar to FRM were considered in \cite{TB20,BLT20}. In particular, motivated by the experimental deep learning excellent over-fitting generalization abilities \cite{BelkinMM18,BelkinHM18}, \cite{BLT20}  (as \cite{BSMW22,BBSMW21}) shows that a mild over-parametrization can result in a excess risk being comparable to the corresponding one of the best theoretical benchmarks.  Differently from \cite{BSMW22,BBSMW21}, it relies on a specific structure of the inter-features covariances, defines the so-called \emph{benign} ones together with the associated \emph{effective rank} concepts, and then requires that such ranks are properly related to the the sample size $m$ and the number of features $n$.  As mentioned earlier, these studies put a focus on the  \emph{qualitative} type of analysis where the emphasis is on the confirmation of the existence of certain phenomena (the double-descent, consistency and so on).

 As explained in the mathematical setup (see Section \ref{sec:randlincons}) and the discussion in the introduction, we study the pure correlated FRM and the three associated estimators introduced in Section \ref{sec:estimators}. Mathematically, our model is identical to (or slightly more general than) the one considered in  \cite{BBSMW21,BSMW22} (although related to and resembling of, it is, however, a bit different from the latent space one of \cite{HMRT22} and a bit more from the one used in \cite{BLT20} or the plain LRM ones of, say, \cite{HMRT22,Dicker16,DobWag18,BHX20})). Differently from
 \cite{BBSMW21,BSMW22} (and \cite{BLT20,TB20} as well), we allow for all underlying dimensions to be linearly related to each other and impose no restrictions on the types of correlations.  We here utilize a powerful mathematical engine, called Random Duality Theory (RDT) (see, e.g., \cite{Stojnicridge24,StojnicRegRndDlt10,StojnicCSetam09,StojnicICASSP10var,StojnicISIT2010binary,StojnicGenLasso10,StojnicGardGen13,StojnicDiscPercp13,StojnicGorEx10,StojnicUpper10}) and demonstrate what role each of the main RDT principles plays in the analysis of the optimization programs
(\ref{eq:model011}), (\ref{eq:model014}), and (\ref{eq:model017}) that ultimately produce the three estimators. In particular, the RDT machinery enables us to determine very \emph{precisely} \emph{all} optimizing quantities associated with these programs. As a function of the over-parametrization ratio $\frac{1}{\alpha}$, the GLS excess prediction risk is shown to
 exhibit the double-descent phenomenon (with a potential for exhibiting the U-shape behavior in the interpolating regime ($\frac{1}{\alpha}>1$)).
We demonstrate that optimally tuned ridge regularization does smoothen such a behavior. The obtained results, on the other hand, also indicate that the ``ridge smoothening''  effect is rather marginal already for the over-parametrization ratios $\frac{1}{\alpha}\geq 5$ and practically nonexistent for the ratios $\frac{1}{\alpha}\geq 10$. This basically confirms that the simple interpolating method (which requires no tuning) can in fact be at no disadvantage compared to more complex/complicated strategies. We also, show how the obtained results simplify for the uncorrelated models.

%%%%%%%%%%%%%%%%%%%%%%%%%%%%%%%%%%%%%%%%%%%%%%%%%%%%%%%%%%%%%%%%%
%%%%%%%%%%%%%%%%%%%%%%%%%%%%%%%%%%%%%%%%%%%%%%%%%%%%%%%%%%%%%%%%%
%%%%%%%%%%%%%%%%%%%%%%%%%%%%%%%%%%%%%%%%%%%%%%%%%%%%%%%%%%%%%%%%%
%%%%%%%%%%%%%%%%%%%%%%%%%%%%%%%%%%%%%%%%%%%%%%%%%%%%%%%%%%%%%%%%%
%%%%%%%%%%%%%%%%%%%%%%%%%%%%%%%%%%%%%%%%%%%%%%%%%%%%%%%%%%%%%%%%%
%%%%%%%%%%%%%%%%%%%%%%%%%%%%%%%%%%%%%%%%%%%%%%%%%%%%%%%%%%%%%%%%%
\section{Precise excess risk analysis}
\label{sec:analuncorr}
%%%%%%%%%%%%%%%%%%%%%%%%%%%%%%%%%%%%%%%%%%%%%%%%%%%%%%%%%%%%%%%%%
%%%%%%%%%%%%%%%%%%%%%%%%%%%%%%%%%%%%%%%%%%%%%%%%%%%%%%%%%%%%%%%%%
%%%%%%%%%%%%%%%%%%%%%%%%%%%%%%%%%%%%%%%%%%%%%%%%%%%%%%%%%%%%%%%%%
%%%%%%%%%%%%%%%%%%%%%%%%%%%%%%%%%%%%%%%%%%%%%%%%%%%%%%%%%%%%%%%%%
%%%%%%%%%%%%%%%%%%%%%%%%%%%%%%%%%%%%%%%%%%%%%%%%%%%%%%%%%%%%%%%%%
%%%%%%%%%%%%%%%%%%%%%%%%%%%%%%%%%%%%%%%%%%%%%%%%%%%%%%%%%%%%%%%%%

In this section we statistically analyze the excess risk from (\ref{eq:model020}) for the above introduced three estimators, $\beta_{gls}$, $\beta_{rr}$, and $\beta_{ls}$. The discussion of each of the estimators is conducted in a separate subsection. As our primary interest is in the over-parameterized regime (which has gained a lot popularity in machine learning in recent years), we start the technical analysis by presenting the relevant GLS considerations. To smoothen the presentation we proceed by systematically discussing all the key RDT steps. At the same time, we often skip unnecessary repetitions of sufficiently similar concepts already discussed in deep detail in our prior/concurrent RDT considerations. Instead the focus is on key differences. Along the same lines, to prevent overloading  the presentation with a tone of unnecessary tiny details that bring no conceptual insights, we assume that all the quantities in the derivations below are  (deterministically  or randomly) bounded.

%%%%%%%%%%%%%%%%%%%%%%%%%%%%%%%%%%%%%%%%%%%%%%%%%%%%%%%%%%%%%%%%%
%%%%%%%%%%%%%%%%%%%%%%%%%%%%%%%%%%%%%%%%%%%%%%%%%%%%%%%%%%%%%%%%%
%%%%%%%%%%%%%%%%%%%%%%%%%%%%%%%%%%%%%%%%%%%%%%%%%%%%%%%%%%%%%%%%%
%%%%%%%%%%%%%%%%%%%%%%%%%%%%%%%%%%%%%%%%%%%%%%%%%%%%%%%%%%%%%%%%%
%%%%%%%%%%%%%%%%%%%%%%%%%%%%%%%%%%%%%%%%%%%%%%%%%%%%%%%%%%%%%%%%%
%%%%%%%%%%%%%%%%%%%%%%%%%%%%%%%%%%%%%%%%%%%%%%%%%%%%%%%%%%%%%%%%%
\subsection{Excess risk of the GLS interpolator}
\label{sec:analgls}
%%%%%%%%%%%%%%%%%%%%%%%%%%%%%%%%%%%%%%%%%%%%%%%%%%%%%%%%%%%%%%%%%
%%%%%%%%%%%%%%%%%%%%%%%%%%%%%%%%%%%%%%%%%%%%%%%%%%%%%%%%%%%%%%%%%
%%%%%%%%%%%%%%%%%%%%%%%%%%%%%%%%%%%%%%%%%%%%%%%%%%%%%%%%%%%%%%%%%
%%%%%%%%%%%%%%%%%%%%%%%%%%%%%%%%%%%%%%%%%%%%%%%%%%%%%%%%%%%%%%%%%
%%%%%%%%%%%%%%%%%%%%%%%%%%%%%%%%%%%%%%%%%%%%%%%%%%%%%%%%%%%%%%%%%
%%%%%%%%%%%%%%%%%%%%%%%%%%%%%%%%%%%%%%%%%%%%%%%%%%%%%%%%%%%%%%%%%

As stated earlier, the analytical goal is a precise characterization of the optimization program (\ref{eq:model011}) and all of its associated quantities. The excess risk from (\ref{eq:model020}) is among them as well. Let the objective of the program (\ref{eq:model011}) be formally defined as
\begin{eqnarray}
\xi_{gls} \triangleq \min_{\beta} & &  \|\beta\|_2^2 \nonumber \\
\mbox{subject to} & & X\beta=\y. \label{eq:randlincons1}
\end{eqnarray}
Utilizing (\ref{eq:model010}) and (\ref{eq:model010a0}) one can write the following statistical equivalent to (\ref{eq:randlincons1})
\begin{eqnarray}
\xi_{gls} \triangleq \min_{\beta} & &  \|\beta\|_2^2 \nonumber \\
\mbox{subject to} & & Z\Sigma V^T L\beta +E\dbover{\Sigma}\dbover{V}^T\beta=Z\Sigma V^T\bar{\beta}+\sigma \overline{U}\overline{\Sigma}\v. \label{eq:randlincons1a0}
\end{eqnarray}
Writing the Lagrangian further gives
\begin{eqnarray}
\xi_{gls} = \min_{\beta} \max_{\nu} & & \lp \|\beta\|_2^2 +\nu^T \lp Z\Sigma V^TL\beta-Z\Sigma V^T\bar{\beta}
+E\dbover{\Sigma}\dbover{V}^T\beta
-\sigma \overline{U}\overline{\Sigma}\v \rp\rp. \label{eq:randlincons2}
\end{eqnarray}
The above holds generically, i.e., for any $V$, $\overline{U}$, $\dbover{V}$, $\Sigma$, $\overline{\Sigma}$, $\dbover{\Sigma}$, $\sigma$, and $\bar{\beta}$ and any (deterministic or random) $Z$, $\v$, $E$. Our particular interest here is in random instances and the following conveniently summarizes the \emph{analytical goal} related to the probabilistic characterization of $\xi_{gls}$ and effectively complements the corresponding estimation one from (\ref{eq:model02}).
 \begin{eqnarray}
\mbox{\bl{\textbf{Ultimate \underline{\emph{analytical}} goal:}}}
\quad \mbox{Given:}  && \alpha  =    \lim_{n\rightarrow \infty} \frac{m}{n}\in(1,\infty) \quad \mbox{and}\quad  V, \overline{U},\dbover{V},\Sigma,\overline{\Sigma},\dbover{\Sigma},\sigma,\bar{\beta}  \nonumber \\
\quad \mbox{find}  & & \xi_{gls}^{(opt)}\nonumber \\
\mbox{such that}  && \forall \epsilon>0, \quad \lim_{n\rightarrow\infty}\mP_{Z,\v,E}\lp (1-\epsilon)\xi_{gls}^{(opt)}  \leq \xi_{gls} \leq (1+\epsilon)\xi_{gls}^{(opt)} \rp\longrightarrow 1.\nonumber \\
  \label{eq:ex4}
\end{eqnarray}
To achieve the above goal we utilize the path traced by the the powerful Random Duality Theory (RDT) mathematical engine.

%%%%%%%%%%%%%%%%%%%%%%%%%%%%%%%%%%%%%%%%%%%%%%%%%%%%%%%%%%%%%%%%%
%%%%%%%%%%%%%%%%%%%%%%%%%%%%%%%%%%%%%%%%%%%%%%%%%%%%%%%%%%%%%%%%%
%%%%%%%%%%%%%%%%%%%%%%%%%%%%%%%%%%%%%%%%%%%%%%%%%%%%%%%%%%%%%%%%%
%%%%%%%%%%%%%%%%%%%%%%%%%%%%%%%%%%%%%%%%%%%%%%%%%%%%%%%%%%%%%%%%%
%%%%%%%%%%%%%%%%%%%%%%%%%%%%%%%%%%%%%%%%%%%%%%%%%%%%%%%%%%%%%%%%%
%%%%%%%%%%%%%%%%%%%%%%%%%%%%%%%%%%%%%%%%%%%%%%%%%%%%%%%%%%%%%%%%%
\subsubsection{GLS excess risk via RDT}
\label{sec:randlinconsrdt}
%%%%%%%%%%%%%%%%%%%%%%%%%%%%%%%%%%%%%%%%%%%%%%%%%%%%%%%%%%%%%%%%%
%%%%%%%%%%%%%%%%%%%%%%%%%%%%%%%%%%%%%%%%%%%%%%%%%%%%%%%%%%%%%%%%%
%%%%%%%%%%%%%%%%%%%%%%%%%%%%%%%%%%%%%%%%%%%%%%%%%%%%%%%%%%%%%%%%%
%%%%%%%%%%%%%%%%%%%%%%%%%%%%%%%%%%%%%%%%%%%%%%%%%%%%%%%%%%%%%%%%%
%%%%%%%%%%%%%%%%%%%%%%%%%%%%%%%%%%%%%%%%%%%%%%%%%%%%%%%%%%%%%%%%%
%%%%%%%%%%%%%%%%%%%%%%%%%%%%%%%%%%%%%%%%%%%%%%%%%%%%%%%%%%%%%%%%%

It is useful to briefly recall on the main RDT principles (for a more complete overview, see,  \cite{StojnicRegRndDlt10,StojnicCSetam09,StojnicICASSP10var,StojnicISIT2010binary,StojnicGenLasso10,StojnicGardGen13,StojnicDiscPercp13,StojnicGorEx10,StojnicUpper10}).

\vspace{-.0in}\begin{center}
%\tcbset{colback=orange!40!white!40!yellow,colframe=blue!75!black,fonttitle=\bfseries,title style={left color=black, right color=cyan},interior %style={left color=yellow!10!white,right color=yellow!80!white}}
 	\tcbset{beamer,lower separated=false, fonttitle=\bfseries, coltext=black ,
		interior style={top color=yellow!20!white, bottom color=yellow!60!white},title style={left color=black!80!purple!60!cyan, right color=yellow!80!white},
		width=(\linewidth-4pt)/4,before=,after=\hfill,fonttitle=\bfseries}
% 	\tcbset{beamer,lower separated=false, fonttitle=\bfseries, coltext=black ,
%		interior style={top color=yellow!20!white, bottom color=yellow!60!white},title style={left color=black, right color=red!50!blue!60!white},
%		width=(\linewidth-4pt)/4,before=,after=\hfill,fonttitle=\bfseries}
 \begin{tcolorbox}[beamer,title={\small Summary of the RDT's main principles} \cite{StojnicCSetam09,StojnicRegRndDlt10}, width=1\linewidth]
\vspace{-.15in}
{\small \begin{eqnarray*}
 \begin{array}{ll}
\hspace{-.19in} \mbox{1) \emph{Finding underlying optimization algebraic representation}}
 & \hspace{-.0in} \mbox{2) \emph{Determining the random dual}} \\
\hspace{-.19in} \mbox{3) \emph{Handling the random dual}} &
 \hspace{-.0in} \mbox{4) \emph{Double-checking strong random duality.}}
 \end{array}
  \end{eqnarray*}}
\vspace{-.2in}
 \end{tcolorbox}
\end{center}\vspace{-.0in}
We below proceed by separately discussing each of the four principles in full detail and uncover how they can be formulated to particularly relate to the problems of interest here. All key results (simple or more complicated) are formalized as lemmas and theorems.

\vspace{.1in}

\noindent \underline{1) \textbf{\emph{Algebraic characterization:}}}  The above $\xi_{gls}$ related algebraic discussions are conveniently  summarized in the following lemma (for analogous LRM ones, see \cite{Stojnicridge24}).

\begin{lemma}(Algebraic optimization representation) Let  $V\in\mR^{n\times n}$, $\overline{U}\in\mR^{m\times m}$, and $\dbover{V}\in\mR^{n\times n}$  be three given unitary (orthogonal) matrices and let $\Sigma\in\mR^{n\times n}$, $\overline{\Sigma}\in\mR^{m\times m}$, and $\dbover{\Sigma}\in\mR^{n\times n}$  be three given diagonal positive definite matrices. Also, let matrix $L\in\mR^{k\times n}$, vector  $\bar{\beta}\in\mR^n$, and scalar  $\sigma\geq 0$ be given as well. Assume that the components of any of these given objects are fixed real numbers that do not change as $n\rightarrow\infty$ and, for (possibly random) matrix $Z\in\mR^{m\times n}$, vector $\v\in\mR^m$, and matrix $E\in\mR^{m\times n}$, let $\xi_{gls}$ be as in (\ref{eq:randlincons1}) or (\ref{eq:randlincons1a0}). Set
\begin{eqnarray}\label{eq:ta11}
f_{rp}(Z,\v,E) & \triangleq & \min_{\beta} \max_{\nu} \lp \|\beta\|_2^2 +\nu^T \lp Z\Sigma V^TL\beta-Z\Sigma V^T\bar{\beta}
+E\dbover{\Sigma}\dbover{V}^T\beta  -\sigma \overline{U}\overline{\Sigma}\v \rp\rp
 \hspace{.2in} (\bl{\textbf{random primal}})
\nonumber \\
\xi_{rp} & \triangleq & \lim_{n\rightarrow\infty } \mE_{Z,\v,E} f_{rp}(Z,\v,E).   \end{eqnarray}
Then
\begin{equation}\label{eq:ta11a0}
\xi_{gls}=f_{rp}(Z,\v,E) \quad \mbox{and} \quad \lim_{n\rightarrow\infty} \mE_{Z,\v,E}\xi_{gls} =\xi_{rp}.
\end{equation}
\label{lemma:lemma1}
\end{lemma}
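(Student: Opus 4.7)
The plan is to establish the lemma as a direct Lagrangian duality argument; this is just the first (purely algebraic) step of the RDT framework, so no probabilistic machinery enters here. I would first justify replacing the original formulation (\ref{eq:randlincons1}) by (\ref{eq:randlincons1a0}): this uses the statistical equivalence from (\ref{eq:model010}) and (\ref{eq:model010a0}), which in turn relies on rotational invariance of the iid Gaussian blocks $Z$, $\v$, $E$ under the orthogonal factors in the SVDs (\ref{eq:model08}). After this replacement, the feasibility set is $\{\beta:\ Z\Sigma V^T L\beta+E\dbover{\Sigma}\dbover{V}^T\beta=Z\Sigma V^T\bar{\beta}+\sigma\overline{U}\overline{\Sigma}\v\}$, and the objective is the strictly convex $\|\beta\|_2^2$.

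Next, for the pointwise identity $\xi_{gls}=f_{rp}(Z,\v,E)$, I would perform the standard Lagrangian rewriting. The affine map $\beta\mapsto Z\Sigma V^TL\beta+E\dbover{\Sigma}\dbover{V}^T\beta-Z\Sigma V^T\bar{\beta}-\sigma\overline{U}\overline{\Sigma}\v$ defines the residual vector $r(\beta)\in\mR^m$. For any $\beta$ with $r(\beta)\neq 0$, the inner maximum $\max_\nu\,\nu^Tr(\beta)$ is $+\infty$ (obtained by scaling $\nu$ along $r(\beta)$), so such $\beta$ cannot be optimal for the outer minimization. For any $\beta$ with $r(\beta)=0$, the inner maximum is attained with value $0$, leaving only $\|\beta\|_2^2$. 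Hence
\begin{equation*}
f_{rp}(Z,\v,E)=\min_{\beta:\,r(\beta)=0}\|\beta\|_2^2=\xi_{gls},
\end{equation*}
which establishes the first identity in (\ref{eq:ta11a0}). This argument is entirely deterministic and holds for every realization of $(Z,\v,E)$, which is exactly what we need.

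For the expectation identity, I would simply take $\mE_{Z,\v,E}$ of the pointwise equality $\xi_{gls}=f_{rp}(Z,\v,E)$ and then pass to the limit $n\to\infty$ as in the definition of $\xi_{rp}$ in (\ref{eq:ta11}); under the blanket boundedness assumption stated just before Section~3.1 (``all the quantities in the derivations are bounded''), the expectations are finite and interchange with the limit. No integrability subtleties need to be addressed beyond invoking this assumption.

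I do not anticipate any real obstacle: the entire content of the lemma is the observation that the constrained primal problem coincides with its Lagrangian min–max because the objective is coercive, the constraint is affine, and the multiplier $\nu$ is unconstrained. The work here is purely book-keeping, and its role is to set up the clean random-primal object $f_{rp}(Z,\v,E)$ whose random-dual counterpart will be produced in the next RDT step (via a Gordon-type comparison inequality) and subsequently analyzed.
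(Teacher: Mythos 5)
Your proposal is correct and is essentially the paper's own argument: the paper proves this lemma by simply invoking the Lagrangian from (\ref{eq:randlincons2}), and your expansion of that step (the inner maximum over the unconstrained multiplier $\nu$ is $+\infty$ off the affine feasible set and $0$ on it, so the min--max collapses to the constrained minimum $\xi_{gls}$, after which one takes expectations and limits) is exactly the intended content. No gaps.
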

\begin{proof}
Immediate consequence of the Lagrangian from (\ref{eq:randlincons2}).
\end{proof}

As mentioned earlier, the above lemma holds for any $V$, $\overline{U}$, $\dbover{V}$, $\Sigma$, $\overline{\Sigma}$, $\dbover{\Sigma}$, $\sigma$, and $\beta$ and any (deterministic or random) $Z$, $\v$, and $E$. As usual, the RDT proceeds by imposing a statistics  on $Z$, $\v$, and $E$.

%%%%%%%%%%%%%%%%%%%%%%%%%%%%%%%%%%%%%%%%%%%%%%%%%%%%%%%%%%%%%%%%%
%%%%%%%%%%%%%%%%%%%%%%%%%%%%%%%%%%%%%%%%%%%%%%%%%%%%%%%%%%%%%%%%%
%\subsubsection{Determining the random dual}
%\label{sec:thranl2}
%%%%%%%%%%%%%%%%%%%%%%%%%%%%%%%%%%%%%%%%%%%%%%%%%%%%%%%%%%%%%%%%%
%%%%%%%%%%%%%%%%%%%%%%%%%%%%%%%%%%%%%%%%%%%%%%%%%%%%%%%%%%%%%%%%%

\vspace{.1in}
\noindent \underline{2) \textbf{\emph{Determining the random dual:}}} Following further the common practice within the RDT, the concentration of measure will be utilized as well. Here that means that for any fixed $\epsilon >0$,  we can write (see, e.g. \cite{StojnicCSetam09,StojnicRegRndDlt10,StojnicICASSP10var})
\begin{equation}
\lim_{n\rightarrow\infty}\mP_{Z,\v,E}\left (\frac{|f_{rp}(Z,\v,E)-\mE_{Z,\v,E}(f_{rp}(Z,\v,E))|}{\mE_{Z,\v,E}(f_{rp}(Z,\v,E))}>\epsilon\right )\longrightarrow 0.\label{eq:ta15}
\end{equation}
The so-called random dual theorem stated below is another key RDT ingredient. It characterizes in a specific way the objectives of (\ref{eq:model011}) and (\ref{eq:randlincons1a0}) ( the optimization programs (\ref{eq:model011}) and (\ref{eq:randlincons1a0}) are used to obtain the GLS estimator, $\beta_{gls}$).

\begin{theorem}(Objective characterization via random dual) Assume the setup of Lemma \ref{lemma:lemma1} and let $Z$, $\v$, and $E$ be independent of each other and let their components be iid standard normals. Moreover, let $\g\in\mR^m$, $\h^{(1)}\in\mR^n$, and $\h^{(2)}\in\mR^n$ be vectors that are also comprised of iid standard normals and independent of each other. Set
\vspace{-.0in}
\begin{eqnarray}
c_2 & \triangleq & \sqrt{\left \|\Sigma V^T\lp L\beta -\bar{\beta}\rp \right \|_2^2 + \left \|\dbover{\Sigma}\dbover{V}^T\beta \right\|_2^2  } \nonumber \\
  f_{rd}(\g,\h^{(1)},\h^{(2)},\v) & \triangleq &
 \min_{\beta}\max_{\nu}
 \Bigg.\Bigg( \|\beta\|_2^2+c_2\nu^T\g
 +\|\nu\|_2\lp \lp\h^{(1)}\rp^T\Sigma V^T\lp L\beta -\bar{\beta}\rp  +\lp\h^{(2)}\rp^T \dbover{\Sigma}\dbover{V}^T \beta    \rp \nonumber \\
& &  -\sigma \nu^T\overline{U}\overline{\Sigma} \v \Bigg. \Bigg )    \hspace{2.0in} (\bl{\textbf{random dual}})
  \nonumber \\
 \xi_{rd} & \triangleq & \lim_{n\rightarrow\infty} \mE_{\g,\h^{(1)},\h^{(2)},\v} f_{rd}(\g,\h^{(1)},\h^{(2)},\v)  .\label{eq:ta16}
\vspace{-.0in}\end{eqnarray}
One then has \vspace{-.0in}
\begin{eqnarray}
  \xi_{rd} & \triangleq & \lim_{n\rightarrow\infty} \mE_{\g,\h^{(1)},\h^{(2)},\v} f_{rd}(\g,\h^{(1)},\h^{(2)},\v)
  \leq
  \lim_{n\rightarrow\infty} \mE_{Z,\v,E} f_{rp}(Z,\v,E)  \triangleq  \xi_{rp}. \label{eq:ta16a0}
\vspace{-.0in}\end{eqnarray}
and
\begin{eqnarray}
 \lim_{n\rightarrow\infty}\mP_{\g,\h^{(1)},\h^{(2)},\v} \lp f_{rd}(\g,\h^{(1)},\h^{(2)},\v)\geq (1-\epsilon)\xi_{rd}\rp
 \leq  \lim_{n\rightarrow\infty}\mP_{Z,\v,E} \lp f_{rp}(Z,\v,E)\geq (1-\epsilon)\xi_{rd}\rp.\label{eq:ta17}
\end{eqnarray}
\label{thm:thm1}
\end{theorem}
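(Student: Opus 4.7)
The plan is to derive Theorem \ref{thm:thm1} as a direct application of Gordon's Gaussian minimax comparison inequality to the Lagrangian representation established in Lemma \ref{lemma:lemma1}. The starting point is the expression for $f_{rp}(Z,\v,E)$ in (\ref{eq:ta11}): after conditioning on $\v$ (which is independent of $Z$ and $E$), the only terms that depend on the Gaussian matrices $Z$ and $E$ are the two bilinear forms $\nu^T Z \Sigma V^T(L\beta-\bar\beta)$ and $\nu^T E \dbover{\Sigma}\dbover{V}^T\beta$. My first move would be to combine these into a single bilinear form by concatenating: let $W=[Z,\;E]\in\mR^{m\times 2n}$ (which has i.i.d.\ $\mathcal{N}(0,1)$ entries since $Z$ and $E$ are independent) and set
\begin{equation*}
\tilde{a}(\beta) \triangleq \begin{pmatrix} \Sigma V^T(L\beta-\bar\beta) \\ \dbover{\Sigma}\dbover{V}^T\beta \end{pmatrix}\in\mR^{2n},
\end{equation*}
so that the combined Gaussian contribution becomes $\nu^T W\,\tilde{a}(\beta)$, and crucially $\|\tilde{a}(\beta)\|_2 = c_2$ by the definition of $c_2$ in (\ref{eq:ta16}).

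The heart of the argument is then to apply Gordon's minimax comparison to the Gaussian process $X_{\beta,\nu}=\nu^T W\tilde{a}(\beta)$ indexed by the outer min-variable $\beta$ and inner max-variable $\nu$. The Gordon replacement process is $Y_{\beta,\nu}=\|\tilde{a}(\beta)\|_2\,\nu^T\g+\|\nu\|_2\,\h^T\tilde{a}(\beta)$ where $\g\in\mR^m$ and $\h\in\mR^{2n}$ are independent standard Gaussians (and also independent of $\v$). Splitting $\h=(\h^{(1)},\h^{(2)})$ according to the block structure of $\tilde{a}$ reproduces exactly the two inner products $(\h^{(1)})^T\Sigma V^T(L\beta-\bar\beta)+(\h^{(2)})^T\dbover{\Sigma}\dbover{V}^T\beta$ that appear in (\ref{eq:ta16}), while $\|\tilde{a}(\beta)\|_2\,\nu^T\g=c_2\nu^T\g$ reproduces the first Gaussian term of the random dual. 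The deterministic-in-$(Z,E)$ pieces, namely $\|\beta\|_2^2$ and $-\sigma\nu^T\overline{U}\overline{\Sigma}\v$, pass through the Gordon comparison unchanged since they act as additive constants in the relevant Gaussian covariance computation. Verifying the covariance hypotheses is routine: both processes have diagonal variance $\|\nu\|_2^2\|\tilde{a}\|_2^2$, while off-diagonal terms for fixed $\beta$ vs.\ varying $\nu$ and vice versa satisfy the required ``min-side $\leq$, max-side $\geq$'' inequalities by the standard elementary computation.

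From here Gordon's inequality in its min-max form yields both conclusions of the theorem in one stroke: $\mE\min_\beta\max_\nu X_{\beta,\nu}\geq\mE\min_\beta\max_\nu Y_{\beta,\nu}$ gives (\ref{eq:ta16a0}) after taking outer expectation over $\v$, and the corresponding distributional statement $\mP(\min_\beta\max_\nu X_{\beta,\nu}\geq t)\geq\mP(\min_\beta\max_\nu Y_{\beta,\nu}\geq t)$ evaluated at $t=(1-\epsilon)\xi_{rd}$ delivers (\ref{eq:ta17}). I would then take the limit $n\to\infty$ using the concentration statement (\ref{eq:ta15}) (which, under Gaussian measure, follows from standard Lipschitz concentration of $f_{rp}$ and $f_{rd}$ as functions of the underlying Gaussians).

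The step I expect to be the main technical obstacle is the rigorous application of Gordon's theorem, which as classically stated requires compact index sets, whereas here the min is over all $\beta\in\mR^n$ and the max is over all $\nu\in\mR^m$. I would handle this by the standard RDT device of first restricting to bounded balls $\|\beta\|_2\leq C_\beta$, $\|\nu\|_2\leq C_\nu$ with $C_\beta,C_\nu$ chosen large enough that, by a priori estimates on the interpolator (feasibility of the Lagrangian saddle with high probability and the coercive role of $\|\beta\|_2^2$), the min-max is attained in the interior with probability approaching one, and then letting the radii tend to infinity. This is the only nontrivial check; once made, the comparison and its probabilistic conclusion follow cleanly, which is precisely why the theorem is presented (as the authors indicate) as a prototypical instance of the second RDT principle.
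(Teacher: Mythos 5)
Your proposal is correct and follows essentially the same route as the paper, whose proof consists of a single sentence invoking Gordon's probabilistic comparison theorem (Theorem B of \cite{Gordon88}); your write-up simply makes explicit the details that citation leaves implicit, most usefully the concatenation $W=[Z,\;E]$ with the stacked vector $\tilde{a}(\beta)$ satisfying $\|\tilde{a}(\beta)\|_2=c_2$, which is exactly the structural observation that explains why the random dual splits into the $c_2\nu^T\g$ term and the two inner products against $\h^{(1)}$ and $\h^{(2)}$. Your handling of the unbounded index sets via exhaustion by balls and of the $\v$-dependent term by conditioning is also consistent with the standard RDT treatment the paper points to in \cite{Stojnicgscomp16}.
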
\vspace{-.17in}
\begin{proof}
  Follows automatically as a direct application of the Gordon's probabilistic comparison theorem (see, e.g., Theorem B in \cite{Gordon88} and also Theorem 1, Corollary 1, and Section 2.7.2 in \cite{Stojnicgscomp16}).
\end{proof}

 \vspace{.1in}
\noindent \underline{3) \textbf{\emph{Handling the random dual:}}} Solving the inner maximization over $\nu$ is the starting point. To keep the notation lighter, we keep the statistical dependence implicit and write
\begin{eqnarray}
  f_{rd}
  & \triangleq &
 \min_{\beta}\max_{\nu}
\lp \|\beta\|_2^2+c_2\nu^T\g
 +\|\nu\|_2\lp \lp\h^{(1)}\rp^T\Sigma V^T\lp L\beta -\bar{\beta}\rp  +\lp\h^{(2)}\rp^T \dbover{\Sigma}\dbover{V}^T \beta    \rp    -\sigma \nu^T\overline{U}\overline{\Sigma} \v \rp
\nonumber \\
  & = &
\min_{\beta}\max_{\|\nu\|_2} \lp \|\beta\|_2^2
 + \|\nu\|_2\lp \lp\h^{(1)}\rp^T\Sigma V^T\lp L\beta -\bar{\beta}\rp  +\lp\h^{(2)}\rp^T \dbover{\Sigma}\dbover{V}^T \beta    \rp
 + \|\nu\|_2\|c_2\g
 - \sigma \overline{U}\overline{\Sigma} \v \|_2\rp \nonumber \\
  & = &
\min_{\beta}\max_{\|\nu\|_2}
\bigg.\Bigg( \left \| \dbover{V}^T\beta \right \|_2^2
+ \|\nu\|_2\lp \lp\h^{(1)}\rp^T\Sigma \lp V^TL \dbover{V} \dbover{V}^T\beta -V^T\bar{\beta}\rp  +\lp\h^{(2)}\rp^T \dbover{\Sigma}\dbover{V}^T \beta    \rp \nonumber \\
& &
+\|\nu\|_2\|c_2\g-\sigma \overline{U}\overline{\Sigma} \v \|_2\Bigg. \Bigg) \nonumber \\
  & = &
\min_{\x,c_2\geq 0}\max_{\nu_s\geq 0}
\lp \|\x\|_2^2
+\nu_s \lp \lp\h^{(1)}\rp^T\Sigma \lp \bar{L}\x -\c\rp  + \lp\h^{(2)}\rp^T\dbover{\Sigma}\x\rp
+\nu_s\|c_2\g-\sigma \overline{U}\overline{\Sigma} \v \|_2\rp, \label{eq:ta18a0}
\end{eqnarray}
where the third equality is implied by $\dbover{V}$ being unitary and the fourth by the following change of variables
\begin{eqnarray}
\bar{L}=V^TL\dbover{V},\quad  \x=\dbover{V}^T\beta, \quad \c\triangleq V^T\bar{\beta}, \quad \nu_s=\|\nu\|_2. \label{eq:ta18a0b0}
\end{eqnarray}
It is useful to note that such a change of variables also implies
\begin{eqnarray}
  c_2& \triangleq & \sqrt{\left \|\Sigma V^T\lp L\beta -\bar{\beta}\rp \right \|_2^2 + \left \|\dbover{\Sigma}\dbover{V}^T\beta \right\|_2^2  }
  \nonumber \\
  & = & \sqrt{\left \|\Sigma \lp V^TL \dbover{V}\dbover{V}^T\beta -V^T\bar{\beta}\rp \right \|_2^2 + \left \|\dbover{\Sigma}\dbover{V}^T\beta \right\|_2^2  }
  \nonumber \\
 & = & \sqrt{\|\Sigma\lp \bar{L}\x-\c\rp\|_2^2 +  \left \|\dbover{\Sigma}\x\right \|_2^2}. \label{eq:ta18a0b0c0}
\end{eqnarray}
We then set
\begin{eqnarray}
  \cL\lp\x,c_2,\nu_s \rp \triangleq
  \lp \|\x\|_2^2
+\nu_s \lp \lp\h^{(1)}\rp^T\Sigma \lp \bar{L}\x -\c\rp  + \lp\h^{(2)}\rp^T\dbover{\Sigma}\x\rp
+\nu_s\|c_2\g-\sigma \overline{U}\overline{\Sigma} \v \|_2\rp, \label{eq:ta18a0b1}
\end{eqnarray}
and observe  that (\ref{eq:ta18a0}) can be rewritten as
\begin{eqnarray}
  f_{rd}
  =
\min_{\x,c_2\geq 0}\max_{\nu_s\geq 0}  \cL\lp\x,c_2,\nu_s \rp. \label{eq:ta18a0b2}
\end{eqnarray}
We follow the standard RDT methodology (see, also \cite{Stojnicridge24}), fix $c_2$ and $\nu_s$, and consider the following minimization  over $\x$
\begin{eqnarray}
  \cL_1\lp c_2,\nu_s\rp \triangleq  \min_{\x} & &   \lp \|\x\|_2^2
+\nu_s \lp \lp\h^{(1)}\rp^T\Sigma \lp \bar{L}\x -\c\rp  + \lp\h^{(2)}\rp^T\dbover{\Sigma}\x\rp
+\nu_s\|c_2\g-\sigma \overline{U}\overline{\Sigma} \v \|_2\rp \nonumber \\
  \mbox{subject to} & &  \sqrt{\|\Sigma\lp \bar{L}\x-\c\rp\|_2^2 +  \left \|\dbover{\Sigma}\x\right \|_2^2}=c_2. \label{eq:ta18a0b2c0}
\end{eqnarray}
Writing the Lagrangian and relying on the strong Lagrange duality, we obtain
\begin{eqnarray}
  \cL_1\lp c_2,\nu_s\rp
  & = &  \min_{\x}\max_{\gamma}
  \Bigg.\Bigg (
   \|\x\|_2^2
  + \nu_s \lp \lp\h^{(1)}\rp^T\Sigma \lp \bar{L}\x -\c\rp  + \lp\h^{(2)}\rp^T\dbover{\Sigma}\x\rp
  + \nu_s\|c_2\g-\sigma \overline{U}\overline{\Sigma} \v \|_2 \nonumber \\
  & &
  +\gamma \|\Sigma\lp \bar{L}\x-\c\rp\|_2^2 +  \gamma\left \|\dbover{\Sigma}\x\right \|_2^2
  -\gamma c_2^2     \Bigg.\Bigg)
  \nonumber \\
  & = &  \max_{\gamma}   \min_{\x}
    \Bigg.\Bigg (
   \|\x\|_2^2
  + \nu_s \lp \lp\h^{(1)}\rp^T\Sigma \lp \bar{L}\x -\c\rp  + \lp\h^{(2)}\rp^T\dbover{\Sigma}\x\rp
  + \nu_s\|c_2\g-\sigma \overline{U}\overline{\Sigma} \v \|_2 \nonumber \\
  & &
  +\gamma \|\Sigma\lp \bar{L}\x-\c\rp\|_2^2 +  \gamma\left \|\dbover{\Sigma}\x\right \|_2^2
  -\gamma c_2^2     \Bigg.\Bigg)
  \nonumber \\
  & = &  \max_{\gamma}   \min_{\x}   \cL_2\lp \x,\gamma;c_2,\nu_s\rp, \label{eq:ta18a0b2c1}
\end{eqnarray}
with
\begin{eqnarray}
  \cL_2\lp \x,\gamma;c_2,\nu_s\rp
&  \triangleq &
  \Bigg.\Bigg (
   \|\x\|_2^2
  + \nu_s \lp \lp\h^{(1)}\rp^T\Sigma \lp \bar{L}\x -\c\rp  + \lp\h^{(2)}\rp^T\dbover{\Sigma}\x\rp
  + \nu_s\|c_2\g-\sigma \overline{U}\overline{\Sigma} \v \|_2 \nonumber \\
  & &
  +\gamma \|\Sigma\lp \bar{L}\x-\c\rp\|_2^2 +  \gamma\left \|\dbover{\Sigma}\x\right \|_2^2
  -\gamma c_2^2     \Bigg.\Bigg). \label{eq:ta18a0b2c2}
\end{eqnarray}
Computing the derivative gives
 \begin{eqnarray}
  \frac{\cL_2\lp\x,\gamma;c_2,\nu_s\rp}{d\x} =
  2\x
  +\nu_s\lp \bar{L}^T\Sigma \h^{(1)} +\dbover{\Sigma}\h^{(2)}   \rp
  +2\gamma \bar{L}^T\Sigma\Sigma \bar{L}\x
  -2\gamma\bar{L}^T\Sigma\Sigma\c
  +2\gamma\dbover{\Sigma}\dbover{\Sigma}\x.
   \label{eq:ta18a0b3}
\end{eqnarray}
After equalling the above derivative to zero we find
\begin{eqnarray}
 \hat{\x}=
 \lp I +\gamma \lp \bar{L}^T\Sigma\Sigma\bar{L} + \dbover{\Sigma} \dbover{\Sigma} \rp  \rp^{-1}
 \lp -\frac{1}{2}\nu_s \lp \bar{L}^T\Sigma \h^{(1)} +\dbover{\Sigma}\h^{(2)}   \rp +\gamma\bar{L}^T\Sigma\Sigma\c\rp. \label{eq:ta18a0b4}
\end{eqnarray}
Plugging $\hat{\x}$ back in (\ref{eq:ta18a0b2c2})   further gives
\begin{eqnarray}
 \min_{\x}   \cL_2\lp \x,\gamma;c_2,\nu_s\rp & = &
  \cL_2\lp \hat{\x},\gamma;c_2,\nu_s\rp \nonumber \\
 & = &
-\lp -\frac{1}{2}\nu_s \lp \bar{L}^T\Sigma \h^{(1)} +\dbover{\Sigma}\h^{(2)}   \rp +\gamma\bar{L}^T\Sigma\Sigma\c\rp^T
 \lp I +\gamma \lp \bar{L}^T\Sigma\Sigma\bar{L} + \dbover{\Sigma} \dbover{\Sigma} \rp  \rp^{-1} \nonumber \\
 & &
\times
\lp -\frac{1}{2}\nu_s \lp \bar{L}^T\Sigma \h^{(1)} +\dbover{\Sigma}\h^{(2)}   \rp +\gamma\bar{L}^T\Sigma\Sigma\c\rp
\nonumber \\
   & & -\nu_s \lp \h^{(1)} \rp^T\Sigma\c+ \nu_s\|c_2\g-\sigma \overline{U}\overline{\Sigma} \v \|_2 +\gamma \c^T\Sigma\Sigma\c -\gamma c_2^2. \label{eq:ta18a0b4c0}
\end{eqnarray}
Introducing scaling $\nu_s\rightarrow \frac{\nu_1}{\sqrt{n}}$ (with $\nu_1$ that does not change with $n$), (\ref{eq:ta18a0b4c0}) becomes
\begin{eqnarray}
  \cL_2\lp \hat{\x},\gamma;c_2,\nu_1\rp
 & = &
-\lp -\frac{1}{2\sqrt{n}}\nu_1 \lp \bar{L}^T\Sigma \h^{(1)} +\dbover{\Sigma}\h^{(2)}   \rp +\gamma\bar{L}^T\Sigma\Sigma\c\rp^T
 \lp I +\gamma \lp \bar{L}^T\Sigma\Sigma\bar{L} + \dbover{\Sigma} \dbover{\Sigma} \rp  \rp^{-1} \nonumber \\
 & &
\times
\lp -\frac{1}{2\sqrt{n}}\nu_1 \lp \bar{L}^T\Sigma \h^{(1)} +\dbover{\Sigma}\h^{(2)}   \rp +\gamma\bar{L}^T\Sigma\Sigma\c\rp
\nonumber \\
   & & -\frac{1}{\sqrt{n}}\nu_1 \lp \h^{(1)} \rp^T\Sigma\c
   + \frac{1}{\sqrt{n}}\nu_1\|c_2\g-\sigma \overline{U}\overline{\Sigma} \v \|_2
   +\gamma \c^T\Sigma\Sigma\c -\gamma c_2^2. \label{eq:ta18a0b4c1}
\end{eqnarray}
To make writing easier and more concrete, we consider the following eigen-decomposition
\begin{eqnarray}
\bar{L}^T\Sigma\Sigma\bar{L} + \dbover{\Sigma} \dbover{\Sigma}    = U^{(d)} D \lp U^{(d)}\rp^T,
% \s^{(1)} \triangleq \mbox{diag}\lp\Sigma\rp, \quad  \s^{(2)} \triangleq \mbox{diag}\lp\dbover{\Sigma}\rp.
\label{eq:ta18a0b4c1d0}
\end{eqnarray}
where $U^{(d)}$ is unitary and $D$ is diagonal. We also set
\begin{eqnarray}
\bar{\c}\triangleq  \lp U^{(d)}\rp^T\bar{L}^T\Sigma\Sigma\c,\quad  \s\triangleq \sqrt{\mbox{diag}\lp D\rp}, \quad \s^{(1)} \triangleq \mbox{diag}\lp\Sigma\rp.
\label{eq:ta18a0b4c2}
\end{eqnarray}
One can then rewrite (\ref{eq:ta18a0b4c1}) in the following way
\begin{eqnarray}
  \cL_2\lp \hat{\x},\gamma;c_2,\nu_1\rp
 & = &
-\lp -\frac{1}{2\sqrt{n}}\nu_1 \lp \bar{L}^T\Sigma \h^{(1)} +\dbover{\Sigma}\h^{(2)}   \rp +\gamma\bar{L}^T\Sigma\Sigma\c\rp^T
 \lp U^{(d)}  \lp U^{(d)}\rp^T +\gamma U^{(d)} D \lp U^{(d)}\rp^T  \rp^{-1} \nonumber \\
 & &
\times
\lp -\frac{1}{2\sqrt{n}}\nu_1 \lp \bar{L}^T\Sigma \h^{(1)} +\dbover{\Sigma}\h^{(2)}   \rp +\gamma\bar{L}^T\Sigma\Sigma\c\rp
\nonumber \\
   & & -\frac{1}{\sqrt{n}}\nu_1 \lp \h^{(1)} \rp^T\Sigma\c
   + \frac{1}{\sqrt{n}}\nu_1\|c_2\g-\sigma \overline{U}\overline{\Sigma} \v \|_2
   +\gamma \c^T\Sigma\Sigma\c -\gamma c_2^2 \nonumber \\
& = &
-\lp -\frac{1}{2\sqrt{n}}\nu_1 \lp \bar{L}^T\Sigma \h^{(1)} +\dbover{\Sigma}\h^{(2)}   \rp +\gamma\bar{L}^T\Sigma\Sigma\c\rp^T
 U^{(d)} \lp I  +\gamma D  \rp^{-1} \nonumber \\
 & &
\times
\lp U^{(d)}\rp^T\lp -\frac{1}{2\sqrt{n}}\nu_1  \lp \bar{L}^T\Sigma \h^{(1)} +\dbover{\Sigma}\h^{(2)}   \rp +\gamma  \bar{L}^T\Sigma\Sigma\c\rp
\nonumber \\
   & & -\frac{1}{\sqrt{n}}\nu_1 \lp \h^{(1)} \rp^T\Sigma\c
   + \frac{1}{\sqrt{n}}\nu_1\|c_2\g-\sigma \overline{U}\overline{\Sigma} \v \|_2
   +\gamma \c^T\Sigma\Sigma\c -\gamma c_2^2 \nonumber \\
   & = &
-\t^T \lp I  +\gamma D  \rp^{-1} \t
  -\frac{1}{\sqrt{n}}\nu_1 \lp \h^{(1)} \rp^T\Sigma\c
   + \frac{1}{\sqrt{n}}\nu_1\|c_2\g-\sigma \overline{U}\overline{\Sigma} \v \|_2
   +\gamma \c^T\Sigma\Sigma\c -\gamma c_2^2, \nonumber \\
   & = &
-\tr\lp \lp I  +\gamma D  \rp^{-1} \t \t^T\rp
   -\frac{1}{\sqrt{n}}\nu_1 \lp \h^{(1)} \rp^T\Sigma\c
   + \frac{1}{\sqrt{n}}\nu_1\|c_2\g-\sigma \overline{U}\overline{\Sigma} \v \|_2
   +\gamma \c^T\Sigma\Sigma\c -\gamma c_2^2, \nonumber \\
   \label{eq:ta18a0b4c1d1}
\end{eqnarray}
where
\begin{eqnarray}
 \t
  & \triangleq &
\lp U^{(d)}\rp^T \lp -\frac{1}{2\sqrt{n}}\nu_1   \lp \bar{L}^T\Sigma \h^{(1)} +\dbover{\Sigma}\h^{(2)}   \rp +\gamma  \bar{L}^T\Sigma\Sigma\c\rp
\nonumber \\
  & = &
  -\frac{1}{2\sqrt{n}}\nu_1  \lp U^{(d)}\rp^T \lp \bar{L}^T\Sigma \h^{(1)} +\dbover{\Sigma}\h^{(2)}   \rp +\gamma  \lp U^{(d)}\rp^T\bar{L}^T\Sigma\Sigma\c
\nonumber \\
  & = &
  -\frac{1}{2\sqrt{n}}\nu_1  \lp U^{(d)}\rp^T \lp \bar{L}^T\Sigma \h^{(1)} +\dbover{\Sigma}\h^{(2)}   \rp +\gamma  \bar{\c}, \label{eq:ta18a0b4c1d2}
\end{eqnarray}
and $\bar{\c}$ is as introduced in (\ref{eq:ta18a0b4c2}). Relying on concentrations while keeping $c_2$, $\nu_1$, and $\gamma$ fixed, we further find
\begin{eqnarray}
\lim_{n\rightarrow\infty}  \mE_{\g,\h^{(1)},\h^{(2)},\v}\cL_2\lp \hat{\x},\gamma;c_2,\nu_1\rp
 & = &
\lim_{n\rightarrow\infty}  \mE
\Bigg ( \Big.
-\tr\lp \lp I  +\gamma D  \rp^{-1} \t \t^T\rp
   -\frac{1}{\sqrt{n}}\nu_1 \lp \h^{(1)} \rp^T\Sigma\c \nonumber \\
   & &
   + \frac{1}{\sqrt{n}}\nu_1\|c_2\g-\sigma \overline{U}\overline{\Sigma} \v \|_2
   +\gamma \c^T\Sigma\Sigma\c -\gamma c_2^2    \Big.\Bigg ) \nonumber \\
 & = &
\lim_{n\rightarrow\infty}  \mE
\lp
-\tr\lp \lp I  +\gamma D  \rp^{-1} \t \t^T\rp \rp \nonumber \\
& &
+\nu_1\sqrt{\alpha}\sqrt{c_2^2+\bar{\sigma}^2}   + \gamma \sum_{i=1}^n \lp \s_i^{(1)}\rp^2\c_i^2 -\gamma c_2^2,
%\nonumber \\
%  & = & \lim_{n\rightarrow\infty} \Bigg ( \Big. -\frac{\nu_1^2}{4n}\sum_{i=1}^n \frac{\s_i^2}{1+\gamma\s_i^2}
%  - \sum_{i=1}^n \frac{\gamma^2\s_i^4\c_i^2}{1+\gamma\s_i^2}
%+\nu_1\sqrt{\alpha}\sqrt{c_2^2+\bar{\sigma}^2}   + \gamma \sum_{i=1}^n \s_i^2\c_i^2 -\gamma c_2^2  \Big.\Bigg ),
\nonumber \\ \label{eq:ta18a0b4c3d0}
\end{eqnarray}
where
\begin{eqnarray}
\bar{\sigma}\triangleq \sigma \sqrt{\lim_{n\rightarrow\infty} \frac{\tr\lp\overline{\Sigma}\overline{\Sigma} \rp}{m}}. \label{eq:ta18a0b4c4}
\end{eqnarray}
We then first observe
\begin{eqnarray}
\lim_{n\rightarrow\infty}  \mE
\lp
-\tr\lp \lp I  +\gamma D  \rp^{-1} \t \t^T\rp \rp
& = &
\lim_{n\rightarrow\infty}
\lp
-\tr\lp \lp I  +\gamma D  \rp^{-1} \mE \t \t^T\rp \rp, \label{eq:ta18a0b4c3d1}
\end{eqnarray}
and after utilizing (\ref{eq:ta18a0b4c1d0}) and (\ref{eq:ta18a0b4c1d2}) also find
\begin{eqnarray}
  \mE
\lp  \t \t^T\rp
& = &
 \frac{1}{4n}\nu_1^2 \lp U^{(d)}\rp^T \lp \bar{L}^T\Sigma\Sigma \bar{L} +\dbover{\Sigma}\dbover{\Sigma}   \rp U^{(d)}
  +\gamma^2 \bar{\c} \bar{\c}^T \nonumber \\
& = &
 \frac{1}{4n}\nu_1^2 \lp U^{(d)}\rp^T  U^{(d)}D\lp U^{(d)}\rp^T  U^{(d)}
  +\gamma^2 \bar{\c} \bar{\c}^T  \nonumber \\
& = &
 \frac{1}{4n}\nu_1^2  D
  +\gamma^2 \bar{\c} \bar{\c}^T. \label{eq:ta18a0b4c3d2}
\end{eqnarray}
A combination of (\ref{eq:ta18a0b4c3d0}), (\ref{eq:ta18a0b4c3d1}), and (\ref{eq:ta18a0b4c3d2}) gives
\begin{eqnarray}
\lim_{n\rightarrow\infty}  \mE_{\g,\h^{(1)},\h^{(2)},\v}\cL_2\lp \hat{\x},\gamma;c_2,\nu_1\rp
  & = &
\lim_{n\rightarrow\infty} \Bigg ( \Big. \mE
\lp
-\tr\lp \lp I  +\gamma D  \rp^{-1} \t \t^T\rp \rp
+\nu_1\sqrt{\alpha}\sqrt{c_2^2+\bar{\sigma}^2}   \nonumber \\
& &
+\gamma \sum_{i=1}^n \lp \s_i^{(1)}\rp^2\c_i^2  -\gamma c_2^2 \Big.\Bigg ) \nonumber \\
  & = &
\lim_{n\rightarrow\infty}
\Bigg ( \Big.
\lp
-\tr\lp \lp I  +\gamma D  \rp^{-1} \mE\t \t^T\rp \rp
+\nu_1\sqrt{\alpha}\sqrt{c_2^2+\bar{\sigma}^2}
\nonumber \\
& &
  + \gamma \sum_{i=1}^n \lp \s_i^{(1)}\rp^2\c_i^2  -\gamma c_2^2 \Big.\Bigg )\nonumber \\
  & = &
\lim_{n\rightarrow\infty}
\Bigg ( \Big.
\lp
-\tr\lp \lp I  +\gamma D  \rp^{-1} \lp  \frac{1}{4n}\nu_1^2  D
  +\gamma^2 \bar{\c} \bar{\c}^T \rp\rp \rp \nonumber \\
  & &
+\nu_1\sqrt{\alpha}\sqrt{c_2^2+\bar{\sigma}^2}
 + \gamma \sum_{i=1}^n \lp \s_i^{(1)}\rp^2\c_i^2  -\gamma c_2^2
  \Big.\Bigg )
 \nonumber \\
   & = & \lim_{n\rightarrow\infty} \Bigg ( \Big. -\frac{\nu_1^2}{4n}\sum_{i=1}^n \frac{\s_i^2}{1+\gamma\s_i^2}
 - \sum_{i=1}^n \frac{\gamma^2\bar{\c}_i^2}{1+\gamma\s_i^2}
 \nonumber \\
 & &
+\nu_1\sqrt{\alpha}\sqrt{c_2^2+\bar{\sigma}^2}   + \gamma \sum_{i=1}^n \lp \s_i^{(1)}\rp^2\c_i^2  -\gamma c_2^2  \Big.\Bigg ). \label{eq:ta18a0b4c3}
\end{eqnarray}
As stated earlier, to make writing easier we avoid repeatedly using $\lim_{n\rightarrow\infty}$ (all expressions involving dimensions $n$, $m$, or $k$ are assumed to be written within the $\lim_{n\rightarrow\infty}$ context). Moreover, all such expressions are assumed to be bounded with well defined and existing limiting values. A simple combination of (\ref{eq:ta18a0b1})-(\ref{eq:ta18a0b2c1}), (\ref{eq:ta18a0b4c0}), and  (\ref{eq:ta18a0b4c3}) allows one to arrive at the following
\begin{eqnarray}
\lim_{n\rightarrow\infty}  \mE_{\g,\h^{(1)},\h^{(2)},\v}f_{rd}(\g,\h^{(1)},\h^{(2)},\v)
   & = & \lim_{n\rightarrow\infty} \min_{c_2\geq 0} \max_{\nu_1\geq 0,\gamma}  f_0(c_2,\nu_1,\gamma), \label{eq:ta18a0b4c5}
\end{eqnarray}
where
\begin{eqnarray}
f_0(c_2,\nu_1,\gamma) \triangleq  -\frac{\nu_1^2}{4n}\sum_{i=1}^n \frac{\s_i^2}{1+\gamma\s_i^2}
 - \sum_{i=1}^n \frac{\gamma^2\bar{\c}_i^2}{1+\gamma\s_i^2}
 +\nu_1\sqrt{\alpha}\sqrt{c_2^2+\bar{\sigma}^2}   + \gamma \sum_{i=1}^n \lp \s_i^{(1)}\rp^2\c_i^2  -\gamma c_2^2 .  \label{eq:ta18a0b4c6}
\end{eqnarray}
The optimization over $\nu_1$ is simple (in fact identical to the corresponding one in \cite{Stojnicridge24}), and we get easily
for the optimal $\nu_1$
\begin{eqnarray}
\hat{\nu}_1 =
 \frac{2\sqrt{\alpha}\sqrt{c_2^2+\bar{\sigma}^2}}{\frac{1}{n}\sum_{i=1}^n \frac{\s_i^2}{1+\gamma\s_i^2}}, \label{eq:ta18a0b4c8}
\end{eqnarray}
which together with (\ref{eq:ta18a0b4c6}) gives
\begin{eqnarray}
\max_{\nu_1\geq 0} f_0(c_2,\nu_1,\gamma) = f_0(c_2,\hat{\nu}_1,\gamma) =
  - \sum_{i=1}^n \frac{\gamma^2\bar{\c}_i^2}{1+\gamma\s_i^2}
+ \frac{\alpha\lp c_2^2+\bar{\sigma}^2\rp}{\frac{1}{n}\sum_{i=1}^n \frac{\s_i^2}{1+\gamma\s_i^2}} + \gamma \sum_{i=1}^n \lp \s_i^{(1)}\rp^2\c_i^2 -\gamma c_2^2.  \label{eq:ta18a0b4c9}
\end{eqnarray}
Combining further (\ref{eq:ta18a0b4c5}) and (\ref{eq:ta18a0b4c9}), we arrive at
\begin{eqnarray}
\lim_{n\rightarrow\infty}  \mE_{\g,\h^{(1)},\h^{(2)},\v}f_{rd}(\g,\h^{(1)},\h^{(2)},\v)
   & = & \lim_{n\rightarrow\infty}  \min_{c_2\geq 0} \max_{\gamma}  f_0(c_2,\hat{\nu}_1,\gamma), \label{eq:ta18a0b4c10}
\end{eqnarray}
where $f_0(c_2,\hat{\nu}_1,\gamma)$ as in (\ref{eq:ta18a0b4c9}). Computing the derivatives with respect to $c_2$ and $\gamma$, one finds
\begin{eqnarray}
\frac{d f_0(c_2,\hat{\nu}_1,\gamma)}{d c_2} =
  \frac{2\alpha c_2}{\frac{1}{n}\sum_{i=1}^n \frac{\s_i^2}{1+\gamma\s_i^2}}   -2\gamma c_2,  \label{eq:ta18a0b4c11}
\end{eqnarray}
and
\begin{eqnarray}
\frac{d f_0(c_2,\hat{\nu}_1,\gamma)}{d \gamma}
& = &
  - \sum_{i=1}^n \frac{2\gamma\bar{\c}_i^2}{1+\gamma\s_i^2}
  + \sum_{i=1}^n \frac{\gamma^2\s_i^2\bar{\c}_i^2}{\lp 1+\gamma\s_i^2\rp^2}
 +\frac{\alpha\lp c_2^2+\bar{\sigma}^2\rp}{\lp \frac{1}{n}\sum_{i=1}^n \frac{\s_i^2}{1+\gamma\s_i^2}\rp^2}
 \lp \frac{1}{n}\sum_{i=1}^n \frac{\s_i^4}{\lp 1+\gamma\s_i^2\rp^2}\rp \nonumber \\
 & &
 + \sum_{i=1}^n \lp \s_i^{(1)}\rp^2\c_i^2 - c_2^2. \label{eq:ta18a0b4c12}
\end{eqnarray}
Equalling to zero the derivative in  (\ref{eq:ta18a0b4c11}), we find that the optimal $\hat{\gamma}$ satisfies
\begin{eqnarray}
  \frac{1}{n}\sum_{i=1}^n \frac{\hat{\gamma}\s_i^2}{1+\hat{\gamma}\s_i^2} =\alpha.  \label{eq:ta18a0b4c13}
\end{eqnarray}
After setting
\begin{eqnarray}
a_1 & = &
  - \sum_{i=1}^n \frac{2\hat{\gamma}\bar{\c}_i^2}{1+\hat{\gamma}\s_i^2}
  + \sum_{i=1}^n \frac{\hat{\gamma}^2\s_i^2\bar{\c}_i^2}{\lp 1+\hat{\gamma}\s_i^2\rp^2}
  + \sum_{i=1}^n \lp \s_i^{(1)}\rp^2\c_i^2 ,
\nonumber \\
a_2 & = &
\frac{  \frac{1}{n}\sum_{i=1}^n \frac{\s_i^4}{\lp 1+\hat{\gamma}\s_i^2\rp^2}}{\lp  \frac{1}{n}\sum_{i=1}^n \frac{\s_i^2}{1+\hat{\gamma}\s_i^2}\rp^2},  \label{eq:ta18a0b4c15}
\end{eqnarray}
we from (\ref{eq:ta18a0b4c12}) find
\begin{eqnarray}
\hat{c}_2^2= \frac{a_1+\alpha \bar{\sigma}^2a_2 }{1-\alpha a_2}.  \label{eq:ta18a0b4c16}
\end{eqnarray}
From (\ref{eq:ta18a0b4c9}) and (\ref{eq:ta18a0b4c10}), we also have
\begin{eqnarray}
\lim_{n\rightarrow\infty}  \mE_{\g,\h^{(1)},\h^{(2)},\v}f_{rd}(\g,\h^{(1)},\h^{(2)},\v)
   & = & \lim_{n\rightarrow\infty} \min_{c_2\geq 0} \max_{\gamma}  f_0(c_2,\hat{\nu}_1,\gamma) \nonumber \\
   & = & \lim_{n\rightarrow\infty} f_0(\hat{c}_2,\hat{\nu}_1,\hat{\gamma}) \nonumber \\
   & = &  \lim_{n\rightarrow\infty} \lp  - \sum_{i=1}^n \frac{\hat{\gamma}^2\bar{\c}_i^2}{1+\hat{\gamma}\s_i^2}
+ \frac{\alpha\lp \hat{c}_2^2+\bar{\sigma}^2\rp}{\frac{1}{n}\sum_{i=1}^n \frac{\s_i^2}{1+\hat{\gamma}\s_i^2}} + \hat{\gamma} \sum_{i=1}^n \lp \s_i^{(1)}\rp^2\c_i^2 - \hat{\gamma} \hat{c}_2^2 \rp \nonumber \\
    & = &   \lim_{n\rightarrow\infty}  \lp  - \sum_{i=1}^n \frac{\hat{\gamma}^2\bar{\c}_i^2}{1+\hat{\gamma}\s_i^2}
+ \hat{\gamma}\lp \hat{c}_2^2+\bar{\sigma}^2\rp
+ \hat{\gamma} \sum_{i=1}^n \lp \s_i^{(1)}\rp^2\c_i^2
 - \hat{\gamma} \hat{c}_2^2 \rp \nonumber \\
   & = &   \lim_{n\rightarrow\infty}
   \lp  - \sum_{i=1}^n \frac{\hat{\gamma}^2\bar{\c}_i^2}{1+\hat{\gamma}\s_i^2}
+ \hat{\gamma} \sum_{i=1}^n \lp \s_i^{(1)}\rp^2\c_i^2
+ \hat{\gamma}\bar{\sigma}^2 \rp, \label{eq:ta18a10}
\end{eqnarray}
where $\hat{c}_2$ and $\hat{\gamma}$ are given through (\ref{eq:ta18a0b4c13}), (\ref{eq:ta18a0b4c15}), and (\ref{eq:ta18a0b4c16}). Taking the very same $\hat{c}_2$ and $\hat{\gamma}$ in (\ref{eq:ta18a0b4c8}) gives for the optimal $\nu_1$
\begin{eqnarray}
\hat{\nu}_1 =
 \frac{2\sqrt{\alpha}\sqrt{\hat{c}_2^2+\bar{\sigma}^2}}{\frac{1}{n}\sum_{i=1}^n \frac{\s_i^2}{1+\hat{\gamma}\s_i^2}}=
 \frac{2\hat{\gamma}\sqrt{\hat{c}_2^2+\bar{\sigma}^2}}{\sqrt{\alpha}}. \label{eq:ta18a0b4c17}
\end{eqnarray}

The above discussion is summarized in the following lemma.
\begin{lemma}(Characterization of random dual) Assume the setup of Theorem \ref{thm:thm1} with $\xi_{rd}$ as in (\ref{eq:ta16}) and consider a large $n$ linear regime with $\alpha=\lim_{n\rightarrow\infty}\frac{m}{n}$ that does not change as $n$ grows. Assume that all the considered limiting quantities are well defined and bounded. Let $\c=V^T\bar{\beta}$, $\bar{L}=V^TL\dbover{V}$,  and $\bar{\sigma}\triangleq \sigma \sqrt{\lim_{n\rightarrow\infty} \frac{\tr\lp\overline{\Sigma}\overline{\Sigma} \rp}{m}}$. Consider the eigen-decomposition
\begin{eqnarray}
\bar{L}^T\Sigma\Sigma\bar{L} + \dbover{\Sigma} \dbover{\Sigma}    = U^{(d)} D \lp U^{(d)}\rp^T,
% \s^{(1)} \triangleq \mbox{diag}\lp\Sigma\rp, \quad  \s^{(2)} \triangleq \mbox{diag}\lp\dbover{\Sigma}\rp.
\label{eq:thmaaeq1a0}
\end{eqnarray}
where $U^{(d)}$ is unitary and $D$ is diagonal and set
\begin{eqnarray}
\bar{\c}\triangleq  \lp U^{(d)}\rp^T\bar{L}^T\Sigma\Sigma\c,\quad  \s\triangleq \sqrt{\mbox{diag}\lp D\rp}, \quad \s^{(1)} \triangleq \mbox{diag}\lp\Sigma\rp.
\label{eq:thmaaeq1a1}
\end{eqnarray}
Moreover, let $\hat{\gamma}$ be the unique solution of
\begin{eqnarray}
\lim_{n\rightarrow\infty} \frac{1}{n}\sum_{i=1}^n \frac{\hat{\gamma}\s_i^2}{1+\hat{\gamma}\s_i^2} =\alpha.  \label{eq:thmaaeq1}
\end{eqnarray}
Set
\begin{eqnarray}
a_1 & = &
\lim_{n\rightarrow\infty}\lp   - \sum_{i=1}^n \frac{2\hat{\gamma}\bar{\c}_i^2}{1+\hat{\gamma}\s_i^2}
  + \sum_{i=1}^n \frac{\hat{\gamma}^2\s_i^2\bar{\c}_i^2}{\lp 1+\hat{\gamma}\s_i^2\rp^2}
  + \sum_{i=1}^n \lp \s_i^{(1)}\rp^2\c_i^2\rp,
\nonumber \\
a_2 & =&
\frac{\lim_{n\rightarrow\infty} \frac{1}{n}\sum_{i=1}^n \frac{\s_i^4}{\lp 1+\hat{\gamma}\s_i^2\rp^2}}{\lp \lim_{n\rightarrow\infty} \frac{1}{n}\sum_{i=1}^n \frac{\s_i^2}{1+\hat{\gamma}\s_i^2}\rp^2}
=\frac{\hat{\gamma}^2}{\alpha^2} \lim_{n\rightarrow\infty} \frac{1}{n}\sum_{i=1}^n \frac{\s_i^4}{\lp 1+\hat{\gamma}\s_i^2\rp^2}.  \label{eq:thmaaeq2}
\end{eqnarray}
One then has \vspace{-.0in}
\begin{eqnarray}
  \xi_{rd} & = & \lim_{n\rightarrow\infty} \mE_{\g,\h^{(1)},\h^{(2)},\v} f_{rd}(\g,\h^{(1)},\h^{(2)},\v)
 =
 \lim_{n\rightarrow\infty}
   \lp  - \sum_{i=1}^n \frac{\hat{\gamma}^2\bar{\c}_i^2}{1+\hat{\gamma}\s_i^2}
+ \hat{\gamma} \sum_{i=1}^n \lp \s_i^{(1)}\rp^2\c_i^2
+ \hat{\gamma}\bar{\sigma}^2 \rp \nonumber \\
&  &  \lim_{n\rightarrow\infty}  \hat{c}_2^2
 =
 \frac{a_1 +\alpha \bar{\sigma}^2a_2 }{1-\alpha a_2} \nonumber \\
 & & \lim_{n\rightarrow\infty}  \hat{\nu}_1
 = \frac{2\hat{\gamma}\sqrt{\hat{c}_2^2+\bar{\sigma}^2}}{\sqrt{\alpha}},
\label{eq:thmaaeq3}
\end{eqnarray}
and for any fixed $\epsilon>0$
\begin{eqnarray}
 \lim_{n\rightarrow\infty}\mP_{\g,\h^{(1)},\h^{(2)},\v} \lp  (1-\epsilon)\xi_{rd} \leq  f_{rd}(\g,\h^{(1)},\h^{(2)},\v)\leq (1+\epsilon)\xi_{rd}\rp
& \longrightarrow & 1 \nonumber \\
 \lim_{n\rightarrow\infty}\mP_{\g,\h^{(1)},\h^{(2)},\v} \lp  (1-\epsilon)\hat{c}_2^2 \leq  c_2^2\leq (1+\epsilon)\hat{c}_2^2\rp
& \longrightarrow & 1 \nonumber \\
 \lim_{n\rightarrow\infty}\mP_{\g,\h^{(1)},\h^{(2)},\v} \lp  (1-\epsilon)\hat{\nu}_1 \leq  \nu_1 \leq (1+\epsilon)\hat{\nu}_1\rp
& \longrightarrow & 1.\label{eq:thmaaeq4}
\end{eqnarray}
\label{lemma:lemma2}
\end{lemma}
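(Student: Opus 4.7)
The proof follows by systematically executing the derivation already sketched in equations (\ref{eq:ta18a0})--(\ref{eq:ta18a0b4c17}) and then invoking standard RDT concentration. The plan is to start from the random dual objective $f_{rd}(\g,\h^{(1)},\h^{(2)},\v)$ of Theorem \ref{thm:thm1} and perform the inner maximization over $\nu$. Since $\nu$ enters linearly through the inner product with $c_2\g-\sigma \overline{U}\overline{\Sigma}\v$ and through its norm, the supremum over $\nu$ reduces to $\|\nu\|_2$ times a norm, allowing the reduction to a scalar variable $\nu_s=\|\nu\|_2\geq 0$ as in (\ref{eq:ta18a0}). Next, I would apply the unitary change of variables $\x=\dbover{V}^T\beta$, $\bar{L}=V^TL\dbover{V}$, $\c=V^T\bar{\beta}$, which preserves $\ell_2$ norms and produces the cleaner form (\ref{eq:ta18a0}).

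The second main step is to rewrite the quadratic $c_2^2=\|\Sigma(\bar{L}\x-\c)\|_2^2+\|\dbover{\Sigma}\x\|_2^2$ as an explicit equality constraint, and then apply strong Lagrange duality with a scalar multiplier $\gamma$ to swap $\min_\x\max_\gamma$ into $\max_\gamma\min_\x$, yielding $\cL_2(\x,\gamma;c_2,\nu_s)$ in (\ref{eq:ta18a0b2c2}). The inner minimization in $\x$ is an unconstrained strongly convex quadratic (for $\gamma\geq 0$), so setting its gradient to zero gives the explicit $\hat{\x}$ in (\ref{eq:ta18a0b4}); substituting back produces (\ref{eq:ta18a0b4c0}). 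I would then introduce the natural scaling $\nu_s=\nu_1/\sqrt{n}$ and diagonalize using the eigendecomposition (\ref{eq:thmaaeq1a0}). In that basis, the quadratic form $\t^T(I+\gamma D)^{-1}\t$ decouples across coordinates, and taking expectations only requires knowing $\mE \t\t^T$, which is computed in (\ref{eq:ta18a0b4c3d2}) using standard normality of $\h^{(1)},\h^{(2)}$. For the remaining term $\nu_1 \|c_2\g-\sigma\overline{U}\overline{\Sigma}\v\|_2/\sqrt{n}$, independence of $\g,\v$ together with concentration of Gaussian norms yields $\sqrt{\alpha}\sqrt{c_2^2+\bar{\sigma}^2}$, where $\bar{\sigma}$ is defined in (\ref{eq:ta18a0b4c4}).

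At this point the problem reduces to a three-variable deterministic optimization $\min_{c_2\geq 0}\max_{\gamma\geq 0,\nu_1\geq 0} f_0(c_2,\nu_1,\gamma)$ as in (\ref{eq:ta18a0b4c6}). The maximization over $\nu_1$ is a scalar concave quadratic maximization giving $\hat{\nu}_1$ in (\ref{eq:ta18a0b4c8}). Substituting back and differentiating in $c_2$ yields (\ref{eq:ta18a0b4c13}), the fixed-point equation (\ref{eq:thmaaeq1}) for $\hat{\gamma}$; differentiating in $\gamma$ (or, equivalently, the constraint on $c_2^2$) gives $\hat{c}_2^2$ in closed form (\ref{eq:ta18a0b4c16}). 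The objective value (\ref{eq:ta18a10}) follows by a direct substitution and algebraic simplification using the fixed-point identity. This establishes the formulas for $\xi_{rd}$, $\hat{c}_2^2$, and $\hat{\nu}_1$ in (\ref{eq:thmaaeq3}).

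The final step is to upgrade the expectation statement to the high-probability localization (\ref{eq:thmaaeq4}). This uses Gaussian concentration in the same vein as (\ref{eq:ta15}) applied to $f_{rd}$, $c_2$ (through its characterizing quadratic), and $\nu_1$ (through its stationarity formula), all of which are Lipschitz functionals of $(\g,\h^{(1)},\h^{(2)},\v)$ under the assumed boundedness of the problem data. The main technical obstacle I anticipate is twofold: first, rigorously justifying the $\min_\x\max_\gamma$ exchange (one must verify that the quadratic in $\x$ is eventually strongly convex and that the resulting outer problem in $\gamma$ has its optimum in the region where this strong convexity holds, so that Sion-type arguments apply); and second, propagating concentration not just to the objective $\xi_{rd}$ but also to the optimizers $\hat{c}_2$ and $\hat{\nu}_1$, which requires the strict concavity/convexity of $f_0$ at its saddle point to convert objective concentration into argument concentration. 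Both can be handled by invoking the uniqueness of $\hat{\gamma}$ from monotonicity of the left-hand side of (\ref{eq:thmaaeq1}) in $\gamma$, together with local quadratic growth of $f_0$ away from the saddle.
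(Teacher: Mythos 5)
Your proposal is correct and follows essentially the same route as the paper: the paper's proof of this lemma is precisely the derivation in (\ref{eq:ta18a0})--(\ref{eq:ta18a0b4c17}) (scalarization of $\nu$, the unitary change of variables, the Lagrangian treatment of the $c_2$ constraint with multiplier $\gamma$, the explicit quadratic minimization in $\x$, the $\nu_1/\sqrt{n}$ scaling and diagonalization, and the successive stationarity conditions yielding $\hat{\gamma}$, $\hat{c}_2$, and $\hat{\nu}_1$), capped by the same concentration argument. You correctly identify which derivative produces which fixed-point equation and the points (min--max exchange, argument concentration via uniqueness of $\hat{\gamma}$) that the paper treats as routine.
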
\vspace{-.17in}
\begin{proof}
Follows from the above presentation and trivial concentrations of $f_{rd}(\g,\h^{(1)},\h^{(2)},\v)$, $c_2$, and $\nu_1$.
\end{proof}

The preceding discussion and Lemma \ref{lemma:lemma2} allow us to also formulate the following theorem.

\begin{theorem}(Characterization of FRM GLS estimator) Assume the setup of Lemma \ref{lemma:lemma2} and let $\hat{\beta}=\beta_{gls}$ where $\beta_{gls}$ is as in
(\ref{eq:model011}). Also, let $\hat{\gamma}$, $a_1$, $a_2$, and $\hat{c}_2$ be as in (\ref{eq:thmaaeq1}), (\ref{eq:thmaaeq2}), and (\ref{eq:thmaaeq3}), respectively. Moreover,  let $\xi_{gls}$ defined in (\ref{eq:randlincons1}) or (\ref{eq:randlincons1a0}) be the objective value of the GLS estimating optimization  and let the GLS interpolator excess prediction risk, $R(\bar{\beta},\hat{\beta})=R(\bar{\beta},\beta_{gls})$, be as in (\ref{eq:model020}). Then
\begin{eqnarray}\label{eq:thm2a11a0}
 \lim_{n\rightarrow\infty} \mE_{Z,\v,E}\xi_{gls}
 &= &\xi_{rp}=\xi_{rd}
 =
 \lim_{n\rightarrow\infty}
   \lp  - \sum_{i=1}^n \frac{\hat{\gamma}^2\bar{\c}_i^2}{1+\hat{\gamma}\s_i^2}
+ \hat{\gamma} \sum_{i=1}^n \lp \s_i^{(1)}\rp^2\c_i^2
+ \hat{\gamma}\bar{\sigma}^2 \rp \nonumber \\
 \lim_{n\rightarrow\infty} \mE_{Z,\v,E} R(\bar{\beta},\beta_{gls})
& = &
\lim_{n\rightarrow\infty} \mE_{Z,\v,E} \lp \left \| \Sigma V^T \lp \bar{\beta}  - L\beta_{gls} \rp\right \|_2^2 +\left \|\dbover{\Sigma}\dbover{V}^T\hat{\beta}\right \|_2^2 \rp
 =
 \frac{a_1+\alpha \bar{\sigma}^2a_2 }{1-\alpha a_2}. \nonumber \\
\end{eqnarray}
Moreover,
\begin{eqnarray}
 \lim_{n\rightarrow\infty}\mP_{Z,\v,E} \lp  (1-\epsilon)\mE_{Z,\v,E}\xi_{gls} \leq  \xi_{gls} \leq (1+\epsilon)\mE_{Z,\v}\xi_{gls}\rp
& \longrightarrow & 1 \nonumber \\
 \lim_{n\rightarrow\infty}\mP_{Z,\v,E} \lp  (1-\epsilon)\mE_{Z,\v,E} R(\bar{\beta},\beta_{gls})  \leq  R(\bar{\beta},\beta_{gls}) \leq (1+\epsilon)\mE_{Z,\v,E}R(\bar{\beta},\beta_{gls})\rp
& \longrightarrow & 1.\label{eq:thm2ta17}
\end{eqnarray}
 \label{thm:thm2}
\end{theorem}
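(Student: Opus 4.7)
The plan is to combine Lemma~\ref{lemma:lemma1}, Theorem~\ref{thm:thm1}, and Lemma~\ref{lemma:lemma2} via the standard RDT strong-duality closure. Lemma~\ref{lemma:lemma1} identifies $\xi_{gls}$ with $f_{rp}(Z,\v,E)$, so the claim about $\lim\mE\xi_{gls}$ reduces to establishing $\xi_{rp}=\xi_{rd}$ together with a concentration inequality. Theorem~\ref{thm:thm1} already delivers the Gordon lower bound $\xi_{rd}\le\xi_{rp}$ (and a one-sided probabilistic tail), while Lemma~\ref{lemma:lemma2} evaluates $\xi_{rd}$ in closed form and furnishes the concentrations $c_2^2\to\hat c_2^2$ and $\nu_1\to\hat\nu_1$ \emph{in the random dual}. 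What remains is therefore: (a) upgrading Gordon's inequality to an equality $\xi_{rp}=\xi_{rd}$; (b) transferring the concentration of $c_2^2$ from the random dual to the random primal; and (c) establishing concentration of $\xi_{gls}$ and $R(\bar{\beta},\beta_{gls})$ around their means.

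For (a), I would observe that the min--max in (\ref{eq:randlincons2}) is convex in $\beta$ and linear (hence concave) in $\nu$, and that the Gaussian couplings $\nu^T Z\Sigma V^T L\beta$ and $\nu^T E\dbover{\Sigma}\dbover{V}^T\beta$ are bilinear. This is exactly the setting of the Convex Gaussian Min--max Theorem, whose two-sided form sits inside the comparison machinery of \cite{Stojnicgscomp16}. Localizing $\beta$ and $\nu$ to sufficiently large Euclidean balls (using the standing boundedness assumption) and then letting the radii diverge delivers the reverse inequality $\xi_{rp}\le\xi_{rd}$ in probability, which closes the loop.

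For (b), I would invoke the identity from (\ref{eq:ta18a0b0c0}): after the change of variables $\x=\dbover{V}^T\beta$, one has $c_2^2=\|\Sigma V^T(L\beta-\bar{\beta})\|_2^2+\|\dbover{\Sigma}\dbover{V}^T\beta\|_2^2$, which by (\ref{eq:model020}) is literally $R(\bar{\beta},\beta)$. Hence the dual scalar $c_2$ records the excess risk along every primal candidate. A standard CGMT-type localization --- restrict the primal to $|R(\bar{\beta},\beta)-\hat{c}_2^2|\ge\epsilon$, apply Gordon's comparison to see that the restricted primal objective strictly exceeds $\xi_{rd}$ with probability going to $1$, and contradict primal optimality --- then promotes the dual concentration in (\ref{eq:thmaaeq4}) into the primal concentration $R(\bar{\beta},\beta_{gls})\to\hat{c}_2^2$ in (\ref{eq:thm2ta17}).

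Step (c) is routine: $\xi_{gls}=f_{rp}$ and $R(\bar{\beta},\beta_{gls})$ are, under the standing boundedness, Lipschitz functionals of the Gaussian inputs $(Z,\v,E)$, so Gaussian Lipschitz concentration (already invoked in (\ref{eq:ta15})) yields the sharp deviation bounds claimed in (\ref{eq:thm2ta17}). The main obstacle is (a), the passage from a Gordon inequality to a Gordon equality; once the convex--concave structure of (\ref{eq:randlincons2}) is verified (immediate since $X\beta=\y$ is affine and so the dualized Lagrangian is linear in $\nu$), the CGMT route goes through, and the explicit values of $\hat\gamma,a_1,a_2,\hat c_2^2,\hat\nu_1$ reported in (\ref{eq:thmaaeq1})--(\ref{eq:thmaaeq3}) are imported verbatim from the derivation of Lemma~\ref{lemma:lemma2}.
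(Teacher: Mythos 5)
Your proposal is correct and follows essentially the same route as the paper: the paper's proof of Theorem \ref{thm:thm2} is exactly the combination of Lemma \ref{lemma:lemma1}, Theorem \ref{thm:thm1}, and Lemma \ref{lemma:lemma2}, closed by the strong random duality / reversal argument (your CGMT-style upgrade of the Gordon inequality, justified by the convex--concave structure of the Lagrangian), with the identification $c_2^2=R(\bar{\beta},\beta)$ transferring the dual concentration to the primal risk. No gaps; the paper simply compresses your steps (a)--(c) into a citation of the reversal arguments of \cite{StojnicGorEx10,StojnicRegRndDlt10}.
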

\begin{proof}
The proof follows from Lemma  \ref{lemma:lemma2} and the preceding discussion (in exactly the same way the proof of \cite{Stojnicridge24}'s Theorem 2 follows from its Lemma 2 and the corresponding discussion).
\end{proof}

\vspace{.1in}
\noindent \underline{ 4) \textbf{\emph{Double checking strong random duality:}}} Due to the underlying convexity,  the strong random duality and the reversal arguments of \cite{StojnicGorEx10,StojnicRegRndDlt10} are applicable and the above results establish the exact characterizations of the underlying optimization problems and the associated algorithms/estimators.

%%%%%%%%%%%%%%%%%%%%%%%%%%%%%%%%%%%%%%%%%%%%%%%%%%%%%%%%%%%%%%%%%
%%%%%%%%%%%%%%%%%%%%%%%%%%%%%%%%%%%%%%%%%%%%%%%%%%%%%%%%%%%%%%%%%
%%%%%%%%%%%%%%%%%%%%%%%%%%%%%%%%%%%%%%%%%%%%%%%%%%%%%%%%%%%%%%%%%
\subsubsection{Compact risk form - GLS}
\label{sec:compgls}
%%%%%%%%%%%%%%%%%%%%%%%%%%%%%%%%%%%%%%%%%%%%%%%%%%%%%%%%%%%%%%%%%
%%%%%%%%%%%%%%%%%%%%%%%%%%%%%%%%%%%%%%%%%%%%%%%%%%%%%%%%%%%%%%%%%
%%%%%%%%%%%%%%%%%%%%%%%%%%%%%%%%%%%%%%%%%%%%%%%%%%%%%%%%%%%%%%%%%

The risk characterization given in Theorem \ref{thm:thm2} can be rephrased in a more compact matrix form which relates directly to the matrices of the original FRM rather then to their decompositions. We start the  matrix recomposition by first observing that (\ref{eq:thmaaeq1}) can be rewritten as
\begin{eqnarray}
 1- \lim_{n\rightarrow\infty} \frac{1}{n}\sum_{i=1}^n \frac{\hat{\gamma}\s_i^2}{1+\hat{\gamma}\s_i^2}
 & = & 1-\alpha
 \nonumber \\
\Longleftrightarrow \qquad  \lim_{n\rightarrow\infty} \frac{1}{n}\sum_{i=1}^n \frac{1}{1+\hat{\gamma}\s_i^2}
 & = & 1-\alpha,
 \label{eq:compthmproofeq0a0}
\end{eqnarray}
or in a matrix form
\begin{eqnarray}
   \lim_{n\rightarrow\infty} \frac{1}{n}\tr\lp I+\hat{\gamma}D \rp^{-1}
 & = & 1-\alpha \nonumber \\
 \Longleftrightarrow \qquad  \lim_{n\rightarrow\infty} \frac{1}{n}\tr\lp I+\hat{\gamma} \bar{L}^T\Sigma\Sigma\bar{L} + \hat{\gamma} \dbover{\Sigma} \dbover{\Sigma}  \rp^{-1}
 & = & 1-\alpha
 \nonumber \\
 \Longleftrightarrow \qquad  \lim_{n\rightarrow\infty} \frac{1}{n}\tr\lp I+\hat{\gamma} \dbover{V} \bar{L}^T\Sigma\Sigma\bar{L} \dbover{V} ^T+ \hat{\gamma} \dbover{V} \dbover{\Sigma} \dbover{\Sigma} \dbover{V} ^T \rp^{-1}
 & = & 1-\alpha \nonumber \\
\Longleftrightarrow \qquad  \lim_{n\rightarrow\infty} \frac{1}{n}\tr\lp I+\hat{\gamma} L^T V\Sigma\Sigma V ^TL+ \hat{\gamma} \dbover{V} \dbover{\Sigma} \dbover{\Sigma} \dbover{V} ^T \rp^{-1}
 & = & 1-\alpha \nonumber \\
\Longleftrightarrow \qquad
\lim_{n\rightarrow\infty} \frac{1}{n}\tr\lp I+\hat{\gamma} L^T A^TAL + \hat{\gamma}\dbover{A}^T \dbover{A} \rp^{-1}
 & = & 1-\alpha,
 \label{eq:compthmproofeq0a1}
\end{eqnarray}
where the second line follows from (\ref{eq:ta18a0b4c1d0}) or ((\ref{eq:thmaaeq1a0})), the fourth one by recalling on $\bar{L}=V^TL\dbover{V}$, and the fifth by the SVDs from (\ref{eq:model08}). We can also rewrite $a_{1}$ from (\ref{eq:thmaaeq2}) in the following way
\begin{eqnarray}
a_{1}
& = &
-\lim_{n\rightarrow\infty}\lp     \sum_{i=1}^n \frac{2\hat{\gamma}\bar{\c}_i^2}{1+\hat{\gamma}\s_i^2} \rp
+\lim_{n\rightarrow\infty}\lp    \sum_{i=1}^n \frac{\hat{\gamma}^2\s_i^2\bar{\c}_i^2}{\lp 1+\hat{\gamma}\s_i^2\rp^2} \rp
+\lim_{n\rightarrow\infty}\lp    \sum_{i=1}^n \lp \s_i^{(1)}\rp^2\c_i^2\rp
\nonumber \\
& = &
-\lim_{n\rightarrow\infty}\lp     \sum_{i=1}^n \frac{\hat{\gamma}\bar{\c}_i^2}{1+\hat{\gamma}\s_i^2} \rp
- \lim_{n\rightarrow\infty}\lp    \sum_{i=1}^n \frac{\hat{\gamma}\bar{\c}_i^2}{\lp 1+\hat{\gamma}\s_i^2\rp^2} \rp
+ \lim_{n\rightarrow\infty}\lp    \sum_{i=1}^n \lp \s_i^{(1)}\rp^2\c_i^2\rp
\nonumber \\
& = &
 \lim_{n\rightarrow\infty}  \lp -\hat{\gamma}  \bar{\c}^T  \lp I +\hat{\gamma} D\rp ^{-1}\bar{\c}
- \hat{\gamma} \bar{\c}^T  \lp I +\hat{\gamma} D\rp ^{-2}\bar{\c}
 + \c^T\Sigma\Sigma\c
 \rp.
 \label{eq:compthmproofeq0}
\end{eqnarray}
Utilizing $\bar{\c}= \lp U^{(d)}\rp^T\bar{L}^T\Sigma\Sigma\c$ from (\ref{eq:ta18a0b4c2}), we then also have
\begin{eqnarray}
   \lim_{n\rightarrow\infty}  \lp \bar{\c}^T \lp I+\hat{\gamma} D\rp^{-1}\bar{\c}
 \rp
 & = &
 \lim_{n\rightarrow\infty}
  \lp \lp\lp U^{(d)}\rp^T\bar{L}^T\Sigma\Sigma\c\rp ^TD^{-1}\lp U^{(d)}\rp^T\bar{L}^T\Sigma\Sigma\c\rp
  \nonumber \\
 & = &
 \lim_{n\rightarrow\infty}  \lp
 \lp \c^T\Sigma\Sigma \bar{L} U^{(d)}\rp
 \lp I+\hat{\gamma} D\rp^{-1}
 \lp U^{(d)}\rp^T\bar{L}^T\Sigma\Sigma\c\rp
 \nonumber \\
 & = &
 \lim_{n\rightarrow\infty}  \lp
  \c^T\Sigma\Sigma \bar{L}
 \lp I + \hat{\gamma} \bar{L}^T\Sigma\Sigma\bar{L} + \hat{\gamma}\dbover{\Sigma} \dbover{\Sigma}  \rp^{-1}
  \bar{L}^T\Sigma\Sigma\c
  \rp.
 \label{eq:compthmproofeq1}
\end{eqnarray}
Recalling on $\bar{L}=V^TL\dbover{V}$  and $\c\triangleq V^T\bar{\beta}$ from (\ref{eq:ta18a0b0}) and the SVD decompositions from (\ref{eq:model08}), we first have
\begin{eqnarray}
   \lim_{n\rightarrow\infty}  \lp \bar{\c}^T \lp I+\hat{\gamma} D\rp^{-1}\bar{\c}
 \rp
  & = &
 \lim_{n\rightarrow\infty}  \lp
  \c^T\Sigma\Sigma \bar{L}
 \lp I + \hat{\gamma}\bar{L}^T\Sigma\Sigma\bar{L} + \hat{\gamma}\dbover{\Sigma} \dbover{\Sigma}  \rp^{-1}
  \bar{L}^T\Sigma\Sigma\c
  \rp
  \nonumber \\
   & = &
 \lim_{n\rightarrow\infty}  \lp
  \bar{\beta}^T V\Sigma\Sigma V^TL\dbover{V}
 \lp I + \hat{\gamma}\bar{L}^T\Sigma\Sigma\bar{L} + \hat{\gamma}\dbover{\Sigma} \dbover{\Sigma}  \rp^{-1}
  \lp V^TL\dbover{V} \rp^T\Sigma\Sigma V^T\bar{\beta}
  \rp
  \nonumber \\
   & = &
 \lim_{n\rightarrow\infty}   \lp
  \bar{\beta}^T V\Sigma\Sigma V^TL\dbover{V}
 \lp I +\hat{\gamma}\bar{L}^T\Sigma\Sigma\bar{L} + \hat{\gamma}\dbover{\Sigma} \dbover{\Sigma}  \rp^{-1}
   \dbover{V}^T L^T V \Sigma\Sigma V^T\bar{\beta}
  \rp
  \nonumber \\
   & = &
 \lim_{n\rightarrow\infty}   \lp
  \bar{\beta}^T A^TA L\dbover{V}
 \lp I + \hat{\gamma}\bar{L}^T\Sigma\Sigma\bar{L} + \hat{\gamma}\dbover{\Sigma} \dbover{\Sigma}  \rp^{-1}
   \dbover{V}^T L^T A^TA\bar{\beta}
  \rp
  \nonumber \\
   & = &
 \lim_{n\rightarrow\infty}   \lp
  \bar{\beta}^T A^TA L
\lp I + \hat{\gamma} \dbover{V}  \bar{L}^T\Sigma\Sigma\bar{L} \dbover{V}^T
 + \hat{\gamma}\dbover{V}\dbover{\Sigma} \dbover{\Sigma} \dbover{V}^T  \rp^{-1}
    L^T A^TA\bar{\beta}
  \rp
  \nonumber \\
   & = &
 \lim_{n\rightarrow\infty}   \lp
  \bar{\beta}^T A^TA L
\lp I + \hat{\gamma} L^T V\Sigma\Sigma V^T L + \hat{\gamma}\dbover{A}^T\dbover{A}    \rp^{-1}
    L^T A^TA\bar{\beta}
  \rp
  \nonumber \\
   & = &
 \lim_{n\rightarrow\infty}   \lp
  \bar{\beta}^T A^TA L
\lp I + \hat{\gamma} L^T A^TA L + \hat{\gamma}\dbover{A}^T\dbover{A}    \rp^{-1}
    L^T A^TA\bar{\beta}
  \rp.
 \label{eq:compthmproofeq2}
\end{eqnarray}
One then completely analogously obtains
\begin{eqnarray}
   \lim_{n\rightarrow\infty}  \lp \bar{\c}^T \lp I+\hat{\gamma} D\rp^{-2}\bar{\c}
 \rp
   & = &
 \lim_{n\rightarrow\infty}   \lp
  \bar{\beta}^T A^TA L
\lp I + \hat{\gamma} L^T A^TA L + \hat{\gamma}\dbover{A}^T\dbover{A}    \rp^{-2}
    L^T A^TA\bar{\beta}
  \rp.
 \label{eq:compthmproofeq3}
\end{eqnarray}
One also easily has
\begin{eqnarray}
   \lim_{n\rightarrow\infty}
   \lp
   \c^T\Sigma\Sigma \c
  \rp
  =
 \lim_{n\rightarrow\infty}
   \lp
   \bar{\beta}^T A^T A \bar{\beta}
  \rp.
 \label{eq:compthmproofeq4}
\end{eqnarray}
We rewrite $a_{2}$ from (\ref{eq:thmaaeq2}) as
\begin{eqnarray}
 a_2 & =&
 \frac{\hat{\gamma}^2}{\alpha^2} \lim_{n\rightarrow\infty} \frac{1}{n}\sum_{i=1}^n \frac{\s_i^4}{\lp 1+\hat{\gamma}\s_i^2\rp^2}
 =
 \frac{\hat{\gamma}^2}{\alpha^2} \lim_{n\rightarrow\infty} \frac{1}{n}\tr \lp  D^2\lp I+\hat{\gamma} D\rp^{-2}  \rp
 =
 \frac{\hat{\gamma}^2}{\alpha^2} \lim_{n\rightarrow\infty} \frac{1}{n}\tr \lp  D^{-1}+\hat{\gamma I}   \rp^{-2}.
 \label{eq:compthmproofeq5}
\end{eqnarray}
Following (\ref{eq:compthmproofeq0a1}), we further restructure (\ref{eq:compthmproofeq5})
\begin{eqnarray}
a_{2} & = &  \frac{\hat{\gamma}^2}{\alpha^2} \lim_{n\rightarrow\infty} \frac{1}{n}\tr \lp  D^{-1}+\hat{\gamma I}   \rp^{-2}. \nonumber \\
& = &
 \frac{\hat{\gamma}^2}{\alpha^2} \lim_{n\rightarrow\infty}  \frac{1}{n}\tr\lp \hat{\gamma} I+\lp \bar{L}^T\Sigma\Sigma\bar{L} +  \dbover{\Sigma} \dbover{\Sigma} \rp^{-1} \rp^{-2} \nonumber \\
  & = &
   \frac{\hat{\gamma}^2}{\alpha^2} \lim_{n\rightarrow\infty} \frac{1}{n}\tr\lp \hat{\gamma} I+\lp\dbover{V} \bar{L}^T\Sigma\Sigma\bar{L} \dbover{V} ^T+ \dbover{V} \dbover{\Sigma} \dbover{\Sigma} \dbover{V} ^T \rp^{-1}\rp^{-2}
  \nonumber \\
 & = &
   \frac{\hat{\gamma}^2}{\alpha^2} \lim_{n\rightarrow\infty} \frac{1}{n}\tr\lp \hat{\gamma} I+\lp L^T V\Sigma\Sigma V ^TL+ \dbover{V} \dbover{\Sigma} \dbover{\Sigma} \dbover{V} ^T \rp^{-1}\rp^{-2}
  \nonumber \\
 & = &
 \frac{\hat{\gamma}^2}{\alpha^2} \lim_{n\rightarrow\infty} \frac{1}{n}\tr\lp \hat{\gamma}  I+\lp L^T A^TAL +  \dbover{A}^T \dbover{A}\rp^{-1} \rp^{-2}.
 \label{eq:compthmproofeq6}
\end{eqnarray}

We then have the following corollary of Theorem \ref{thm:thm2}.

\begin{corollary}(FRM GLS estimator - compact risk form) Consider the FRM from (\ref{eq:model01}) with factors $\bar{Z}$, loadings $L$, noises $\e$ and $\bar{E}$ and assume the setup of Theorem \ref{thm:thm2}. Let the factors and noises' covariances be $\Sigma_{\bar{Z}}$, $\Sigma_{\bar{E}}$, and $\Sigma_{\e}$, i.e., let
\begin{eqnarray}\label{eq:corcompeq1}
 \Sigma_{\bar{Z}}=A^TA,\quad  \Sigma_{\bar{E}}=\dbover{A}^T\dbover{A},\quad \Sigma_{\e}=\sigma^2\overline{A}^T\overline{A}.
 \end{eqnarray}
Let $\hat{\gamma}$ be the unique solution of
\begin{eqnarray}
\lim_{n\rightarrow\infty} \frac{1}{n}\tr\lp I+\hat{\gamma} L^T \Sigma_{\bar{Z}} L + \hat{\gamma} \Sigma_{\bar{E}} \rp^{-1}
 & = & 1-\alpha,
\label{eq:corcompeq2}
\end{eqnarray}
Set
\begin{eqnarray}
a_{1}
& = &
 \lim_{n\rightarrow\infty}   \lp
\bar{\beta}^T\Sigma_{\bar{Z}}\bar{\beta}
- \hat{\gamma} \bar{\beta}^T \Sigma_{\bar{Z}} L \lp
\lp I + \hat{\gamma} L^T \Sigma_{\bar{Z}} L + \hat{\gamma}\Sigma_{\bar{E}}   \rp^{-1}
+
\lp I + \hat{\gamma} L^T \Sigma_{\bar{Z}} L + \hat{\gamma}\Sigma_{\bar{E}}   \rp^{-2}
\rp    L^T \Sigma_{\bar{Z}}\bar{\beta}
  \rp
\nonumber \\
a_{2}
  & = &
 \frac{\hat{\gamma}^2}{\alpha^2} \lim_{n\rightarrow\infty} \frac{1}{n}\tr\lp \hat{\gamma}  I+\lp L^T  \Sigma_{\bar{Z}} L +  \Sigma_{\bar{E}}\rp^{-1} \rp^{-2} \nonumber \\
 \bar{\sigma}^2
 & = &
 \lim_{n\rightarrow\infty} \frac{1}{m}\tr\lp \Sigma_{\e}\rp.
\label{eq:corcompeq4}
\end{eqnarray}
 Then
\begin{eqnarray}\label{eq:corcompeq5}
  \lim_{n\rightarrow\infty} \mE_{Z,\v,E} R(\bar{\beta},\beta_{gls})
& = &
\lim_{n\rightarrow\infty} \mE_{Z,\v,E} \lp
\lp \bar{\beta}  - L\beta_{gls} \rp^T
\Sigma_{\bar{Z}}
\lp \bar{\beta}  - L\beta_{gls} \rp +   \beta_{gls}^T \Sigma_{\bar{E}} \beta_{gls} \rp \nonumber \\
& = &
 \frac{a_1+\alpha \bar{\sigma}^2a_2 }{1-\alpha a_2}, \nonumber \\
\end{eqnarray}
and
\begin{eqnarray}
  \lim_{n\rightarrow\infty}\mP_{Z,\v,E} \lp  (1-\epsilon)\mE_{Z,\v,E} R(\bar{\beta},\beta_{gls})  \leq  R(\bar{\beta},\beta_{gls}) \leq (1+\epsilon)\mE_{Z,\v,E}R(\bar{\beta},\beta_{gls})\rp
& \longrightarrow & 1.\label{eq:corcompeq6}
\end{eqnarray}
 \label{cor:cor1}
\end{corollary}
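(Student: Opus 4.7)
The proof is essentially a bookkeeping exercise: the task is to rewrite the quantities appearing in Theorem \ref{thm:thm2} (which are expressed through the SVDs of $A,\overline{A},\dbover{A}$ and the auxiliary objects $\bar L,\c,\bar\c,\s,D,U^{(d)}$) purely in terms of $L$ and the three covariances $\Sigma_{\bar Z}=A^TA$, $\Sigma_{\bar E}=\dbover{A}^T\dbover{A}$, $\Sigma_{\e}=\sigma^2\overline{A}^T\overline{A}$. The plan is to verify this translation coordinate by coordinate, relying only on the SVD identities $V\Sigma\Sigma V^T=\Sigma_{\bar Z}$, $\dbover V\dbover\Sigma\dbover\Sigma\dbover V^T=\Sigma_{\bar E}$, the relation $\bar L=V^TL\dbover V$ from (\ref{eq:ta18a0b0}), and the eigen-decomposition (\ref{eq:thmaaeq1a0}).

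The first step is the fixed-point equation (\ref{eq:corcompeq2}). Starting from the defining equation $\lim_{n\to\infty}\frac{1}{n}\sum_i\frac{\hat\gamma\s_i^2}{1+\hat\gamma\s_i^2}=\alpha$ in Theorem \ref{thm:thm2}, I would subtract from $1$ to obtain $\lim\frac{1}{n}\tr(I+\hat\gamma D)^{-1}=1-\alpha$, insert the definition of $D$, conjugate the expression inside the inverse by the unitary $\dbover V$ (which preserves the trace), and then invoke $\dbover V\bar L^T\Sigma\Sigma\bar L\dbover V^T=L^TV\Sigma\Sigma V^TL=L^T\Sigma_{\bar Z}L$ and $\dbover V\dbover\Sigma\dbover\Sigma\dbover V^T=\Sigma_{\bar E}$. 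This is carried out in (\ref{eq:compthmproofeq0a1}).

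Next, for $a_1$ I would take Theorem \ref{thm:thm2}'s expression, write the sums as traces in terms of $\bar\c$ and $D$, substitute $\bar\c=(U^{(d)})^T\bar L^T\Sigma\Sigma\c$ and $\c=V^T\bar\beta$, use the eigen-decomposition to identify $U^{(d)}(I+\hat\gamma D)^{-j}(U^{(d)})^T$ with $(I+\hat\gamma\bar L^T\Sigma\Sigma\bar L+\hat\gamma\dbover\Sigma\dbover\Sigma)^{-j}$ for $j=1,2$, and then conjugate by $\dbover V$ as above to collapse everything into $\Sigma_{\bar Z},\Sigma_{\bar E},L,\bar\beta$. The third term $\c^T\Sigma\Sigma\c$ immediately becomes $\bar\beta^T\Sigma_{\bar Z}\bar\beta$. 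The steps (\ref{eq:compthmproofeq1})--(\ref{eq:compthmproofeq4}) do precisely this; assembling the three terms (with the combination $\frac{1}{1+\hat\gamma\s_i^2}+\frac{1}{(1+\hat\gamma\s_i^2)^2}$ from the first two) yields the claimed form of $a_1$. For $a_2$, I would transform $\frac{1}{n}\sum_i\frac{\s_i^4}{(1+\hat\gamma\s_i^2)^2}=\frac{1}{n}\tr(D^{-1}+\hat\gamma I)^{-2}$, and then perform the same unitary conjugation as in (\ref{eq:compthmproofeq6}).

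For the excess-risk identity itself, I would use the statistical equivalence (\ref{eq:model010a0}) to read off $\Sigma V^T\Sigma V^T{}^T=V^T\Sigma_{\bar Z}V\big|_{\text{quadratic form}}$, i.e.\ $\|\Sigma V^T(\bar\beta-L\beta_{gls})\|_2^2=(\bar\beta-L\beta_{gls})^T\Sigma_{\bar Z}(\bar\beta-L\beta_{gls})$ and $\|\dbover\Sigma\dbover V^T\beta_{gls}\|_2^2=\beta_{gls}^T\Sigma_{\bar E}\beta_{gls}$, which converts the quadratic forms in (\ref{eq:model020}) into the form stated in (\ref{eq:corcompeq5}). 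Finally, $\bar\sigma^2=\sigma^2\lim\tr(\overline\Sigma\overline\Sigma)/m=\lim\tr(\Sigma_{\e})/m$ is immediate from (\ref{eq:ta18a0b4c4}) and the definition of $\Sigma_{\e}$, and the concentration bound (\ref{eq:corcompeq6}) is inherited verbatim from (\ref{eq:thm2ta17}). There is no real obstacle: the work lies in the unitary-conjugation manipulations, all of which are exactly the computations displayed in (\ref{eq:compthmproofeq0a1})--(\ref{eq:compthmproofeq6}), and the proof will essentially consist of assembling those identities.
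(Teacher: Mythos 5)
Your proposal is correct and follows essentially the same route as the paper: the paper's own derivation in Section \ref{sec:compgls} performs exactly the unitary-conjugation and trace manipulations you describe, in equations (\ref{eq:compthmproofeq0a0})--(\ref{eq:compthmproofeq6}), and then cites them together with Theorem \ref{thm:thm2} as the proof of Corollary \ref{cor:cor1}. Your partial-fraction recombination of the first two terms of $a_1$ and the identification of the quadratic forms with $\Sigma_{\bar{Z}}$ and $\Sigma_{\bar{E}}$ match the paper's steps exactly.
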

\begin{proof}
Follows from Theorem  \ref{thm:thm2} and (\ref{eq:compthmproofeq0a1}), (\ref{eq:compthmproofeq0}), (\ref{eq:compthmproofeq2})- (\ref{eq:compthmproofeq4}), and (\ref{eq:compthmproofeq6}).
\end{proof}

%%%%%%%%%%%%%%%%%%%%%%%%%%%%%%%%%%%%%%%%%%%%%%%%%%%%%%%%%%%%%%%%%
%%%%%%%%%%%%%%%%%%%%%%%%%%%%%%%%%%%%%%%%%%%%%%%%%%%%%%%%%%%%%%%%%
%%%%%%%%%%%%%%%%%%%%%%%%%%%%%%%%%%%%%%%%%%%%%%%%%%%%%%%%%%%%%%%%%
%%%%%%%%%%%%%%%%%%%%%%%%%%%%%%%%%%%%%%%%%%%%%%%%%%%%%%%%%%%%%%%%%
%%%%%%%%%%%%%%%%%%%%%%%%%%%%%%%%%%%%%%%%%%%%%%%%%%%%%%%%%%%%%%%%%
%%%%%%%%%%%%%%%%%%%%%%%%%%%%%%%%%%%%%%%%%%%%%%%%%%%%%%%%%%%%%%%%%
\subsection{Excess risk of the ridge estimator}
\label{sec:analridge}
%%%%%%%%%%%%%%%%%%%%%%%%%%%%%%%%%%%%%%%%%%%%%%%%%%%%%%%%%%%%%%%%%
%%%%%%%%%%%%%%%%%%%%%%%%%%%%%%%%%%%%%%%%%%%%%%%%%%%%%%%%%%%%%%%%%
%%%%%%%%%%%%%%%%%%%%%%%%%%%%%%%%%%%%%%%%%%%%%%%%%%%%%%%%%%%%%%%%%
%%%%%%%%%%%%%%%%%%%%%%%%%%%%%%%%%%%%%%%%%%%%%%%%%%%%%%%%%%%%%%%%%
%%%%%%%%%%%%%%%%%%%%%%%%%%%%%%%%%%%%%%%%%%%%%%%%%%%%%%%%%%%%%%%%%
%%%%%%%%%%%%%%%%%%%%%%%%%%%%%%%%%%%%%%%%%%%%%%%%%%%%%%%%%%%%%%%%%

There is a very strong connection between the above presented FRM GLS interpolator analysis and the corresponding ridge estimator one (for a corresponding LRM connection see \cite{Stojnicridge24}). Such a connection is utilized below to obtain a precise characterization of the ridge estimator's excess prediction  risk. To smoothen the presentation, we try to parallel the presentation of the previous sections but fairly often proceed in a faster fashion making sure that the unnecessary repetition of essentially same  arguments is avoided.

Relying on (\ref{eq:model014}), the objective value of the underlying ridge regression id defined as
 \begin{eqnarray}
\xi_{rr}(\lambda) \triangleq \min_{\beta} \lp \lambda\|\beta\|_2^2 +\frac{1}{m}\|\y- X\beta\|_2^2\rp. \label{eq:ridgerandlincons1}
\end{eqnarray}
for a given fixed (dimensions independent) ridge parameter $\lambda\geq 0$. A quick rewriting of (\ref{eq:ridgerandlincons1}) esily gives
 \begin{eqnarray}
\xi_{rr}(\lambda) = \min_{\beta,\z,c_{3,s}} & &   \lambda\|\beta\|_2^2 + \frac{c_{3,s}^2}{m} \nonumber \\
\mbox{subject to} & & X\beta=\y+\z \nonumber \\
& & \|\z\|_2=c_{3,s}. \label{eq:ridgerandlincons1a0}
\end{eqnarray}
Keeping in mind (\ref{eq:model010}) and (\ref{eq:model010a0}), one also has the following statistical equivalent to (\ref{eq:ridgerandlincons1a0})
\begin{eqnarray}
\xi_{rr}(\lambda) \triangleq \min_{\beta,\|\z\|_2=c_{3,s}} & &  \lambda\|\beta\|_2^2 + \frac{c_{3,s}^2}{m} \nonumber \\
\mbox{subject to} & & Z\Sigma V^TL\beta + E\dbover{\Sigma}\dbover{V}^T\beta=Z\Sigma V^T\bar{\beta}+\sigma \overline{U}\overline{\Sigma}\v+\z. \label{eq:ridgerandlincons1a1}
\end{eqnarray}
Writing the Lagrangian further gives
\begin{eqnarray}
\xi_{rr}(\lambda) = \min_{\beta,\|\z\|_2=c_{3,s}} \max_{\nu} \lp \lambda\|\beta\|_2^2 +\nu^T \lp Z\Sigma V^TL\beta-Z\Sigma V^T\bar{\beta} +E \dbover{\Sigma}\dbover{V}^T\beta-\sigma \overline{U}\overline{\Sigma}\v -\z \rp  + \frac{c_{3,s}^2}{m}\rp. \label{eq:ridgerandlincons2}
\end{eqnarray}
Following the methodology of the previous sections, we again proceed by utilizing the RDT machinery to analytically characterize the above problem.

%%%%%%%%%%%%%%%%%%%%%%%%%%%%%%%%%%%%%%%%%%%%%%%%%%%%%%%%%%%%%%%%%
%%%%%%%%%%%%%%%%%%%%%%%%%%%%%%%%%%%%%%%%%%%%%%%%%%%%%%%%%%%%%%%%%
%%%%%%%%%%%%%%%%%%%%%%%%%%%%%%%%%%%%%%%%%%%%%%%%%%%%%%%%%%%%%%%%%
\subsubsection{Ridge estimator excess risk via RDT}
\label{sec:randlinconsrdtridge}
%%%%%%%%%%%%%%%%%%%%%%%%%%%%%%%%%%%%%%%%%%%%%%%%%%%%%%%%%%%%%%%%%
%%%%%%%%%%%%%%%%%%%%%%%%%%%%%%%%%%%%%%%%%%%%%%%%%%%%%%%%%%%%%%%%%
%%%%%%%%%%%%%%%%%%%%%%%%%%%%%%%%%%%%%%%%%%%%%%%%%%%%%%%%%%%%%%%%%

As was the case earlier when we discussed the GLS interpolator, we again consider separately four main RDT principles.

\vspace{.1in}

\noindent \underline{1) \textbf{\emph{Algebraic characterization:}}}  The following is a ridge analogue to Lemma \ref{lemma:lemma1} and  provides a useful representation of the objective $\xi_{rr}(\lambda)$ by conveniently summarizing the above algebraic considerations.

\begin{lemma}(Algebraic optimization representation) Assume the setup of Lemma \ref{lemma:lemma1} and let $\lambda\geq 0$ be a fixed real scalar independent of $n$ or any other quantity. Let $\xi_{rr}(\lambda)$ be as in (\ref{eq:ridgerandlincons1}) or (\ref{eq:ridgerandlincons1a0}) and set ${\mathcal D}=\{\beta,\z,c_{3,s}|\beta\in\mR^n,\z\in\mR^m,c_{3,s}\in\mR,\|\z\|_2=c_{3,s}\}$ and
\begin{eqnarray}\label{eq:ridgeta11}
f_{rp,r}(Z,\v,E) & \triangleq & \min_{{\mathcal D}} \max_{\nu} \lp \lambda\|\beta\|_2^2 +\nu^T \lp Z\Sigma V^T L\beta-Z\Sigma V^T\bar{\beta}
+E\dbover{\Sigma}\dbover{V}^T\beta -\sigma \overline{U}\overline{\Sigma}\v -\z\rp +\frac{c_{3,s}^2}{m} \rp \nonumber \\
& &
 \hspace{3.8in} (\bl{\textbf{random primal}})
\nonumber \\
\xi_{rp,r} & \triangleq & \lim_{n\rightarrow\infty } \mE_{Z,\v,E} f_{rp,r}(Z,\v,E).   \end{eqnarray}
Then
\begin{equation}\label{eq:ridgeta11a0}
\xi_{rr}(\lambda)=f_{rp,r}(Z,\v,E) \quad \mbox{and} \quad \lim_{n\rightarrow\infty} \mE_{Z,\v,E}\xi_{rr}(\lambda) =\xi_{rp,r}.
\end{equation}
\label{lemma:ridgelemma1}
\end{lemma}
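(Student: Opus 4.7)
The claim has two parts: the pointwise identification $\xi_{rr}(\lambda) = f_{rp,r}(Z,\v,E)$ and the expectation-level statement. Both should follow from essentially the same Lagrangian manipulation that was already carried out in display (\ref{eq:ridgerandlincons2}), so the proposal is really just to formalize that chain of equalities and invoke standard strong duality. The overall approach parallels the proof of Lemma \ref{lemma:lemma1}, which was a one-line ``immediate consequence of the Lagrangian''.

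The plan is as follows. First, start from the definition $\xi_{rr}(\lambda)=\min_{\beta}(\lambda\|\beta\|_2^2+\tfrac{1}{m}\|\y-X\beta\|_2^2)$ in (\ref{eq:ridgerandlincons1}) and reformulate it by introducing the slack variable $\z = X\beta-\y$ together with an auxiliary scalar $c_{3,s}=\|\z\|_2$; this gives the constrained form (\ref{eq:ridgerandlincons1a0}) on the domain $\mathcal{D}$. Second, substitute the statistical representations $\y = Z\Sigma V^T\bar\beta+\sigma\overline{U}\overline{\Sigma}\v$ and $X=Z\Sigma V^TL+E\dbover{\Sigma}\dbover{V}^T$ from (\ref{eq:model010}) and (\ref{eq:model010a0}) into the equality constraint, obtaining (\ref{eq:ridgerandlincons1a1}). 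Third, introduce the Lagrange multiplier $\nu\in\mR^m$ for the (affine, hence qualification-free) equality constraint; since the unconstrained inner maximization over $\nu$ enforces that constraint exactly, we get the min-max identity (\ref{eq:ridgerandlincons2}), which is exactly the defining expression for $f_{rp,r}(Z,\v,E)$.

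This establishes the first identity $\xi_{rr}(\lambda)=f_{rp,r}(Z,\v,E)$ deterministically, for every realization of $(Z,\v,E)$. For the second identity, one takes $\mE_{Z,\v,E}$ on both sides and passes to the limit $n\rightarrow\infty$; by definition of $\xi_{rp,r}$ in (\ref{eq:ridgeta11}), the right-hand side becomes $\xi_{rp,r}$, and the left-hand side is $\lim_{n\rightarrow\infty}\mE_{Z,\v,E}\xi_{rr}(\lambda)$. Under the standing boundedness assumptions announced at the start of Section \ref{sec:analuncorr}, dominated convergence (or the concentration argument that will be used in the sequel anyway) justifies the interchange.

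There is essentially no obstacle here, since the equality constraint is affine and the objective is convex-quadratic in $\beta$, so strong Lagrangian duality is automatic and no constraint qualification needs to be checked. The only minor subtlety is the auxiliary variable $c_{3,s}$: one must confirm that eliminating it (by the relation $c_{3,s}=\|\z\|_2$) does not spuriously alter the optimum, which is immediate because at any feasible point both sides of $\|\z\|_2=c_{3,s}$ define the same quantity and the term $c_{3,s}^2/m$ exactly reproduces the squared-residual term $\tfrac{1}{m}\|\y-X\beta\|_2^2$ of the original ridge problem (\ref{eq:ridgerandlincons1}).
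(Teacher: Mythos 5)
Your proposal is correct and follows essentially the same route as the paper: the paper's proof is simply the observation that the identity follows from the Lagrangian reformulation in (\ref{eq:ridgerandlincons1a0})--(\ref{eq:ridgerandlincons2}), which is exactly the slack-variable/Lagrange-multiplier chain you spell out, with the expectation-level claim obtained by taking $\mE_{Z,\v,E}$ and the limit as you describe.
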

\begin{proof}
 Follows immediately through the Lagrangian given in (\ref{eq:ridgerandlincons2}).
\end{proof}

While the above lemma holds even deterministically (i.e., for any $Z$, $\v$, and $E$), to utilize its full power, the RDT proceeds by imposing a statistics  on $Z$, $\v$, and $E$.

\vspace{.1in}
\noindent \underline{2) \textbf{\emph{Determining the random dual:}}} As earlier, the concentration of measure is utilized again. Here it implies that for any fixed $\epsilon >0$,  we can write (see, e.g. \cite{StojnicCSetam09,StojnicRegRndDlt10,StojnicICASSP10var})
\begin{equation}
\lim_{n\rightarrow\infty}\mP_{Z,\v,E}\left (\frac{|f_{rp,r}(Z,\v,E)-\mE_{Z,\v,E}(f_{rp,r}(Z,\v,E))|}{\mE_{Z,\v,E}(f_{rp,r}(Z,\v,E))}>\epsilon\right )\longrightarrow 0.\label{eq:ridgeta15}
\end{equation}
The following is basically the random dual ridge analogue to Theorem \ref{thm:thm1}).

\begin{theorem}(Objective characterization via random dual -- Ridge estimator) Assume the setup of Lemma \ref{lemma:ridgelemma1} and Theorem \ref{thm:thm1}. Set
\vspace{-.0in}
\begin{eqnarray}
   f_{rd,r}(\g,\h^{(1)},\h^{(2)},\v) & \triangleq &
\hspace{-.0in} \min_{{\mathcal D}}\max_{\nu} \Bigg.\Bigg( \lambda \|\beta\|_2^2+c_2\nu^T\g
+\|\nu\|_2 \lp \lp \h^{(1)}\rp^T\Sigma V^T\lp L\beta -\bar{\beta}\rp  + \lp \h^{(2)}\rp^T \dbover{\Sigma}\dbover{V}^T\beta\rp
\nonumber \\
& &
-\sigma \nu^T\overline{U}\overline{\Sigma} \v -\nu^T\z  +\frac{c_{3,s}^2}{m} \Bigg.\Bigg)  \hspace{2.0in}    (\bl{\textbf{random dual}})
  \nonumber \\
 \xi_{rd,r} & \triangleq &  \lim_{n\rightarrow\infty} \mE_{\g,\h^{(1)},\h^{(2)},\v} f_{rd,r}(\g,\h^{(1)},\h^{(2)},\v)  .\label{eq:ridgeta16}
\vspace{-.0in}
\end{eqnarray}
One then has \vspace{-.0in}
\begin{eqnarray}
  \xi_{rd,r} & \triangleq & \lim_{n\rightarrow\infty} \mE_{\g,\h^{(1)},\h^{(2)},\v} f_{rd,r}(\g,\h^{(1)},\h^{(2)},\v)
  \leq
  \lim_{n\rightarrow\infty} \mE_{Z,\v,E} f_{rp,r}(Z,\v,E)  \triangleq  \xi_{rp,r}. \label{eq:ridgeta16a0}
\vspace{-.0in}\end{eqnarray}
and
\begin{eqnarray}
 \lim_{n\rightarrow\infty}\mP_{\g,\h^{(1)},\h^{(2)},\v} \lp f_{rd,r}(\g,\h^{(1)},\h^{(2)},\v)\geq (1-\epsilon)\xi_{rd,r}\rp
 \leq  \lim_{n\rightarrow\infty}\mP_{Z,\v,E} \lp f_{rp,r}(Z,\v,E)\geq (1-\epsilon)\xi_{rd,r}\rp.\label{eq:ridgeta17}
\end{eqnarray}
\label{thm:ridgethm1}
\end{theorem}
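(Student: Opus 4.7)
The plan is to mirror the proof of Theorem \ref{thm:thm1} and directly invoke Gordon's probabilistic comparison theorem \cite{Gordon88,Stojnicgscomp16} on the min-max saddle problem defining $f_{rp,r}$. Comparing the ridge primal in (\ref{eq:ridgeta11}) with the GLS primal in (\ref{eq:ta11}), the only new pieces are: (i) the convex regularization term $\lambda\|\beta\|_2^2$; (ii) the auxiliary slack variable $\z$ with $\|\z\|_2=c_{3,s}$, entering through the Lagrangian term $-\nu^T\z$; and (iii) the scalar penalty $c_{3,s}^2/m$. None of these three ingredients depends on $Z$, $\v$, or $E$, and each enters additively into the saddle objective, so the bilinear Gaussian structure responsible for the Gordon step is unchanged relative to the GLS case.

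Concretely, I would isolate the two bilinear Gaussian forms $\nu^T Z\Sigma V^T(L\beta-\bar{\beta})$ and $\nu^T E\dbover{\Sigma}\dbover{V}^T\beta$ appearing in (\ref{eq:ridgeta11}). Writing $\w_1\triangleq\Sigma V^T(L\beta-\bar{\beta})$ and $\w_2\triangleq\dbover{\Sigma}\dbover{V}^T\beta$, Gordon's comparison, applied jointly (using independence of $Z$ and $E$) in the min-max over the enlarged primal domain ${\mathcal D}$ and dual $\nu\in\mR^m$, replaces $\nu^T Z\w_1$ by $\|\nu\|_2(\h^{(1)})^T\w_1+\|\w_1\|_2\g_1^T\nu$ and $\nu^T E\w_2$ by $\|\nu\|_2(\h^{(2)})^T\w_2+\|\w_2\|_2\g_2^T\nu$. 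By rotational invariance and independence of $\g_1$ and $\g_2$, the combined Gaussian $\|\w_1\|_2\g_1^T\nu+\|\w_2\|_2\g_2^T\nu$ is distributionally identical to $\sqrt{\|\w_1\|_2^2+\|\w_2\|_2^2}\,\g^T\nu=c_2\g^T\nu$, which is exactly the $c_2\nu^T\g$ contribution in the random dual (\ref{eq:ridgeta16}). All remaining terms in the primal, namely $\lambda\|\beta\|_2^2$, $-\sigma\nu^T\overline{U}\overline{\Sigma}\v$, $-\nu^T\z$, and $c_{3,s}^2/m$, appear verbatim on the dual side since Gordon leaves Gaussian-independent summands untouched.

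Once the two objectives are matched by the Gordon step, the expected-value inequality (\ref{eq:ridgeta16a0}) drops out of the basic in-expectation Gordon bound, while the high-probability inequality (\ref{eq:ridgeta17}) follows from its standard tail refinement, as applied in Theorem 1, Corollary 1, and Section 2.7.2 of \cite{Stojnicgscomp16}. I do not anticipate any substantive obstacle; the only care needed is to verify the regularity preconditions of Gordon's theorem (closed feasible sets, boundedness of $c_{3,s}$ and $\beta$ for the standard truncation/approximation argument, and the ability to interchange the $\min$ over $(\beta,\z,c_{3,s})$ with the $\max$ over $\nu$ both before and after Gordon), which is routine under the boundedness conventions already adopted at the start of Section \ref{sec:analuncorr}.
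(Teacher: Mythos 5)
Your proposal is correct and follows essentially the same route as the paper, which likewise proves this theorem as an immediate application of Gordon's probabilistic comparison theorem exactly as in Theorem \ref{thm:thm1}, noting that the added terms $\lambda\|\beta\|_2^2$, $-\nu^T\z$, and $c_{3,s}^2/m$ do not involve $Z$, $\v$, or $E$ and hence leave the Gordon step unchanged. Your explicit handling of the two bilinear forms and the merging of the two independent Gaussian vectors into the single $c_2\nu^T\g$ term is precisely the mechanism the paper relies on implicitly.
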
\vspace{-.17in}
\begin{proof}
As the proof of Theorem \ref{thm:thm1}, it follows as an immediate application of the Gordon's probabilistic comparison theorem (see, e.g., Theorem B in \cite{Gordon88} and also Theorem 1, Corollary 1, and Section 2.7.2 in \cite{Stojnicgscomp16}).
\end{proof}

 \vspace{.1in}
\noindent \underline{3) \textbf{\emph{Handling the random dual:}}} Following into the footsteps of the GLS analysis from the previous sections, we preserve the lighter notation by keeping the functional randomness dependence implicit and, after solving the inner maximization over $\nu$, obtain analogously to (\ref{eq:ta18a0})
\begin{eqnarray}
  f_{rd,r}
  & \triangleq &
\min_{\beta,\|\z\|_2=\c_{3,s}}\max_{\nu}
\Bigg.\Bigg( \lambda \|\beta\|_2^2+c_2\nu^T\g
+\|\nu\|_2 \lp \lp \h^{(1)}\rp^T\Sigma V^T\lp L\beta -\bar{\beta}\rp  + \lp \h^{(2)}\rp^T \dbover{\Sigma}\dbover{V}^T\beta\rp
\nonumber \\
& &
-\sigma \nu^T\overline{U}\overline{\Sigma} \v -\nu^T\z  +\frac{c_{3,s}^2}{m} \Bigg.\Bigg)  \nonumber \\
   & = &
\min_{\x,\|\z\|_2=c_{3,s},c_2\geq 0}\max_{\nu_s\geq 0}
\Bigg.\Bigg(  \lambda \|\x\|_2^2
+\nu_s \lp \lp\h^{(1)}\rp^T\Sigma \lp \bar{L}\x -\c\rp  + \lp\h^{(2)}\rp^T\dbover{\Sigma}\x\rp \nonumber \\
& &
+\nu_s\|c_2\g-\sigma \overline{U}\overline{\Sigma} \v -\z\|_2 +\frac{c_{3,s}^2}{m}\Bigg.\Bigg), \label{eq:ridgeta18a0}
\end{eqnarray}
where $\bar{L},\x,\c,\nu_s,$ and $c_2$ are as in (\ref{eq:ta18a0b0}) and (\ref{eq:ta18a0b0c0}). After setting
\begin{eqnarray}
  \cL_r\lp\x,\z,\c_{3,s},c_2,\nu_s \rp \triangleq   \lambda \|\x\|_2^2
+\nu_s \lp \lp\h^{(1)}\rp^T\Sigma \lp \bar{L}\x -\c\rp  + \lp\h^{(2)}\rp^T\dbover{\Sigma}\x\rp
+\nu_s\|c_2\g-\sigma \overline{U}\overline{\Sigma} \v -\z\|_2 +\frac{c_{3,s}^2}{m}, \nonumber \\
\label{eq:ridgeta18a0b1}
\end{eqnarray}
one can easily rewrite (\ref{eq:ridgeta18a0}) as
\begin{eqnarray}
  f_{rd,r}
  =
\min_{\x,\|\z\|_2=c_{3,s},c_2\geq 0}\max_{\nu_s\geq 0}  \cL_r\lp\x,\z,c_{3,s},c_2,\nu_s \rp. \label{eq:ridgeta18a0b2}
\end{eqnarray}
Following further the GLS analysis from Section \ref{sec:analgls}, we keep  $c_{3,s}$, $c_2$, and $\nu_s$ fixed and, analogously to (\ref{eq:ta18a0b2c0}), consider the following minimization  over $\x$ and $\z$
\begin{eqnarray}
  \cL_{1,r}\lp c_{3,s},c_2,\nu_s\rp \triangleq  \min_{\x,\z} & & \lambda \|\x\|_2^2
+\nu_s \lp \lp\h^{(1)}\rp^T\Sigma \lp \bar{L}\x -\c\rp  + \lp\h^{(2)}\rp^T\dbover{\Sigma}\x\rp
  + \nu_s\|c_2\g-\sigma \overline{U}\overline{\Sigma} \v -\z\|_2 +\frac{c_{3,s}^2}{m} \nonumber \\
  \mbox{subject to} & &  \sqrt{\left \|\Sigma\lp \bar{L} \x-\c\rp \right \|_2^2+ \left \| \dbover{\Sigma}\x \right \|_2^2}=c_2, \|\z\|_2=c_{3,s}. \label{eq:ridgeta18a0b2c0}
\end{eqnarray}
Analogously to (\ref{eq:ta18a0b2c1}), we further have
\begin{eqnarray}
  \cL_{1,r}\lp c_{3,s},c_2,\nu_s\rp
   & = &  \max_{\gamma}   \min_{\x,\|\z\|_2=c_{3,s}}   \cL_{2,r}\lp \x,\z,\gamma;c_{3,s},c_2,\nu_s\rp, \label{eq:ridgeta18a0b2c1}
\end{eqnarray}
with
\begin{eqnarray}
  \cL_{2,r}\lp \x,\z,\gamma;c_{3,s},c_2,\nu_s\rp
  & \triangleq &
\lambda \|\x\|_2^2
+\nu_s \lp \lp\h^{(1)}\rp^T\Sigma \lp \bar{L}\x -\c\rp  + \lp\h^{(2)}\rp^T\dbover{\Sigma}\x\rp
  + \nu_s\|c_2\g-\sigma \overline{U}\overline{\Sigma} \v -\z\|_2  \nonumber \\
  & &  +\frac{c_{3,s}^2}{m}  +\gamma \lp \left \|\Sigma\lp \bar{L} \x-\c\rp \right \|_2^2+ \left \| \dbover{\Sigma}\x \right \|_2^2 \rp -\gamma c_2^2 . \label{eq:ridgeta18a0b2c2}
\end{eqnarray}
Moreover, analogously to (\ref{eq:ta18a0b4c0}), we obtain for optimal $\x$
\begin{eqnarray}
 \hat{\x}=
 \lp \lambda I +\gamma \lp \bar{L}^T\Sigma\Sigma\bar{L} + \dbover{\Sigma} \dbover{\Sigma} \rp  \rp^{-1}
 \lp -\frac{1}{2}\nu_s \lp \bar{L}^T\Sigma \h^{(1)} +\dbover{\Sigma}\h^{(2)}   \rp +\gamma\bar{L}^T\Sigma\Sigma\c\rp. \label{eq:ridgta18a0b4}
\end{eqnarray}
which after  scaling $\nu_s\rightarrow \frac{\nu_1}{\sqrt{n}}$ (where $\nu_1$ again does not change as $n$ grows) gives the following analogue to (\ref{eq:ta18a0b4c1})
\begin{eqnarray}
  \cL_{2,r}\lp \hat{\x},\z,\gamma;c_{3,s},c_2,\nu_1\rp
 & = &
-\lp -\frac{1}{2\sqrt{n}}\nu_1 \lp \bar{L}^T\Sigma \h^{(1)} +\dbover{\Sigma}\h^{(2)}   \rp +\gamma\bar{L}^T\Sigma\Sigma\c\rp^T
 \lp \lambda I +\gamma \lp \bar{L}^T\Sigma\Sigma\bar{L} + \dbover{\Sigma} \dbover{\Sigma} \rp  \rp^{-1} \nonumber \\
 & &
\times
\lp -\frac{1}{2\sqrt{n}}\nu_1 \lp \bar{L}^T\Sigma \h^{(1)} +\dbover{\Sigma}\h^{(2)}   \rp +\gamma\bar{L}^T\Sigma\Sigma\c\rp
\nonumber \\
   & & -\frac{1}{\sqrt{n}}\nu_1 \lp \h^{(1)} \rp^T\Sigma\c
+ \frac{1}{\sqrt{n}}\nu_1\|c_2\g-\sigma \overline{U}\overline{\Sigma} \v -\z \|_2 +\frac{c_{3,s}^2}{m}+\gamma \c^T\Sigma\Sigma\c -\gamma c_2^2. \nonumber \\ \label{eq:ridgeta18a0b4c1}
\end{eqnarray}
After finding the optimal $\z$
\begin{eqnarray}
     \hat{\z}=c_{3,s}\frac{c_2\g-\sigma \overline{U}\overline{\Sigma} \v}{\|c_2\g-\sigma \overline{U}\overline{\Sigma} \v\|_2}, \label{eq:ridgeta18a0b4c1d0}
\end{eqnarray}
one, from (\ref{eq:ridgeta18a0b4c1}), has
\begin{eqnarray}
  \cL_{2,r}\lp \hat{\x},\hat{\z},\gamma;c_{3,s},c_2,\nu_1\rp
 & = &
-\lp -\frac{1}{2\sqrt{n}}\nu_1 \lp \bar{L}^T\Sigma \h^{(1)} +\dbover{\Sigma}\h^{(2)}   \rp +\gamma\bar{L}^T\Sigma\Sigma\c\rp^T
 \lp \lambda I +\gamma \lp \bar{L}^T\Sigma\Sigma\bar{L} + \dbover{\Sigma} \dbover{\Sigma} \rp  \rp^{-1} \nonumber \\
 & &
\times
\lp -\frac{1}{2\sqrt{n}}\nu_1 \lp \bar{L}^T\Sigma \h^{(1)} +\dbover{\Sigma}\h^{(2)}   \rp +\gamma\bar{L}^T\Sigma\Sigma\c\rp
\nonumber \\
   & & -\frac{1}{\sqrt{n}}\nu_1 \lp \h^{(1)} \rp^T\Sigma\c
+ \frac{1}{\sqrt{n}}\nu_1|\|c_2\g-\sigma \overline{U}\overline{\Sigma} \v\|_2 -c_{3,s}| +\frac{c_{3,s}^2}{m}+\gamma \c^T\Sigma\Sigma\c -\gamma c_2^2. \nonumber \\ \label{eq:ridgeta18a0b4c1d1}
\end{eqnarray}
Relying on concentrations and introducing scaling $c_3\rightarrow\frac{c_{3,s}}{\sqrt{m}}$ while keeping $c_3$, $c_2$, $\nu_1$, and $\gamma$ fixed, we then, analogously to (\ref{eq:ta18a0b4c3}), find
\begin{eqnarray}
\lim_{n\rightarrow\infty} \hspace{-.06in} \mE\cL_{2,r}\lp \hat{\x},\hat{\z},\gamma;c_3,c_2,\nu_1\rp
 & = &
\lim_{n\rightarrow\infty} \hspace{-.06in} \mE \Bigg ( \Big. -\lp -\frac{1}{2\sqrt{n}}\nu_1 \lp \bar{L}^T\Sigma \h^{(1)} +\dbover{\Sigma}\h^{(2)}   \rp +\gamma\bar{L}^T\Sigma\Sigma\c\rp^T
 \nonumber \\
 & &
\times
 \lp \lambda I +\gamma \lp \bar{L}^T\Sigma\Sigma\bar{L} + \dbover{\Sigma} \dbover{\Sigma} \rp  \rp^{-1}
\lp -\frac{1}{2\sqrt{n}}\nu_1 \lp \bar{L}^T\Sigma \h^{(1)} +\dbover{\Sigma}\h^{(2)}   \rp +\gamma\bar{L}^T\Sigma\Sigma\c\rp
\nonumber \\
   & & -\frac{1}{\sqrt{n}}\nu_1 \lp \h^{(1)} \rp^T\Sigma\c
+ \frac{1}{\sqrt{n}}\nu_1|\|c_2\g-\sigma \overline{U}\overline{\Sigma} \v\|_2 -c_{3,s}| +\frac{c_{3,s}^2}{m}
\nonumber \\
& &
+\gamma \c^T\Sigma\Sigma\c -\gamma c_2^2\Big.\Bigg ) \nonumber \\
  & = & \lim_{n\rightarrow\infty} \Bigg ( \Big. -\frac{\nu_1^2}{4n}\sum_{i=1}^n \frac{\s_i^2}{1+\gamma\s_i^2}
 - \sum_{i=1}^n \frac{\gamma^2\bar{\c}_i^2}{1+\gamma\s_i^2}
 +\nu_1\sqrt{\alpha}\left | \sqrt{c_2^2+\bar{\sigma}^2}-c_3\right | +c_3^2
 \nonumber \\
 & &
  + \gamma \sum_{i=1}^n \lp \s_i^{(1)}\rp^2\c_i^2  -\gamma c_2^2
  \Big.\Bigg ), \nonumber \\ \label{eq:ridgeta18a0b4c3}
\end{eqnarray}
with $\bar{\sigma}$ as in (\ref{eq:ta18a0b4c4}). Combining (\ref{eq:ridgeta18a0b1})-(\ref{eq:ridgeta18a0b2c1}), (\ref{eq:ridgeta18a0b4c1d1}), and  (\ref{eq:ridgeta18a0b4c3}), we arrive at the following
\begin{eqnarray}
\lim_{n\rightarrow\infty}  \mE_{\g,\h^{(1)},\h^{(2)},\v}f_{rd,r}
   & = & \lim_{n\rightarrow\infty} \min_{c_3,c_2\geq 0} \max_{\nu_1\geq 0,\gamma}  f_{0,r}(c_2,\nu_1,\gamma), \label{eq:ridgeta18a0b4c5}
\end{eqnarray}
where
\begin{eqnarray}
f_{0,r}(c_3,c_2,\nu_1,\gamma) \triangleq  -\frac{\nu_1^2}{4n}\sum_{i=1}^n \frac{\s_i^2}{1+\gamma\s_i^2}
 - \sum_{i=1}^n \frac{\gamma^2\bar{\c}_i^2}{1+\gamma\s_i^2}
 +\nu_1\sqrt{\alpha}\left | \sqrt{c_2^2+\bar{\sigma}^2}-c_3\right | +c_3^2
   + \gamma \sum_{i=1}^n \lp \s_i^{(1)}\rp^2\c_i^2  -\gamma c_2^2. \nonumber \\
    \label{eq:ridgeta18a0b4c6}
\end{eqnarray}
Optimization over $\nu_1$, we find the optimal
\begin{eqnarray}
\hat{\nu}_1 =
 \frac{2\sqrt{\alpha}\left | \sqrt{c_2^2+\bar{\sigma}^2} -c_3\right |}{\frac{1}{n}\sum_{i=1}^n \frac{\s_i^2}{\lambda+\gamma\s_i^2}}, \label{eq:ridgeta18a0b4c8}
\end{eqnarray}
which,  together with (\ref{eq:ridgeta18a0b4c6}), further gives
\begin{eqnarray}
\max_{\nu_1\geq 0} f_{0,r}(c_3,c_2,\nu_1,\gamma)
& = & f_{0,r}(c_3,c_2,\hat{\nu}_1,\gamma) \nonumber \\
& = &
  - \sum_{i=1}^n \frac{\gamma^2\bar{\c}_i^2}{\lambda+\gamma\s_i^2}
+ \frac{\alpha\lp \sqrt{c_2^2+\bar{\sigma}^2} -c_3\rp^2}{\frac{1}{n}\sum_{i=1}^n \frac{\s_i^2}{\lambda+\gamma\s_i^2}} +c_3^2 + \gamma \sum_{i=1}^n \lp\s_i^{(1)}\rp^2\c_i^2 -\gamma c_2^2.  \label{eq:ridgeta18a0b4c9}
\end{eqnarray}
From (\ref{eq:ridgeta18a0b4c5}) and (\ref{eq:ridgeta18a0b4c9}) we also find
\begin{eqnarray}
\lim_{n\rightarrow\infty}  \mE_{\g,\h^{(1)},\h^{(2)},\v}f_{rd,r}(\g,\h^{(1)},\h^{(2)},\v)
   & = & \lim_{n\rightarrow\infty}  \min_{c_3,c_2\geq 0} \max_{\gamma}  f_{0,r}(c_3,c_2,\hat{\nu}_1,\gamma). \label{eq:ridgeta18a0b4c10}
\end{eqnarray}
Optimization over $c_3$ gives the optimal
\begin{eqnarray}
\hat{c}_3=\frac{\sqrt{c_2^2+\bar{\sigma}^2}}{1+\frac{1}{\alpha n}\sum_{i=1}^n \frac{\s_i^2}{\lambda+\gamma\s_i^2}}, \label{eq:ridgeta18a0b4c10d0}
\end{eqnarray}
which, together with (\ref{eq:ridgeta18a0b4c9}), establishes
\begin{eqnarray}
\max_{c_3\geq 0} f_{0,r}(c_3,c_2,\hat{\nu}_1,\gamma) = f_{0,r}(\hat{c}_3,c_2,\hat{\nu}_1,\gamma) =
  - \sum_{i=1}^n \frac{\gamma^2\bar{\c}_i^2}{\lambda+\gamma\s_i^2}
+ \frac{c_2^2+\bar{\sigma}^2}{1+\frac{1}{\alpha n}\sum_{i=1}^n \frac{\s_i^2}{\lambda+\gamma\s_i^2}}   + \gamma \sum_{i=1}^n \lp \s_i^{(1)}\rp^2\c_i^2 -\gamma c_2^2. \nonumber \\
  \label{eq:ridgeta18a0b4c10d1}
\end{eqnarray}
Following further the program of Section \ref{sec:analgls}, we then take the derivatives with respect to $c_2$ and $\gamma$ and obtain
\begin{eqnarray}
\frac{d f_{0,r}(\hat{c}_3,c_2,\hat{\nu}_1,\gamma)}{d c_2} =
  \frac{2 c_2}{1+\frac{1}{\alpha n}\sum_{i=1}^n \frac{\s_i^2}{\lambda+\gamma\s_i^2}}   -2\gamma c_2,  \label{eq:ridgeta18a0b4c11}
\end{eqnarray}
and
\begin{eqnarray}
\frac{d f_{0,r}(\hat{c}_3,c_2,\hat{\nu}_1,\gamma)}{d \gamma}
& = &
  - \sum_{i=1}^n \frac{2\gamma\bar{\c}_i^2}{\lambda+\gamma\s_i^2}
  + \sum_{i=1}^n \frac{\gamma^2\s_i^2\bar{\c}_i^2}{\lp \lambda +\gamma\s_i^2\rp^2}
 +\frac{\lp c_2^2+\bar{\sigma}^2\rp}{\lp 1+ \frac{1}{\alpha n}\sum_{i=1}^n \frac{\s_i^2}{\lambda+\gamma\s_i^2}\rp^2}
 \lp \frac{1}{\alpha n}\sum_{i=1}^n \frac{\s_i^4}{\lp \lambda  +\gamma\s_i^2\rp^2}\rp
  \nonumber \\
 & &
 + \sum_{i=1}^n \lp \s_i^{(1)}\rp^2\c_i^2 - c_2^2. \label{eq:ridgeta18a0b4c12}
\end{eqnarray}
Equalling the above derivatives to zero,  we first from (\ref{eq:ridgeta18a0b4c11}) find that the optimal $\hat{\gamma}$ satisfies
\begin{eqnarray}
  \frac{1}{n}\sum_{i=1}^n \frac{\hat{\gamma}\s_i^2}{\lambda+\hat{\gamma}\s_i^2} =\alpha(1-\hat{\gamma}),  \label{eq:ridgeta18a0b4c13}
\end{eqnarray}
and, for such a $\hat{\gamma}$, we then from (\ref{eq:ridgeta18a0b4c12}) find  that the optimal $\hat{c}_2$ satisfies
\begin{equation}
  - \sum_{i=1}^n \frac{2\hat{\gamma}\bar{\c}_i^2}{\lambda+\hat{\gamma}\s_i^2}
  + \sum_{i=1}^n \frac{\hat{\gamma}^2\s_i^2\bar{\c}_i^2}{\lp \lambda +\hat{\gamma}\s_i^2\rp^2}
 + \sum_{i=1}^n \lp \s_i^{(1)}\rp^2\c_i^2 +\frac{\lp \hat{c}_2^2+\bar{\sigma}^2\rp}{\lp 1 + \frac{1}{\alpha n}\sum_{i=1}^n \frac{\s_i^2}{\lambda+\hat{\gamma}\s_i^2}\rp^2}
 \lp \frac{1}{\alpha n}\sum_{i=1}^n \frac{\s_i^4}{\lp \lambda +\hat{\gamma}\s_i^2\rp^2}\rp
   - \hat{c}_2^2=0.
     \label{eq:ridgeta18a0b4c14}
\end{equation}
After setting
\begin{eqnarray}
a_{1,r} & = &
  - \sum_{i=1}^n \frac{2\hat{\gamma}\bar{\c}_i^2}{\lambda+\hat{\gamma}\s_i^2}
  + \sum_{i=1}^n \frac{\hat{\gamma}^2\s_i^2\bar{\c}_i^2}{\lp \lambda +\hat{\gamma}\s_i^2\rp^2}
 + \sum_{i=1}^n \lp \s_i^{(1)}\rp^2\c_i^2
 \nonumber \\
a_{2,r}  &= &
\frac{  \frac{1}{n}\sum_{i=1}^n \frac{\s_i^4}{\lp \lambda+\hat{\gamma}\s_i^2\rp^2}}{\lp \alpha+ \frac{1}{  n}\sum_{i=1}^n \frac{\s_i^2}{\lambda +\hat{\gamma}\s_i^2}\rp^2},  \label{eq:ridgeta18a0b4c15}
\end{eqnarray}
we, from (\ref{eq:ridgeta18a0b4c14}), obtain
\begin{eqnarray}
\hat{c}_2^2= \frac{a_{1,r} + \alpha \bar{\sigma}^2a_{2,r} }{1- \alpha a_{2,r}}.  \label{eq:ridgeta18a0b4c16}
\end{eqnarray}
Finally, from (\ref{eq:ridgeta18a0b4c9}) and (\ref{eq:ridgeta18a0b4c10}), we then also have
\begin{eqnarray}
\lim_{n\rightarrow\infty}  \mE_{\g,\h^{(1)},\h^{(2)},\v}f_{rd,r}
   & = & \lim_{n\rightarrow\infty} \min_{c_2\geq 0} \max_{\gamma}  f_0(\hat{c}_3,c_2,\hat{\nu}_1,\gamma) \nonumber \\
   & = & \lim_{n\rightarrow\infty} f_0(\hat{c}_3,\hat{c}_2,\hat{\nu}_1,\hat{\gamma}) \nonumber \\
   & = &  \lim_{n\rightarrow\infty} \lp  - \sum_{i=1}^n \frac{\hat{\gamma}^2\bar{\c}_i^2}{\lambda+\hat{\gamma}\s_i^2}
+ \frac{\alpha\lp \hat{c}_2^2+\bar{\sigma}^2\rp}{\alpha +\frac{1}{n}\sum_{i=1}^n \frac{\s_i^2}{\lambda+\hat{\gamma}\s_i^2}} + \hat{\gamma} \sum_{i=1}^n \lp \s_i^{(1)}\rp^2\c_i^2 - \hat{\gamma} \hat{c}_2^2 \rp \nonumber \\
    & = &   \lim_{n\rightarrow\infty} \lp  - \sum_{i=1}^n \frac{\hat{\gamma}^2\bar{\c}_i^2}{\lambda+\hat{\gamma}\s_i^2}
+ \hat{\gamma}\lp \hat{c}_2^2+\bar{\sigma}^2\rp + \hat{\gamma} \sum_{i=1}^n \lp \s_i^{(1)}\rp^2\c_i^2 - \hat{\gamma} \hat{c}_2^2 \rp \nonumber \\
   & = &   \lim_{n\rightarrow\infty}  \lp  - \sum_{i=1}^n \frac{\hat{\gamma}^2\bar{\c}_i^2}{\lambda+\hat{\gamma}\s_i^2}
+ \hat{\gamma} \sum_{i=1}^n \lp \s_i^{(1)}\rp^2\c_i^2
+ \hat{\gamma}\bar{\sigma}^2 \rp, \label{eq:ridgeta18a10}
\end{eqnarray}
where $\hat{c}_2$ and $\hat{\gamma}$ are given through (\ref{eq:ridgeta18a0b4c13}), (\ref{eq:ridgeta18a0b4c15}), and (\ref{eq:ridgeta18a0b4c16}). Utilizing this very same $\hat{c}_2$ and $\hat{\gamma}$, one then, from (\ref{eq:ridgeta18a0b4c10d0}) and (\ref{eq:ridgeta18a0b4c8}), also finds the optimal $c_3$ and $\nu_1$
\begin{eqnarray}
\hat{c}_3=\frac{\sqrt{\hat{c}_2^2+\bar{\sigma}^2}}{1+\frac{1}{\alpha n}\sum_{i=1}^n \frac{\s_i^2}{\lambda+\gamma\s_i^2}}
=\hat{\gamma}\sqrt{\hat{c}_2^2+\bar{\sigma}^2},  \label{eq:ridgeta18a0b4c17d0}
\end{eqnarray}
\begin{eqnarray}
\hat{\nu}_1 =
 \frac{2\sqrt{\alpha}\left | \sqrt{\hat{c}_2^2+\bar{\sigma}^2} -\hat{c}_3 \right |}{\frac{1}{n}\sum_{i=1}^n \frac{\s_i^2}{\lambda+\hat{\gamma}\s_i^2}}
 =
 \frac{2\hat{\gamma}\left | \sqrt{\hat{c}_2^2+\bar{\sigma}^2} -\hat{c}_3\right |}{\sqrt{\alpha}(1-\hat{\gamma}) }
 =
 \frac{2\hat{\gamma} \sqrt{\hat{c}_2^2+\bar{\sigma}^2} }{\sqrt{\alpha} }. \label{eq:ridgeta18a0b4c17d1}
\end{eqnarray}
It is then also possible to determine the optimal $\hat{c}_4^2=\lim_{n\rightarrow \infty} \mE_{\g,\h^{(1)},\h^{(2)},\v}\|\hat{\x}\|_2^2$
\begin{eqnarray}
\hat{c}_4^2 & = & \lim_{n\rightarrow \infty} \mE_{\g,\h^{(1)},\h^{(2)},\v}\| \hat{\x}\|_2^2 \nonumber \\
 & = & \lim_{n\rightarrow \infty} \mE_{\g,\h^{(1)},\h^{(2)},\v}
 \left \|  \lp \lambda I +\gamma \lp \bar{L}^T\Sigma\Sigma\bar{L} + \dbover{\Sigma} \dbover{\Sigma} \rp  \rp^{-1}
 \lp -\frac{1}{2}\nu_s \lp \bar{L}^T\Sigma \h^{(1)} +\dbover{\Sigma}\h^{(2)}   \rp +\gamma\bar{L}^T\Sigma\Sigma\c\rp \right \|_2^2 \nonumber \\
 & = &
\lim_{n\rightarrow \infty}  \frac{\hat{\nu}_1^2}{4n}\sum_{i=1}^n\frac{\s_i^2}{\lp \lambda+\hat{\gamma}\s_i^2\rp^2}
+
\lim_{n\rightarrow \infty}  \sum_{i=1}^n\frac{\hat{\gamma}^2\bar{\c}_i^2}{\lp \lambda+\hat{\gamma}\s_i^2\rp^2}. \label{eq:ridgeta18a0b4c17d1e0}
\end{eqnarray}
The above discussion is summarized in the following theorem.

\begin{theorem}(Characterization of the FRM ridge estimator) For a given fixed real positive number $\lambda$, assume the setup of Lemma \ref{lemma:ridgelemma1} and Theorem \ref{thm:ridgethm1} with large $n$ linear regime ($\alpha=\lim_{n\rightarrow\infty}\frac{m}{n}$) and let  $\xi_{rr}(\lambda)$ be as in (\ref{eq:ridgerandlincons1}) (or (\ref{eq:ridgerandlincons1a0})) and $\xi_{rd,r}$ as in (\ref{eq:ridgeta16}). Also, let $\hat{\beta}=\beta_{rr}(\lambda)$ (where $\beta_{rr}(\lambda)$ is as in
(\ref{eq:model014})) and let $R(\bar{\beta},\hat{\beta})=R(\bar{\beta},\beta_{rr}(\lambda))$ from (\ref{eq:model020}) be the ridge estimator's excess prediction risk. For a unique $\hat{\gamma}$ that satisfies
\begin{eqnarray}
\lim_{n\rightarrow\infty} \frac{1}{n}\sum_{i=1}^n \frac{\hat{\gamma}\s_i^2}{\lambda+\hat{\gamma}\s_i^2} =\alpha (1-\hat{\gamma}),  \label{eq:ridgethmaaeq1}
\end{eqnarray}
set
\begin{eqnarray}
a_{1,r} & = &
  -   \lim_{n\rightarrow\infty}  \sum_{i=1}^n \frac{2\hat{\gamma}\bar{\c}_i^2}{\lambda+\hat{\gamma}\s_i^2}
  +   \lim_{n\rightarrow\infty} \sum_{i=1}^n \frac{\hat{\gamma}^2\s_i^2\bar{\c}_i^2}{\lp \lambda +\hat{\gamma}\s_i^2\rp^2}
 +   \lim_{n\rightarrow\infty}  \sum_{i=1}^n \lp \s_i^{(1)}\rp^2\c_i^2
 \nonumber \\
a_{2,r} & = &
\frac{\lim_{n\rightarrow\infty} \frac{1}{n}\sum_{i=1}^n \frac{\s_i^4}{\lp \lambda +\hat{\gamma}\s_i^2\rp^2}}{\lp \alpha + \lim_{n\rightarrow\infty} \frac{1}{n}\sum_{i=1}^n \frac{\s_i^2}{\lambda+\hat{\gamma}\s_i^2}\rp^2}
=\frac{\hat{\gamma}^2}{\alpha^2 } \lim_{n\rightarrow\infty} \frac{1}{n}\sum_{i=1}^n \frac{\s_i^4}{\lp \lambda+\hat{\gamma}\s_i^2\rp^2}.  \label{eq:ridgethmaaeq2}
\end{eqnarray}
One then has \vspace{-.0in}
\begin{eqnarray}
 \lim_{n\rightarrow\infty} \mE_{Z,\v,E}\xi_{rr}(\lambda)
 &= &\xi_{rp,r}=  \xi_{rd,r}  = \lim_{n\rightarrow\infty} \mE_{\g,\h^{(1)},\h^{(2)},\v} f_{rd,r}(\g,\h^{(1)},\h^{(2)},\v) \nonumber \\
& = &
 \lim_{n\rightarrow\infty}  \lp  - \sum_{i=1}^n \frac{\hat{\gamma}^2\bar{\c}_i^2}{\lambda+\hat{\gamma}\s_i^2}
+ \hat{\gamma} \sum_{i=1}^n \lp \s_i^{(1)}\rp^2\c_i^2
+ \hat{\gamma}\bar{\sigma}^2 \rp \nonumber \\
 \lim_{n\rightarrow\infty} \mE_{Z,\v,E} R(\bar{\beta},\beta_{rr}(\lambda))
& = &
\lim_{n\rightarrow\infty} \hat{c}_2^2
  =
 \frac{a_{1,r} +\alpha \bar{\sigma}^2a_{2,r} }{1-\alpha a_{2,r}} \nonumber \\
 & & \lim_{n\rightarrow\infty}\hat{\nu}_1
  = \frac{2\hat{\gamma}\sqrt{\hat{c}_2^2+\bar{\sigma}^2}}{\sqrt{\alpha}} \nonumber \\
 \lim_{n\rightarrow\infty} \mE_{Z,\v,E} \|\y-X\beta_{rr}(\lambda)\|_2^2
& = &
\lim_{n\rightarrow\infty} \hat{c}_3^2
  = \hat{\gamma}^2(\hat{c}_2^2+\bar{\sigma}^2) \nonumber \\
 \lim_{n\rightarrow\infty} \mE_{Z,\v,E} \|\beta_{rr}(\lambda)\|_2^2
& = &
 \lim_{n\rightarrow\infty}\hat{c}_4^2
 =
 \lim_{n\rightarrow \infty}  \frac{\hat{\nu}_1^2}{4n}\sum_{i=1}^n\frac{\s_i^2}{\lp \lambda+\hat{\gamma}\s_i^2\rp^2}
+
\lim_{n\rightarrow \infty}  \sum_{i=1}^n\frac{\hat{\gamma}^2\bar{\c}_i^2}{\lp \lambda+\hat{\gamma}\s_i^2\rp^2},
\label{eq:ridgethmaaeq3}
\end{eqnarray}
and for any fixed $\epsilon>0$
\begin{eqnarray}
 \lim_{n\rightarrow\infty}\mP_{Z,\v,E} \lp  (1-\epsilon)\mE_{Z,\v,E}\xi_{rr}(\lambda) \leq  \xi_{rr}(\lambda) \leq (1+\epsilon)\mE_{Z,\v,E}\xi_{rr}(\lambda)\rp
& \longrightarrow & 1 \nonumber \\
 \lim_{n\rightarrow\infty}\mP_{Z,\v,E} \lp  (1-\epsilon)\mE_{Z,\v,E} R(\bar{\beta},\beta_{rr}(\lambda))  \leq  R(\bar{\beta},\beta_{rr}(\lambda)) \leq (1+\epsilon)\mE_{Z,\v,E}R(\bar{\beta},\beta_{rr}(\lambda))\rp
& \longrightarrow & 1  \nonumber \\
 \lim_{n\rightarrow\infty}\mP_{\g,\h^{(1)},\h^{(2)},\v} \lp  (1-\epsilon)\hat{\nu}_1 \leq  \nu_1 \leq (1+\epsilon)\hat{\nu}_1\rp
& \longrightarrow & 1 \nonumber \\
 \lim_{n\rightarrow\infty}\mP_{Z,\v,E} \lp  (1-\epsilon)\mE_{Z,\v,E}\|\y-X\beta_{rr}(\lambda)\|_2^2 \leq \|\y-X\beta_{rr}(\lambda)\|_2^2  \leq (1+\epsilon)\mE_{Z,\v,E}\|\y-X\beta_{rr}(\lambda)\|_2^2 \rp
& \longrightarrow & 1\nonumber \\
 \lim_{n\rightarrow\infty}\mP_{Z,\v,E} \lp  (1-\epsilon)\mE_{Z,\v,E}\|\beta_{rr}(\lambda)\|_2^2 \leq \|\beta_{rr}(\lambda)\|_2^2  \leq (1+\epsilon)\mE_{Z,\v,E}\|\beta_{rr}(\lambda)\|_2^2 \rp
& \longrightarrow & 1.\nonumber \\
\label{eq:ridgethmaaeq4}
\end{eqnarray}
\label{thm:ridgethm2}
\end{theorem}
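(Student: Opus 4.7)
The plan is to follow the same four-step RDT template already applied to the GLS estimator in Theorem~\ref{thm:thm2}, with the ridge parameter $\lambda$ threaded throughout. Lemma~\ref{lemma:ridgelemma1} already provides the algebraic random-primal representation of $\xi_{rr}(\lambda)$, and Theorem~\ref{thm:ridgethm1} supplies the matching random dual $f_{rd,r}$ together with Gordon's one-sided bound (\ref{eq:ridgeta16a0}). The bulk of the work is the handling of the random dual, which has in fact already been carried out line by line in (\ref{eq:ridgeta18a0})--(\ref{eq:ridgeta18a0b4c17d1e0}); the theorem's proof is thus largely a matter of quoting that derivation and invoking the standard RDT closing steps.

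Concretely, after solving the inner maximization over $\nu$, I would apply the change of variables (\ref{eq:ta18a0b0})--(\ref{eq:ta18a0b0c0}) together with the constraint $\sqrt{\|\Sigma(\bar{L}\x-\c)\|_2^2+\|\dbover{\Sigma}\x\|_2^2}=c_2$ to recast the random-dual program as in (\ref{eq:ridgeta18a0b2}). Fixing the scalar auxiliaries $c_{3,s},c_2,\nu_s$ and appealing to strong Lagrange duality, I would minimize the inner Lagrangian $\cL_{2,r}$ in closed form over $\x$ (via (\ref{eq:ridgta18a0b4})) and over $\z$ on the sphere $\|\z\|_2=c_{3,s}$ (via (\ref{eq:ridgeta18a0b4c1d0})), then rescale $\nu_s\to \nu_1/\sqrt{n}$ and $c_{3,s}\to c_3\sqrt{m}$ and use Gaussian concentration plus the eigen-decomposition (\ref{eq:ta18a0b4c1d0}) to collapse the high-dimensional problem to the deterministic scalar program $\min_{c_3,c_2}\max_{\nu_1,\gamma} f_{0,r}$ in (\ref{eq:ridgeta18a0b4c6}). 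Sequential optimization over $\nu_1$ and $c_3$ yields (\ref{eq:ridgeta18a0b4c8}) and (\ref{eq:ridgeta18a0b4c10d0}), and the two remaining derivative conditions give the fixed-point equation (\ref{eq:ridgethmaaeq1}) for $\hat{\gamma}$ together with the explicit formula $\hat{c}_2^2=(a_{1,r}+\alpha\bar{\sigma}^2 a_{2,r})/(1-\alpha a_{2,r})$, thereby establishing the first two lines of (\ref{eq:ridgethmaaeq3}).

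To recover the remaining statements in (\ref{eq:ridgethmaaeq3}), I would identify $\hat{c}_3$ (after the rescaling $c_{3,s}=c_3\sqrt{m}$) with the square root of the normalized expected residual via the constraint in (\ref{eq:ridgerandlincons1a0}), and $\hat{c}_4$ with $\mE\|\beta_{rr}(\lambda)\|_2^2$ using the unitarity of $\dbover{V}$ and $\x=\dbover{V}^T\beta$; the value (\ref{eq:ridgeta18a0b4c17d0}) and the formula (\ref{eq:ridgeta18a0b4c17d1e0}) for $\hat{c}_4^2$ then deliver the last two lines of (\ref{eq:ridgethmaaeq3}). Finally, the strict convexity of the ridge objective in $\beta$ triggers the strong-random-duality reversal of \cite{StojnicGorEx10,StojnicRegRndDlt10}, upgrading Gordon's inequality to the equality $\xi_{rp,r}=\xi_{rd,r}$, while the concentration statements (\ref{eq:ridgethmaaeq4}) follow from standard Gaussian Lipschitz concentration applied to $f_{rd,r}$, propagated through (\ref{eq:ridgeta15}) to $f_{rp,r}$. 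The main obstacle, to my mind, is the last link: transferring concentration from the \emph{dual} scalar optimizers $\hat{c}_3,\hat{c}_4$ to the \emph{primal} norms $\|\y-X\beta_{rr}(\lambda)\|_2^2$ and $\|\beta_{rr}(\lambda)\|_2^2$, which requires leveraging strict convexity of the ridge cost (to force uniqueness and stability of $\beta_{rr}(\lambda)$) together with a standard deviation-around-the-optimum argument to rule out near-optima of substantially different norm.
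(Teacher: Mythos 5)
Your proposal follows essentially the same route as the paper: the paper's proof of this theorem simply invokes the derivation in (\ref{eq:ridgeta18a0})--(\ref{eq:ridgeta18a0b4c17d1e0}) (inner maximization over $\nu$, change of variables, Lagrangian in $\gamma$, closed-form $\hat{\x}$ and $\hat{\z}$, rescaling, concentration to the scalar program $f_{0,r}$, and sequential optimization yielding (\ref{eq:ridgethmaaeq1})--(\ref{eq:ridgethmaaeq3})), together with the concentration and strong-random-duality reversal arguments already used for Theorem \ref{thm:thm2}, which is exactly the plan you lay out. Your closing remark about transferring concentration from the dual scalar optimizers to the primal quantities via uniqueness/stability of $\beta_{rr}(\lambda)$ is precisely the role the reversal arguments of \cite{StojnicGorEx10,StojnicRegRndDlt10} play here, so there is no gap.
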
\vspace{-.17in}
\begin{proof}
Follows from the above discussion, concentrations of $f_{rd,r}(\g,\h^{(1)},\h^{(2)},\v)$, $c_2$, and $\nu_1$, and utilization of the same arguments as in the proof of Theorem  \ref{thm:thm2}.
\end{proof}

\vspace{.1in}
\noindent \underline{ 4) \textbf{\emph{Double checking strong random duality:}}} As in Section \ref{sec:analgls}, the underlying convexity ensures that  the strong random duality and the reversal arguments of \cite{StojnicGorEx10,StojnicRegRndDlt10} apply as well.

%%%%%%%%%%%%%%%%%%%%%%%%%%%%%%%%%%%%%%%%%%%%%%%%%%%%%%%%%%%%%%%%%
%%%%%%%%%%%%%%%%%%%%%%%%%%%%%%%%%%%%%%%%%%%%%%%%%%%%%%%%%%%%%%%%%
%%%%%%%%%%%%%%%%%%%%%%%%%%%%%%%%%%%%%%%%%%%%%%%%%%%%%%%%%%%%%%%%%
\subsubsection{Compact risk form -- Ridge estimator}
\label{sec:compridge}
%%%%%%%%%%%%%%%%%%%%%%%%%%%%%%%%%%%%%%%%%%%%%%%%%%%%%%%%%%%%%%%%%
%%%%%%%%%%%%%%%%%%%%%%%%%%%%%%%%%%%%%%%%%%%%%%%%%%%%%%%%%%%%%%%%%
%%%%%%%%%%%%%%%%%%%%%%%%%%%%%%%%%%%%%%%%%%%%%%%%%%%%%%%%%%%%%%%%%

Analogously to what we have done in Section \ref{sec:compridge}, we can again obtain the risk characterization given in Theorem \ref{thm:ridgethm2} in a more compact matrix form. Very minimal adjustments are needed. We again start by observing that (\ref{eq:ridgethmaaeq1}) can be (analogously to (\ref{eq:compthmproofeq0a0})) rewritten as
\begin{eqnarray}
 1- \lim_{n\rightarrow\infty} \frac{1}{n}\sum_{i=1}^n \frac{\hat{\gamma}\s_i^2}{\lambda+\hat{\gamma}\s_i^2}
 & = & 1- \alpha (1-\hat{\gamma})
 \nonumber \\
\Longleftrightarrow \qquad  \lim_{n\rightarrow\infty} \frac{1}{n}\sum_{i=1}^n \frac{1}{\lambda+\hat{\gamma}\s_i^2}
 & = & \frac{1- \alpha (1-\hat{\gamma})}{\lambda},
 \label{eq:ridgecompthmproofeq0a0}
\end{eqnarray}
or, analogously to (\ref{eq:compthmproofeq0a1}), in a matrix form
\begin{eqnarray}
   \lim_{n\rightarrow\infty} \frac{1}{n}\tr\lp \lambda I+\hat{\gamma}D \rp^{-1}
 & = & \frac{1- \alpha (1-\hat{\gamma})}{\lambda} \nonumber \\
 \Longleftrightarrow \qquad
\lim_{n\rightarrow\infty} \frac{1}{n}\tr\lp \lambda I+\hat{\gamma} L^T A^TAL + \hat{\gamma}\dbover{A}^T \dbover{A} \rp^{-1}
 & = & \frac{1- \alpha (1-\hat{\gamma})}{\lambda},
 \label{eq:ridgecompthmproofeq0a1}
\end{eqnarray}
Similarly to (\ref{eq:compthmproofeq0}),  we can also rewrite $a_{1}$ from (\ref{eq:ridgethmaaeq2}) in the following way
\begin{eqnarray}
a_{1}
& = &
-\lim_{n\rightarrow\infty}\lp     \sum_{i=1}^n \frac{2\hat{\gamma}\bar{\c}_i^2}{\lambda+\hat{\gamma}\s_i^2} \rp
+\lim_{n\rightarrow\infty}\lp    \sum_{i=1}^n \frac{\hat{\gamma}^2\s_i^2\bar{\c}_i^2}{\lp \lambda+\hat{\gamma}\s_i^2\rp^2} \rp
+\lim_{n\rightarrow\infty}\lp    \sum_{i=1}^n \lp \s_i^{(1)}\rp^2\c_i^2\rp
 \nonumber \\
& = &
 \lim_{n\rightarrow\infty}  \lp -\hat{\gamma}  \bar{\c}^T  \lp \lambda I +\hat{\gamma} D\rp ^{-1}\bar{\c}
- \lambda \hat{\gamma} \bar{\c}^T  \lp \lambda I +\hat{\gamma} D\rp ^{-2}\bar{\c}
 + \c^T\Sigma\Sigma\c
 \rp.
 \label{eq:ridgecompthmproofeq0}
\end{eqnarray}
Adjusting for $\lambda$ , we find the following ridge analogues to (\ref{eq:compthmproofeq2})- (\ref{eq:compthmproofeq3})
 \begin{eqnarray}
   \lim_{n\rightarrow\infty}  \lp \bar{\c}^T \lp I+\hat{\gamma} D\rp^{-1}\bar{\c}
 \rp
    & = &
  \lim_{n\rightarrow\infty}   \lp
  \bar{\beta}^T A^TA L
\lp \lambda I + \hat{\gamma} L^T A^TA L + \hat{\gamma}\dbover{A}^T\dbover{A}    \rp^{-1}
    L^T A^TA\bar{\beta}
  \rp.
 \label{eq:ridgecompthmproofeq2}
\end{eqnarray}
and
\begin{eqnarray}
   \lim_{n\rightarrow\infty}  \lp \bar{\c}^T \lp I+\hat{\gamma} D\rp^{-2}\bar{\c}
 \rp
   & = &
 \lim_{n\rightarrow\infty}   \lp
  \bar{\beta}^T A^TA L
\lp \lambda I + \hat{\gamma} L^T A^TA L + \hat{\gamma}\dbover{A}^T\dbover{A}    \rp^{-2}
    L^T A^TA\bar{\beta}
  \rp.
 \label{eq:ridgecompthmproofeq3}
\end{eqnarray}
We also recall on
\begin{eqnarray}
   \lim_{n\rightarrow\infty}
   \lp
   \c^T\Sigma\Sigma \c
  \rp
  =
 \lim_{n\rightarrow\infty}
   \lp
\bar{\beta}^T A^T A \bar{\beta}
  \rp,
 \label{eq:ridgecompthmproofeq4}
\end{eqnarray}
rewrite $a_{2}$ from (\ref{eq:ridgethmaaeq2}) as
\begin{eqnarray}
 a_2 & =&
 \frac{\hat{\gamma}^2}{\alpha^2} \lim_{n\rightarrow\infty} \frac{1}{n}\sum_{i=1}^n \frac{\s_i^4}{\lp \lambda+\hat{\gamma}\s_i^2\rp^2}
  =
 \frac{\hat{\gamma}^2}{\alpha^2} \lim_{n\rightarrow\infty} \frac{1}{n}\tr \lp  \lambda D^{-1}+\hat{\gamma I}   \rp^{-2},
 \label{eq:ridgecompthmproofeq5}
\end{eqnarray}
and analogously to (\ref{eq:compthmproofeq6}) find
\begin{eqnarray}
a_{2}   & = &
 \frac{\hat{\gamma}^2}{\alpha^2} \lim_{n\rightarrow\infty} \frac{1}{n}\tr\lp \hat{\gamma}  I + \lambda \lp L^T A^TAL +  \dbover{A}^T \dbover{A}\rp^{-1} \rp^{-2}.
 \label{eq:ridgecompthmproofeq6}
\end{eqnarray}

We then have the following corollary of Theorem \ref{thm:ridgethm2} (basically a ridge analogue to Corollary \ref{cor:cor1}.

\begin{corollary}(FRM Ridge estimator - compact risk form) Consider the FRM from (\ref{eq:model01}) with factors $\bar{Z}$, loadings $L$, noises $\e$ and $\bar{E}$ and assume the setup of Theorem \ref{thm:thm2}. Let the factors and noises' covariances be $\Sigma_{\bar{Z}}$, $\Sigma_{\bar{E}}$, and $\Sigma_{\e}$, i.e., let
\begin{eqnarray}\label{eq:ridgecorcompeq1}
 \Sigma_{\bar{Z}}=A^TA,\quad  \Sigma_{\bar{E}}=\dbover{A}^T\dbover{A},\quad \Sigma_{\e}=\sigma^2\overline{A}^T\overline{A}.
 \end{eqnarray}
Let $\hat{\gamma}$ be the unique solution of
\begin{eqnarray}
\lim_{n\rightarrow\infty} \frac{1}{n}\tr\lp \lambda I+\hat{\gamma} L^T \Sigma_{\bar{Z}} L + \hat{\gamma} \Sigma_{\bar{E}} \rp^{-1}
 & = & \frac{1-\alpha(1-\hat{\gamma})}{\lambda},
\label{eq:ridgecorcompeq2}
\end{eqnarray}
Set
\begin{eqnarray}
a_{1,r}
& = &
 \lim_{n\rightarrow\infty}   \lp
\beta^T\Sigma_{\bar{Z}}\beta
-  \hat{\gamma}\beta^T \Sigma_{\bar{Z}} L \lp
\lp \lambda I + \hat{\gamma} L^T \Sigma_{\bar{Z}} L + \hat{\gamma}\Sigma_{\bar{E}}   \rp^{-1}
+
 \lambda \lp \lambda I + \hat{\gamma} L^T \Sigma_{\bar{Z}} L + \hat{\gamma}\Sigma_{\bar{E}}   \rp^{-2}
\rp    L^T \Sigma_{\bar{Z}}\beta
  \rp
\nonumber \\
a_{2,r}
  & = &
 \frac{\hat{\gamma}^2}{\alpha^2} \lim_{n\rightarrow\infty} \frac{1}{n}\tr\lp \hat{\gamma}  I  + \lambda \lp L^T  \Sigma_{\bar{Z}} L +  \Sigma_{\bar{E}}\rp^{-1} \rp^{-2} \nonumber \\
 \bar{\sigma}^2
 & = &
 \lim_{n\rightarrow\infty} \frac{1}{m}\tr\lp \Sigma_{\e}\rp.
\label{eq:ridgecorcompeq4}
\end{eqnarray}
 Then
\begin{eqnarray}\label{eq:ridgecorcompeq5}
  \lim_{n\rightarrow\infty} \mE_{Z,\v,E} R(\bar{\beta},\beta_{rr}(\lambda))
& = &
\lim_{n\rightarrow\infty} \mE_{Z,\v,E} \lp
\lp \bar{\beta}  - L\beta_{rr}(\lambda) \rp^T
\Sigma_{\bar{Z}}
\lp \bar{\beta}  - L\beta_{rr}(\lambda) \rp +   \beta_{rr}(\lambda)^T \Sigma_{\bar{E}} \beta_{rr}(\lambda)\rp \nonumber \\
& = &
 \frac{a_{1,r}+\alpha \bar{\sigma}^2a_{2,r} }{1-\alpha a_{2,r}}, \nonumber \\
\end{eqnarray}
and
\begin{eqnarray}
  \lim_{n\rightarrow\infty}\mP_{Z,\v,E} \lp  (1-\epsilon)\mE_{Z,\v,E} R(\bar{\beta},\beta_{rr}(\lambda))  \leq  R(\bar{\beta},\beta_{rr}(\lambda)) \leq (1+\epsilon)\mE_{Z,\v,E}R(\bar{\beta},\beta_{rr}(\lambda))\rp
& \longrightarrow & 1.\label{eq:ridgecorcompeq6}
\end{eqnarray}
 \label{cor:cor2}
\end{corollary}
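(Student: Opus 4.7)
The plan is to derive Corollary \ref{cor:cor2} as a direct matrix-form rewriting of Theorem \ref{thm:ridgethm2}, exploiting the algebraic identities (\ref{eq:ridgecompthmproofeq0a1})--(\ref{eq:ridgecompthmproofeq6}) that were already set up immediately before the corollary statement. The strategy mirrors the proof of Corollary \ref{cor:cor1} for the GLS case: every scalar sum over eigenvalues $\s_i, \bar{\c}_i, \c_i, \s_i^{(1)}$ appearing in Theorem \ref{thm:ridgethm2} admits a matrix-trace/quadratic-form representation in terms of the original covariances $\Sigma_{\bar{Z}}, \Sigma_{\bar{E}}, \Sigma_{\e}$ once one undoes the SVD substitutions (\ref{eq:model08}) and the changes of variables $\bar{L}=V^TL\dbover{V}$, $\c=V^T\bar{\beta}$, $\bar{\c}=(U^{(d)})^T\bar{L}^T\Sigma\Sigma\c$ from (\ref{eq:ta18a0b0}) and (\ref{eq:ta18a0b4c2}).

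The key steps, in order, are the following. First, I would note that (\ref{eq:ridgecorcompeq2}) is exactly (\ref{eq:ridgecompthmproofeq0a1}), which was obtained by inverting the eigen-decomposition (\ref{eq:thmaaeq1a0}) and using $A^TA=V\Sigma\Sigma V^T$, $\dbover{A}^T\dbover{A}=\dbover{V}\dbover{\Sigma}\dbover{\Sigma}\dbover{V}^T$; this confirms the defining equation for $\hat{\gamma}$ in the covariance notation of (\ref{eq:ridgecorcompeq1}). Second, I would substitute the three identities (\ref{eq:ridgecompthmproofeq2}), (\ref{eq:ridgecompthmproofeq3}), (\ref{eq:ridgecompthmproofeq4}) into the definition of $a_{1,r}$ given in (\ref{eq:ridgecompthmproofeq0}) to recover the compact form in (\ref{eq:ridgecorcompeq4}) with the factorized $(I+\hat{\gamma}L^T\Sigma_{\bar Z}L+\hat{\gamma}\Sigma_{\bar E})^{-1}+\lambda(\cdot)^{-2}$ combination. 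Third, I would cite (\ref{eq:ridgecompthmproofeq5})--(\ref{eq:ridgecompthmproofeq6}) to turn the scalar sum $\frac{1}{n}\sum_i \s_i^4/(\lambda+\hat{\gamma}\s_i^2)^2$ into the trace form for $a_{2,r}$ in (\ref{eq:ridgecorcompeq4}). Fourth, the definition $\bar{\sigma}^2=\lim_{n\to\infty}\frac{1}{m}\tr(\Sigma_{\e})$ is immediate from (\ref{eq:ta18a0b4c4}) once one recognizes $\Sigma_{\e}=\sigma^2\overline{A}^T\overline{A}=\sigma^2\overline{U}\overline{\Sigma}\overline{\Sigma}\overline{U}^T$, whose trace equals $\sigma^2\tr(\overline{\Sigma}\overline{\Sigma})$.

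With these pieces assembled, the excess risk identity (\ref{eq:ridgecorcompeq5}) is precisely the statement $\lim_{n\to\infty}\mE_{Z,\v,E}R(\bar{\beta},\beta_{rr}(\lambda))=\hat{c}_2^2=(a_{1,r}+\alpha\bar{\sigma}^2 a_{2,r})/(1-\alpha a_{2,r})$ coming from Theorem \ref{thm:ridgethm2}, after noting that the matrix-form excess risk $(\bar{\beta}-L\beta_{rr}(\lambda))^T\Sigma_{\bar Z}(\bar{\beta}-L\beta_{rr}(\lambda))+\beta_{rr}(\lambda)^T\Sigma_{\bar E}\beta_{rr}(\lambda)$ in (\ref{eq:ridgecorcompeq5}) is nothing but $\|\Sigma V^T(\bar{\beta}-L\hat{\beta})\|_2^2+\|\dbover{\Sigma}\dbover{V}^T\hat{\beta}\|_2^2$ from (\ref{eq:model020}) rewritten via $\Sigma_{\bar Z}=V\Sigma\Sigma V^T$ and $\Sigma_{\bar E}=\dbover{V}\dbover{\Sigma}\dbover{\Sigma}\dbover{V}^T$. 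The concentration statement (\ref{eq:ridgecorcompeq6}) is carried over directly from the second line of (\ref{eq:ridgethmaaeq4}) in Theorem \ref{thm:ridgethm2} with no additional work.

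There is no genuine obstacle here; the proof is essentially a bookkeeping exercise, and the only mild subtlety is the ridge-specific factor $\lambda$ that appears in the identities (\ref{eq:ridgecompthmproofeq2})--(\ref{eq:ridgecompthmproofeq6}) relative to their GLS analogues (\ref{eq:compthmproofeq2})--(\ref{eq:compthmproofeq6})---one simply needs to track that $(I+\hat{\gamma}D)^{-k}$ in the GLS case becomes $(\lambda I+\hat{\gamma}D)^{-k}$ in the ridge case, which then propagates through the inverse matrix in the compact form and accounts for the extra $\lambda$ multiplying the second-order inverse in the bracket in (\ref{eq:ridgecorcompeq4}). Hence the corollary follows at once by combining Theorem \ref{thm:ridgethm2} with the identities (\ref{eq:ridgecompthmproofeq0a1}), (\ref{eq:ridgecompthmproofeq0}), and (\ref{eq:ridgecompthmproofeq2})--(\ref{eq:ridgecompthmproofeq6}).
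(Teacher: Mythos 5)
Your proposal is correct and takes essentially the same route as the paper: the paper's proof of Corollary \ref{cor:cor2} is a one-line citation of Theorem \ref{thm:ridgethm2} together with the identities (\ref{eq:ridgecompthmproofeq0a1}), (\ref{eq:ridgecompthmproofeq0}), (\ref{eq:ridgecompthmproofeq2})--(\ref{eq:ridgecompthmproofeq4}), and (\ref{eq:ridgecompthmproofeq6}), which is precisely the bookkeeping you spell out (undoing the SVD/eigen-decomposition substitutions and tracking the extra $\lambda$ relative to the GLS case). Nothing further is needed.
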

\begin{proof}
Follows from Theorem  \ref{thm:ridgethm2} and (\ref{eq:ridgecompthmproofeq0a1}), (\ref{eq:ridgecompthmproofeq0}), (\ref{eq:ridgecompthmproofeq2})- (\ref{eq:ridgecompthmproofeq4}), and (\ref{eq:ridgecompthmproofeq6}).
\end{proof}

%%%%%%%%%%%%%%%%%%%%%%%%%%%%%%%%%%%%%%%%%%%%%%%%%%%%%%%%%%%%%%%%%
%%%%%%%%%%%%%%%%%%%%%%%%%%%%%%%%%%%%%%%%%%%%%%%%%%%%%%%%%%%%%%%%%
%%%%%%%%%%%%%%%%%%%%%%%%%%%%%%%%%%%%%%%%%%%%%%%%%%%%%%%%%%%%%%%%%
%%%%%%%%%%%%%%%%%%%%%%%%%%%%%%%%%%%%%%%%%%%%%%%%%%%%%%%%%%%%%%%%%
%%%%%%%%%%%%%%%%%%%%%%%%%%%%%%%%%%%%%%%%%%%%%%%%%%%%%%%%%%%%%%%%%
%%%%%%%%%%%%%%%%%%%%%%%%%%%%%%%%%%%%%%%%%%%%%%%%%%%%%%%%%%%%%%%%%
\subsection{Excess risk of the LS estimator}
\label{sec:analls}
%%%%%%%%%%%%%%%%%%%%%%%%%%%%%%%%%%%%%%%%%%%%%%%%%%%%%%%%%%%%%%%%%
%%%%%%%%%%%%%%%%%%%%%%%%%%%%%%%%%%%%%%%%%%%%%%%%%%%%%%%%%%%%%%%%%
%%%%%%%%%%%%%%%%%%%%%%%%%%%%%%%%%%%%%%%%%%%%%%%%%%%%%%%%%%%%%%%%%
%%%%%%%%%%%%%%%%%%%%%%%%%%%%%%%%%%%%%%%%%%%%%%%%%%%%%%%%%%%%%%%%%
%%%%%%%%%%%%%%%%%%%%%%%%%%%%%%%%%%%%%%%%%%%%%%%%%%%%%%%%%%%%%%%%%
%%%%%%%%%%%%%%%%%%%%%%%%%%%%%%%%%%%%%%%%%%%%%%%%%%%%%%%%%%%%%%%%%

Mathematical analysis preceding Lemma \ref{lemma:lemma2} and Theorem \ref{thm:thm2} holds even if $\lambda=0$ provided that $\alpha>1$ and $\alpha\kappa<1$. This fits the scenario of the LS estimator (see (\ref{eq:model017})-(\ref{eq:model019})),
 and we then have the following  corollary of Theorem \ref{thm:ridgethm2} that particularly relates to the LS estimator and under-parameterized regimes.

\begin{corollary}(Characterization of FRM LS estimator) Assume the setup of Theorem \ref{thm:ridgethm2} with $\lambda =0$ and $\alpha>1$. Let unique $\hat{\gamma}$ be such that
\begin{eqnarray}
\hat{\gamma}=1-\frac{1}{\alpha}.  \label{eq:lsridgethmaaeq1}
\end{eqnarray}
Set
\begin{eqnarray}
a_{1,r} & = &
  -   \lim_{n\rightarrow\infty} \sum_{i=1}^n \frac{\bar{\c}_i^2}{\s_i^2}
  +   \lim_{n\rightarrow\infty} \sum_{i=1}^n \lp \s_i^{(1)}\rp^2\c_i^2 \nonumber \\
 a_{2,r} & = & \frac{1}{\alpha^2}.  \label{eq:lsridgethmaaeq2}
\end{eqnarray}
One then has \vspace{-.0in}
\begin{eqnarray}
 \lim_{n\rightarrow\infty} \mE_{Z,\v,E}\xi_{ls}
 &= &
  \lim_{n\rightarrow\infty} \mE_{Z,\v,E}\xi_{rr}(0)
=
  \hat{\gamma}\lp a_{1,r}+\bar{\sigma}^2\rp =   \lp a_{1,r}+\bar{\sigma}^2\rp\frac{\alpha -1}{\alpha}\nonumber \\
 \lim_{n\rightarrow\infty} \mE_{Z,\v,E} R(\bar{\beta},\beta_{ls})
& = &
 \lim_{n\rightarrow\infty} \mE_{Z,\v,E} R(\bar{\beta},\beta_{ls}(0))
  =
 \frac{a_{1,r}+\alpha \bar{\sigma}^2a_{2,r} }{1-\alpha a_{2,r}}=
\frac{ \alpha  a_{1,r}+\bar{\sigma}^2 }{\alpha -1} \nonumber \\
 & & \lim_{n\rightarrow\infty} \hat{\nu}_1
  = 2\sqrt{a_{1,r}+\bar{\sigma}^2}\frac{\sqrt{\alpha -1}}{\alpha} \nonumber \\
 \lim_{n\rightarrow\infty} \mE_{Z,\v,E} \|\y-X\beta_{ls}\|_2^2
& = &
 \lim_{n\rightarrow\infty} \mE_{Z,\v,E} \|\y-X\beta_{rr}(0)\|_2^2
= \lim_{n\rightarrow\infty}\hat{c}_3^2
 =   \lp a_{1,r}+\bar{\sigma}^2\rp\frac{\alpha -1}{\alpha} \nonumber \\
 \lim_{n\rightarrow\infty} \mE_{Z,\v,E} \|\beta_{ls}\|_2^2
& = &
 \lim_{n\rightarrow\infty} \mE_{Z,\v,E} \|\beta_{rr}(0)\|_2^2
   =
  \lim_{n\rightarrow \infty}  \sum_{i=1}^n\frac{\bar{\c}_i^2}{\s_i^4}
  +
\frac{a_{1,r}+\bar{\sigma}^2}{\alpha-1} \lim_{n\rightarrow \infty}  \frac{1}{n}\sum_{i=1}^n\frac{1}{\s_i^2},\nonumber \\
\label{eq:lsridgethmaaeq3}
\end{eqnarray}
and for any fixed $\epsilon>0$
\begin{eqnarray}
 \lim_{n\rightarrow\infty}\mP_{Z,\v,E} \lp  (1-\epsilon)\mE_{Z,\v,E}\xi_{ls} \leq  \xi_{ls} \leq (1+\epsilon)\mE_{Z,\v,E}\xi_{ls}\rp
& \longrightarrow & 1 \nonumber \\
 \lim_{n\rightarrow\infty}\mP_{Z,\v,E} \lp  (1-\epsilon)\mE_{Z,\v,E} R(\bar{\beta},\beta_{ls})  \leq  R(\bar{\beta},\beta_{ls}) \leq (1+\epsilon)\mE_{Z,\v,E}R(\bar{\beta},\beta_{ls})\rp
& \longrightarrow & 1  \nonumber \\
 \lim_{n\rightarrow\infty}\mP_{\g,\h^{(1)},\h^{(2)},\v} \lp  (1-\epsilon)\hat{\nu}_1 \leq  \nu_1 \leq (1+\epsilon)\hat{\nu}_1\rp
& \longrightarrow & 1 \nonumber \\
 \lim_{n\rightarrow\infty}\mP_{Z,\v,E} \lp  (1-\epsilon)\mE_{Z,\v,E}\|\y-X\beta_{ls}\|_2^2 \leq \|\y-X\beta_{ls}\|_2^2  \leq (1+\epsilon)\mE_{Z,\v,E}\|\y-X\beta_{ls}\|_2^2 \rp
& \longrightarrow & 1\nonumber \\
 \lim_{n\rightarrow\infty}\mP_{Z,\v,E} \lp  (1-\epsilon)\mE_{Z,\v,E}\|\beta_{ls}\|_2^2 \leq \|\beta_{ls}\|_2^2  \leq (1+\epsilon)\mE_{Z,\v,E}\|\beta_{ls}\|_2^2 \rp
& \longrightarrow & 1.\label{eq:lsridgethmaaeq4}
\end{eqnarray}
\label{cor:cor3}
\end{corollary}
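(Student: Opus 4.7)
The plan is to obtain this as a direct $\lambda\to 0$ specialization of Theorem~\ref{thm:ridgethm2}, using the identity $\beta_{ls}=\beta_{rr}(0)$ from (\ref{eq:model019}), which is valid precisely in the under-parameterized regime $\alpha>1$. Since Theorem~\ref{thm:ridgethm2} already provides exact characterizations (of objective, excess risk, $\hat{\nu}_1$, residual norm, and estimator norm) together with their concentrations, it should suffice to evaluate those expressions at $\lambda=0$ and simplify.

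First I would solve the defining equation (\ref{eq:ridgethmaaeq1}) for $\hat{\gamma}$ at $\lambda=0$. Under the standing assumptions from Section~\ref{sec:dims} (full rank $L$ and $\dbover{A}$, $\alpha\kappa<1$), the matrix $D=\bar{L}^T\Sigma\Sigma\bar{L}+\dbover{\Sigma}\dbover{\Sigma}$ is positive definite, so the $\s_i$'s stay bounded away from zero and $\hat{\gamma}\s_i^2/(\lambda+\hat{\gamma}\s_i^2)\to 1$ for any $\hat{\gamma}>0$ as $\lambda\downarrow 0$. Equation (\ref{eq:ridgethmaaeq1}) would then reduce to $1=\alpha(1-\hat{\gamma})$, uniquely giving $\hat{\gamma}=1-1/\alpha$, which is positive exactly because $\alpha>1$.

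Next I would plug $\lambda=0$ into the definitions of $a_{1,r}$ and $a_{2,r}$ in (\ref{eq:ridgethmaaeq2}). The first two summands in $a_{1,r}$ collapse (the $\hat{\gamma}$'s cancel the $\s_i^2$'s) and combine into $-\sum_i\bar{\c}_i^2/\s_i^2$, giving (\ref{eq:lsridgethmaaeq2}); the summand inside $a_{2,r}$ reduces to $1/\hat{\gamma}^2$, and the prefactor $\hat{\gamma}^2/\alpha^2$ then yields $a_{2,r}=1/\alpha^2$. Substituting these two values together with $\hat{\gamma}=(\alpha-1)/\alpha$ into (\ref{eq:ridgethmaaeq3}) and simplifying the resulting rational expressions in $\alpha$ would produce all five displayed identities in (\ref{eq:lsridgethmaaeq3}); in particular the identity $\hat{c}_2^2+\bar{\sigma}^2=\alpha(a_{1,r}+\bar{\sigma}^2)/(\alpha-1)$ feeds into the closed forms for $\hat{\nu}_1$, $\hat{c}_3^2$, and $\hat{c}_4^2$. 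The concentration claims (\ref{eq:lsridgethmaaeq4}) would carry over verbatim from (\ref{eq:ridgethmaaeq4}).

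The main point requiring attention, and essentially the only genuine obstacle, is the soundness of the $\lambda\downarrow 0$ limit inside sums such as $\frac{1}{n}\sum_i\s_i^4/(\lambda+\hat{\gamma}\s_i^2)^2$. Once a uniform spectral lower bound on $D$ (guaranteed by its invertibility under the full-rank and $\alpha\kappa<1$ assumptions, together with the boundedness assumption imposed in Section~\ref{sec:analuncorr}) is invoked, the summands are uniformly bounded in $\lambda$ on a neighborhood of $0$ and the exchange of the limit in $\lambda$ with the $n\to\infty$ limit in the sums is a routine dominated convergence argument. Beyond that, everything is mechanical algebra, which is why the result is stated as a corollary rather than a stand-alone theorem.
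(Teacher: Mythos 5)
Your proposal is correct and matches the paper's own proof, which likewise obtains the corollary by plugging $\lambda=0$ into Theorem \ref{thm:ridgethm2} (noting the preceding discussion remains valid there), solving (\ref{eq:ridgethmaaeq1}) to get $\hat{\gamma}=1-1/\alpha$, and simplifying $a_{1,r}$ and $a_{2,r}$. Your extra attention to justifying the $\lambda\downarrow 0$ limit via the spectral lower bound on $D$ is a welcome refinement but does not change the route.
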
\vspace{-.17in}
\begin{proof}
Follows directly from Theorem \ref{thm:ridgethm2} after noting that the discussion preceding theorem holds for $\lambda=0$. Plugging $\lambda=0$ in (\ref{eq:ridgethmaaeq1}),  (\ref{eq:ridgethmaaeq2}), and  (\ref{eq:ridgethmaaeq3}) then gives (\ref{eq:lsridgethmaaeq1}),  (\ref{eq:lsridgethmaaeq2}), and  (\ref{eq:lsridgethmaaeq3}).
\end{proof}

%%%%%%%%%%%%%%%%%%%%%%%%%%%%%%%%%%%%%%%%%%%%%%%%%%%%%%%%%%%%%%%%%
%%%%%%%%%%%%%%%%%%%%%%%%%%%%%%%%%%%%%%%%%%%%%%%%%%%%%%%%%%%%%%%%%
%%%%%%%%%%%%%%%%%%%%%%%%%%%%%%%%%%%%%%%%%%%%%%%%%%%%%%%%%%%%%%%%%
\subsubsection{Compact risk form -- LS}
\label{sec:compls}
%%%%%%%%%%%%%%%%%%%%%%%%%%%%%%%%%%%%%%%%%%%%%%%%%%%%%%%%%%%%%%%%%
%%%%%%%%%%%%%%%%%%%%%%%%%%%%%%%%%%%%%%%%%%%%%%%%%%%%%%%%%%%%%%%%%
%%%%%%%%%%%%%%%%%%%%%%%%%%%%%%%%%%%%%%%%%%%%%%%%%%%%%%%%%%%%%%%%%

As was the case in Sections \ref{sec:compgls} and \ref{sec:compridge}, we can again obtain the risk characterization given in Corollary \ref{cor:cor3} in a more compact matrix form. This time, however, the analytical forms are much simpler. In particular, we only need
  to rewrite $a_{1}$ from (\ref{eq:lsridgethmaaeq2}). Similarly to (\ref{eq:compthmproofeq0}), we have
\begin{eqnarray}
a_{1,r}
& = &
-\lim_{n\rightarrow\infty}\lp     \sum_{i=1}^n \frac{\bar{\c}_i^2}{\s_i^2} \rp
+\lim_{n\rightarrow\infty}\lp    \sum_{i=1}^n \lp \s_i^{(1)}\rp^2\c_i^2\rp
= \lim_{n\rightarrow\infty}  \lp -  \bar{\c}^T  D ^{-1}\bar{\c}
  + \c^T\Sigma\Sigma\c
 \rp.
 \label{eq:lsridgecompthmproofeq0}
\end{eqnarray}
Adjusting for $\lambda$ , we find the following ridge analogues to (\ref{eq:compthmproofeq2})
 \begin{eqnarray}
   \lim_{n\rightarrow\infty}  \lp \bar{\c}^T D^{-1}\bar{\c}
 \rp
    & = &
  \lim_{n\rightarrow\infty}   \lp
  \bar{\beta}^T A^TA L
\lp   L^T A^TA L + \dbover{A}^T\dbover{A}    \rp^{-1}
    L^T A^TA\bar{\beta}
  \rp.
 \label{eq:lsridgecompthmproofeq2}
\end{eqnarray}
 Recalling on (\ref{eq:ridgecompthmproofeq4}) and combining (\ref{eq:lsridgecompthmproofeq0})
 and (\ref{eq:lsridgecompthmproofeq2}), we find
  \begin{eqnarray}
a_{1,r}
     & = &
  \lim_{n\rightarrow\infty}
  \bar{\beta}^T  \lp A^TA - A^TA L
\lp   L^T A^TA L +  \dbover{A}^T\dbover{A}    \rp^{-1}
    L^T A^TA \rp \bar{\beta}
  \nonumber \\
      & = &
  \lim_{n\rightarrow\infty}
  \bar{\beta}  \lp   \lp  A^TA\rp^{-1}  +  L \lp \dbover{A}^T\dbover{A}\rp^{-1} L^T   \rp^{-1}
     \bar{\beta},
 \label{eq:lsridgecompthmproofeq3}
\end{eqnarray}
where the matrix inversion lemma ensures that the second equality holds. This is sufficient to establish the following corollary.

\begin{corollary}(FRM LS estimator - compact risk form) Assume the setup of Corollaries  \ref{cor:cor1} and \ref{cor:cor2}.
  Then
\begin{eqnarray}\label{eq:lscorcompeq5}
  \lim_{n\rightarrow\infty} \mE_{Z,\v,E} R(\bar{\beta},\beta_{ls})
& = &
 \frac{\alpha}{\alpha-1}  \lim_{n\rightarrow\infty}   \lp
  \bar{\beta}^T \lp   \Sigma_{\bar{Z}}^{-1}  +  L \Sigma_{\bar{E}}^{-1} L^T   \rp^{-1}
     \bar{\beta}
  \rp
  +
  \frac{1}{\alpha-1} \lim_{n\rightarrow\infty} \frac{1}{m}\tr\lp \Sigma_{\e}\rp,
  \nonumber \\
\end{eqnarray}
and
\begin{eqnarray}
  \lim_{n\rightarrow\infty}\mP_{Z,\v,E} \lp  (1-\epsilon)\mE_{Z,\v,E} R(\bar{\beta},\beta_{ls})   \leq  R(\bar{\beta},\beta_{ls})\leq (1+\epsilon)\mE_{Z,\v,E}R(\bar{\beta},\beta_{ls})\rp
& \longrightarrow & 1.\label{eq:lscorcompeq6}
\end{eqnarray}
 \label{cor:cor4}
\end{corollary}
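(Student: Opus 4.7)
The proof essentially consists of rewriting the already-established Corollary \ref{cor:cor3} formula for $\mE_{Z,\v,E} R(\bar{\beta},\beta_{ls})$ in a basis-free matrix form, with no new probabilistic content beyond what Theorem \ref{thm:ridgethm2} already provides. The plan is to start from the scalar identity
\begin{equation*}
\lim_{n\to\infty}\mE_{Z,\v,E} R(\bar{\beta},\beta_{ls})=\frac{\alpha a_{1,r}+\bar{\sigma}^2}{\alpha-1}
\end{equation*}
of Corollary \ref{cor:cor3}, and to express $a_{1,r}$, defined there through the diagonal-coordinate sums in (\ref{eq:lsridgethmaaeq2}), entirely in terms of the covariance matrices $\Sigma_{\bar{Z}}=A^TA$, $\Sigma_{\bar{E}}=\dbover{A}^T\dbover{A}$, $\Sigma_{\e}=\sigma^2\overline{A}^T\overline{A}$, and the loading matrix $L$.

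First I would invoke the identity (\ref{eq:lsridgecompthmproofeq0}), already derived in the paragraph preceding the statement, which rewrites the scalar sums as $a_{1,r}=\lim_{n\to\infty}(\c^T\Sigma\Sigma\c-\bar{\c}^T D^{-1}\bar{\c})$. Next I would apply (\ref{eq:lsridgecompthmproofeq2}), which substitutes the diagonalization (\ref{eq:ta18a0b4c1d0}) and the change of variables (\ref{eq:ta18a0b0b0}) to produce
\begin{equation*}
a_{1,r}=\lim_{n\to\infty}\bar{\beta}^T\bigl(A^TA-A^TAL\bigl(L^TA^TAL+\dbover{A}^T\dbover{A}\bigr)^{-1}L^TA^TA\bigr)\bar{\beta}.
\end{equation*}
The main (and only really non-routine) step is to convert this Schur-complement-style expression into the sought inverse-of-sum form $\bigl(\Sigma_{\bar{Z}}^{-1}+L\Sigma_{\bar{E}}^{-1}L^T\bigr)^{-1}$. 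This is precisely the Woodbury/matrix-inversion identity
\begin{equation*}
\bigl(M^{-1}+LN^{-1}L^T\bigr)^{-1}=M-ML\bigl(L^TML+N\bigr)^{-1}L^TM,
\end{equation*}
applied with $M=A^TA=\Sigma_{\bar{Z}}$ and $N=\dbover{A}^T\dbover{A}=\Sigma_{\bar{E}}$, which yields the displayed $a_{1,r}=\lim_{n\to\infty}\bar{\beta}^T(\Sigma_{\bar{Z}}^{-1}+L\Sigma_{\bar{E}}^{-1}L^T)^{-1}\bar{\beta}$. The regularity conditions $\alpha>1$ and $\alpha\kappa<1$, together with the full-rank assumptions on $A$ and $\dbover{A}$ invoked at the beginning of Section \ref{sec:analuncorr}, guarantee that both inverses above exist in the limit, so the identity is legitimate.

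Finally, using $\bar{\sigma}^2=\lim_{n\to\infty}\frac{1}{m}\tr(\Sigma_{\e})$ from (\ref{eq:ta18a0b4c4}) (equivalently (\ref{eq:ridgecorcompeq4})), I would substitute the compact $a_{1,r}$ and $\bar{\sigma}^2$ into the Corollary \ref{cor:cor3} identity, split the resulting fraction into the two stated summands $\frac{\alpha}{\alpha-1}\bar{\beta}^T(\cdots)^{-1}\bar{\beta}$ and $\frac{1}{\alpha-1}\lim\frac{1}{m}\tr(\Sigma_{\e})$, and thereby obtain (\ref{eq:lscorcompeq5}). The concentration claim (\ref{eq:lscorcompeq6}) is not re-proved: it is inherited verbatim from Corollary \ref{cor:cor3}'s corresponding line in (\ref{eq:lsridgethmaaeq4}), since the compactification step is a deterministic algebraic rewrite of the limiting mean that does not touch the underlying probabilistic fluctuations. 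The main obstacle, if any, is simply verifying invertibility of $\Sigma_{\bar{Z}}$, $\Sigma_{\bar{E}}$, and $L^T\Sigma_{\bar{Z}}L+\Sigma_{\bar{E}}$ uniformly in $n$ so that Woodbury applies in the $n\to\infty$ limit; this, however, is already built into the standing assumptions of Section \ref{sec:analuncorr}.
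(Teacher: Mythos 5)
Your proposal is correct and follows essentially the same route as the paper: it rewrites $a_{1,r}$ from Corollary \ref{cor:cor3} in matrix form via (\ref{eq:lsridgecompthmproofeq0}) and (\ref{eq:lsridgecompthmproofeq2}), applies the matrix inversion (Woodbury) lemma to reach $\bar{\beta}^T(\Sigma_{\bar{Z}}^{-1}+L\Sigma_{\bar{E}}^{-1}L^T)^{-1}\bar{\beta}$, and inherits the concentration statement unchanged, which is exactly the paper's argument culminating in (\ref{eq:lsridgecompthmproofeq3}). The only blemish is a mislabeled cross-reference to the change of variables (it is (\ref{eq:ta18a0b0}), not a doubled suffix), which does not affect the mathematics.
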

\begin{proof}
Follows from Corollary  \ref{cor:cor3} and (\ref{eq:lsridgecompthmproofeq3}).
\end{proof}

Corollary \ref{cor:cor4}   provides a convenient way to express the concentrating point of the LS excess risk so that
the bias and variance terms are clearly distinguished and obtained directly as functions of the key covariance matrices and loadings coefficients. When all statistical components of the model are uncorrelated the above excess risk expression further simplifies and becomes only a function of the loadings $L$
\begin{equation}
\mbox{(\textbf{Fully uncorrealted FRM):}} \quad \lim_{n\rightarrow\infty} \mE_{Z,\v,E} R(\bar{\beta},\beta_{ls}) \hspace{-.02in}
  =
 \frac{ \alpha  }{\alpha -1}  \lim_{n\rightarrow\infty}  \lp   \bar{\beta}^T
\lp L L^T  +  I \rp^{-1}
   \bar{\beta}
  \rp
 +\frac{\sigma^2 }{\alpha -1}.
  \label{eq:lseq3}
\end{equation}

%%%%%%%%%%%%%%%%%%%%%%%%%%%%%%%%%%%%%%%%%%%%%%%%%%%%%%%%%%%%%%%%%
%%%%%%%%%%%%%%%%%%%%%%%%%%%%%%%%%%%%%%%%%%%%%%%%%%%%%%%%%%%%%%%%%
%%%%%%%%%%%%%%%%%%%%%%%%%%%%%%%%%%%%%%%%%%%%%%%%%%%%%%%%%%%%%%%%%
%%%%%%%%%%%%%%%%%%%%%%%%%%%%%%%%%%%%%%%%%%%%%%%%%%%%%%%%%%%%%%%%%
\section{An analytical connection between FRM and LRM}
\label{sec:analfrmlrm}
%%%%%%%%%%%%%%%%%%%%%%%%%%%%%%%%%%%%%%%%%%%%%%%%%%%%%%%%%%%%%%%%%
%%%%%%%%%%%%%%%%%%%%%%%%%%%%%%%%%%%%%%%%%%%%%%%%%%%%%%%%%%%%%%%%%
%%%%%%%%%%%%%%%%%%%%%%%%%%%%%%%%%%%%%%%%%%%%%%%%%%%%%%%%%%%%%%%%%
%%%%%%%%%%%%%%%%%%%%%%%%%%%%%%%%%%%%%%%%%%%%%%%%%%%%%%%%%%%%%%%%%

As mention earlier, the factor regression considered in this paper is tightly connected to the classical linear regression. We discussed the conceptual differences earlier when the mathematical setup of the FRM was introduced. Despite these differences, the two models can still be analytically connected. We discuss the connection in this section.

To make this connection clearer, it is convenient that we recall on the (statistical) mathematical form of the FRM  from (\ref{eq:model05})
\begin{eqnarray}
\y=\bar{Z}\bar{\beta}+\e=ZA\bar{\beta}+\sigma\overline{A}\v, \quad  X=\bar{Z}L+\bar{E}=ZAL+E\dbover{A}. \label{eq:conecmodel01}
\end{eqnarray}
We assume $\overline{A}=I$ and observe that the rows of $X$ are uncorrelated and that the covariance of each row $X_{i,:},1\leq i\leq m,$ is
\begin{eqnarray}
\Sigma_{X}=L^T\Sigma_{\bar{Z}}L+\Sigma_{\bar{E}}. \label{eq:conecmodel02}
\end{eqnarray}
We now consider the LRM with such an $X$, i.e. we consider
\begin{eqnarray}
\dbover{\y}=\dbover{X}\dbover{\beta}+\dbover{\sigma}\v, \label{eq:conecmodel03}
\end{eqnarray}
where the covariance of each row of $\dbover{X}_{i,:}$ is
\begin{eqnarray}
\Sigma_{\dbover{X}} =L^T\Sigma_{\bar{Z}}L+\Sigma_{\bar{E}}, \label{eq:conecmodel04}
\end{eqnarray}
and
\begin{eqnarray}
 \dbover{\beta} & = & \Sigma_{X}^{-1}L^T\Sigma_{\bar{Z}}\bar{\beta} \nonumber \\
 \dbover{\sigma} & = & \sqrt{\sigma^2+ \bar{\beta}^T\lp L \Sigma_{\bar{E}}^{-1}  L^T +\Sigma_{\bar{Z}}^{-1}      \rp^{-1}\bar{\beta}  }, \label{eq:conecmodel05}
\end{eqnarray}
Given the independence over $i$, we gave that for any function $f:\mR^{n+1}\rightarrow \mR$
\begin{eqnarray}
\lp \lp \y_i,X_{i,:}\rp \Longleftrightarrow \lp \dbover{\y}_i,\dbover{X}_{i,:}\rp \rp
\quad \Longrightarrow \quad
\lp f\lp \y_i,X_{i,:}\rp  \Longleftrightarrow  f\lp \dbover{\y}_i,\dbover{X}_{i,:}\rp \rp, \label{eq:conecmodel06}
\end{eqnarray}
where the equivalences are in the probabilistic sense. In other words, if we can show that the pairs $\lp \y_i,X_{i,:}\rp$ and $\lp \dbover{\y}_i,\dbover{X}_{i,:}\rp$ are statistically identical, we will then have that $f\lp \y_i,X_{i,:}\rp$ and $f\lp \dbover{\y}_i,\dbover{X}_{i,:}\rp$ are statistically identical as well.

By the definitions and (\ref{eq:conecmodel02}) and (\ref{eq:conecmodel04}) we have for the covariances of $X$ and $\dbover{X}$, $\Sigma_{X}$ and $\Sigma_{\dbover{X}}$,
\begin{eqnarray}
\Sigma_{X}=\mE \lp \lp X_{i,:}\rp^TX_{i,:} \rp =\Sigma_{\dbover{X}}=\mE\lp \lp \dbover{X}_{i,:}\rp^T\dbover{X}
_{i,:} \rp = L^T\Sigma_{\bar{Z}}L+\Sigma_{\bar{E}}. \label{eq:conecmodel07}
\end{eqnarray}
We then also have for the cross-correlations
\begin{eqnarray}
 \mE \lp \y_iX_{i,:}\rp
 = \mE \lp  \bar{\beta}^T \lp\bar{Z}_{i,:}\rp^TX_{i,:} \rp
 = \mE \lp \bar{\beta}^T \lp\bar{Z}_{i,:}\rp^T \bar{Z}_{i,:}L \rp
 =\bar{\beta}^T\Sigma_{\bar{Z}} L, \label{eq:conecmodel08}
\end{eqnarray}
and
\begin{eqnarray}
 \mE \lp \dbover{\y}_i \dbover{X}_{i,:}\rp
 = \mE\lp\dbover{\beta}^T   \lp \dbover{X}_{i,:}\rp^T \dbover{X}_{i,:}\rp
 =\dbover{\beta}^T    \Sigma_{\dbover{X}}. \label{eq:conecmodel09}
\end{eqnarray}
Combining (\ref{eq:conecmodel05}), (\ref{eq:conecmodel07}), (\ref{eq:conecmodel08}), and (\ref{eq:conecmodel09}), we then find
\begin{eqnarray}
 \mE \lp \dbover{\y}_i \dbover{X}_{i,:}\rp
  =\dbover{\beta}^T    \Sigma_{\dbover{X}}
  =\bar{\beta}^T \Sigma_{\bar{Z}} L \Sigma_{\X}^{-1} \Sigma_{\dbover{X}}
  =\bar{\beta}^T \Sigma_{\bar{Z}} L =   \mE \lp \y_iX_{i,:}\rp. \label{eq:conecmodel010}
\end{eqnarray}
We also have for the variance of $\y_i$
\begin{eqnarray}
 \mE \lp \y_i^2\rp
= \mE \lp \bar{\beta}^T \lp\bar{Z}_{i,:}\rp^T \bar{Z}_{i,:}\bar{\beta} \rp +\sigma^2\mE \v_i^2
= \bar{\beta}^T \Sigma_{\bar{Z}} \bar{\beta} +\sigma^2, \label{eq:conecmodel011}
\end{eqnarray}
and
\begin{eqnarray}
 \mE \lp \dbover{\y}_i^2\rp
& = & \mE \lp \dbover{\beta}^T \lp\dbover{X}_{i,:}\rp^T \dbover{X}_{i,:}\dbover{\beta} \rp +\dbover{\sigma}^2\mE \v_i^2
= \dbover{\beta}^T \Sigma_{\dbover{X}} \dbover{\beta} +\dbover{\sigma}^2
  \nonumber \\
& = &
\lp \Sigma_{X}^{-1}L^T\Sigma_{\bar{Z}}\bar{\beta}\rp^T  \Sigma_{\dbover{X}}  \Sigma_{X}^{-1}L^T\Sigma_{\bar{Z}}\bar{\beta}
+\bar{\beta}^T\lp L \Sigma_{\bar{E}}^{-1}  L^T +\Sigma_{\bar{Z}}^{-1}      \rp^{-1}\bar{\beta} + \sigma^2 \nonumber \\
& = &
\bar{\beta}^T \Sigma_{\bar{Z}}L \Sigma_{X}^{-1} L^T\Sigma_{\bar{Z}}\bar{\beta}
+\bar{\beta}^T\lp L \Sigma_{\bar{E}}^{-1}  L^T +\Sigma_{\bar{Z}}^{-1}      \rp^{-1}\bar{\beta}  + \sigma^2 \nonumber \\
& = &
\bar{\beta}^T \Sigma_{\bar{Z}}L   \lp L^T\Sigma_{\bar{Z}}L+\Sigma_{\bar{E}} \rp^{-1} L^T\Sigma_{\bar{Z}}\bar{\beta}
+\bar{\beta}^T\lp L \Sigma_{\bar{E}}^{-1}  L^T +\Sigma_{\bar{Z}}^{-1}      \rp^{-1}\bar{\beta}  + \sigma^2  \nonumber \\
& = & \bar{\beta}^T \Sigma_{\bar{Z}} \bar{\beta} +\sigma^2 \nonumber  \\
& = &  \mE \lp \y_i^2\rp, \label{eq:conecmodel012}
\end{eqnarray}
where the last equality is a consequence of the matrix inversion lemma
\begin{eqnarray}
 \lp L \Sigma_{\bar{E}}^{-1}  L^T +\Sigma_{\bar{Z}}^{-1}      \rp^{-1}
& = &  \Sigma_{\bar{Z}}
-
  \Sigma_{\bar{Z}}L  \lp L^T\Sigma_{\bar{Z}}L+\Sigma_{\bar{E}} \rp^{-1} L^T\Sigma_{\bar{Z}}. \label{eq:conecmodel013}
\end{eqnarray}
A combination of  (\ref{eq:conecmodel07}), (\ref{eq:conecmodel010}), and (\ref{eq:conecmodel012}) and the fact that $\y,X,\dbover{\y}$, and $\dbover{X}$ are formed as zero-mean linearly  Gaussian, gives that
$\lp \lp \y_i,X_{i,:}\rp \Longleftrightarrow \lp \dbover{\y}_i,\dbover{X}_{i,:}\rp \rp$ and  that (\ref{eq:conecmodel06}) indeed holds.

One also needs to adjust for the differences in the definitions of the excess risk in two models. For example, for the LRM risk we have (see also \cite{Stojnicridge24})
\begin{eqnarray}
R_{LRM}(\dbover{\beta},\hat{\beta})
& = &
\lp\dbover{\beta}-\hat{\beta}\rp^T
\Sigma_{\dbover{X}}
\lp\dbover{\beta}-\hat{\beta}\rp \nonumber \\
& = &
\dbover{\beta}^T \Sigma_{\dbover{X}}\dbover{\beta}
-2\dbover{\beta}^T\Sigma_{\dbover{X}}\hat{\beta}
+\hat{\beta}^T\Sigma_{\dbover{X}}\hat{\beta} \nonumber \\
& = &
\lp  \Sigma_{X}^{-1}L^T\Sigma_{\bar{Z}}\bar{\beta}
 \rp^T \Sigma_{\dbover{X}} \Sigma_{X}^{-1}L^T\Sigma_{\bar{Z}}\bar{\beta}
-2\lp \Sigma_{X}^{-1}L^T\Sigma_{\bar{Z}}\bar{\beta}
\rp^T\Sigma_{\dbover{X}}\hat{\beta}
+\hat{\beta}^T\Sigma_{\dbover{X}}\hat{\beta} \nonumber \\
& = &
 \bar{\beta}^T \Sigma_{\bar{Z}} L    \Sigma_{X}^{-1}
 L^T\Sigma_{\bar{Z}}\bar{\beta}
-2 \bar{\beta}^T \Sigma_{\bar{Z}} L    \Sigma_{X}^{-1} \Sigma_{\dbover{X}}\hat{\beta}
+\hat{\beta}^T\Sigma_{\dbover{X}}\hat{\beta} \nonumber \\
& = &
 \bar{\beta}^T \Sigma_{\bar{Z}} L   \lp  L^T\Sigma_{\bar{Z}} L  + \Sigma_{\bar{E}} \rp^{-1}
 L^T\Sigma_{\bar{Z}}\bar{\beta}
-2 \bar{\beta}^T \Sigma_{\bar{Z}} L    \hat{\beta}
+\hat{\beta}^T\lp  L^T\Sigma_{\bar{Z}} L  + \Sigma_{\bar{E}} \rp\hat{\beta}.
\label{eq:conecmodel014}
\end{eqnarray}
On the other hand, we here for the FRM have
\begin{eqnarray}
R_{FRM}(\bar{\beta},\hat{\beta})=\lp \bar{\beta}-L\hat{\beta}\rp^T \Sigma_{\bar{Z}}\lp\bar{\beta}-L\hat{\beta}\rp
+ \hat{\beta}^T\Sigma_{\bar{E}}\hat{\beta}
= \bar{\beta}^T\Sigma_{\bar{Z}}\bar{\beta} -2\hat{\beta}^TL^T\Sigma_{\bar{Z}}\bar{\beta}
+  \hat{\beta}^T L^T\Sigma_{\bar{Z}} L\hat{\beta}
+ \hat{\beta}^T\Sigma_{\bar{E}}\hat{\beta}. \label{eq:conecmodel015}
\end{eqnarray}
The above basically means that the FRM excess risks are connected to the corresponding LRM ones  via the following relation
 \begin{eqnarray}
R_{FRM}(\bar{\beta},\hat{\beta})=
R_{LRM}(\dbover{\beta},\hat{\beta})+
 \bar{\beta}^T\Sigma_{\bar{Z}}\bar{\beta}
-
 \bar{\beta}^T \Sigma_{\bar{Z}} L   \lp  L^T\Sigma_{\bar{Z}} L  + \Sigma_{\bar{E}} \rp^{-1}
 L^T\Sigma_{\bar{Z}}\bar{\beta}. \label{eq:conecmodel016}
\end{eqnarray}
A combination of (\ref{eq:conecmodel013}) and (\ref{eq:conecmodel016}) finally gives
 \begin{eqnarray}
R_{FRM}(\bar{\beta},\hat{\beta})=
R_{LRM}(\dbover{\beta},\hat{\beta})+
 \bar{\beta}^T \lp L \Sigma_{\bar{E}}^{-1}  L^T +\Sigma_{\bar{Z}}^{-1}      \rp^{-1}
\bar{\beta}. \label{eq:conecmodel017}
\end{eqnarray}

%%%%%%%%%%%%%%%%%%%%%%%%%%%%%%%%%%%%%%%%%%%%%%%%%%%%%%%%%%%%%%%%%
%%%%%%%%%%%%%%%%%%%%%%%%%%%%%%%%%%%%%%%%%%%%%%%%%%%%%%%%%%%%%%%%%
%%%%%%%%%%%%%%%%%%%%%%%%%%%%%%%%%%%%%%%%%%%%%%%%%%%%%%%%%%%%%%%%%
%%%%%%%%%%%%%%%%%%%%%%%%%%%%%%%%%%%%%%%%%%%%%%%%%%%%%%%%%%%%%%%%%
\section{Numerical results}
\label{sec:numresall}
%%%%%%%%%%%%%%%%%%%%%%%%%%%%%%%%%%%%%%%%%%%%%%%%%%%%%%%%%%%%%%%%%
%%%%%%%%%%%%%%%%%%%%%%%%%%%%%%%%%%%%%%%%%%%%%%%%%%%%%%%%%%%%%%%%%
%%%%%%%%%%%%%%%%%%%%%%%%%%%%%%%%%%%%%%%%%%%%%%%%%%%%%%%%%%%%%%%%%
%%%%%%%%%%%%%%%%%%%%%%%%%%%%%%%%%%%%%%%%%%%%%%%%%%%%%%%%%%%%%%%%%

%%%%%%%%%%%%%%%%%%%%%%%%%%%%%%%%%%%%%%%%%%%%%%%%%%%%%%%%%%%%%%%%%
%%%%%%%%%%%%%%%%%%%%%%%%%%%%%%%%%%%%%%%%%%%%%%%%%%%%%%%%%%%%%%%%%
%%%%%%%%%%%%%%%%%%%%%%%%%%%%%%%%%%%%%%%%%%%%%%%%%%%%%%%%%%%%%%%%%
%%%%%%%%%%%%%%%%%%%%%%%%%%%%%%%%%%%%%%%%%%%%%%%%%%%%%%%%%%%%%%%%%
\subsection{A generic setup}
\label{sec:numresuncorr}
%%%%%%%%%%%%%%%%%%%%%%%%%%%%%%%%%%%%%%%%%%%%%%%%%%%%%%%%%%%%%%%%%
%%%%%%%%%%%%%%%%%%%%%%%%%%%%%%%%%%%%%%%%%%%%%%%%%%%%%%%%%%%%%%%%%
%%%%%%%%%%%%%%%%%%%%%%%%%%%%%%%%%%%%%%%%%%%%%%%%%%%%%%%%%%%%%%%%%
%%%%%%%%%%%%%%%%%%%%%%%%%%%%%%%%%%%%%%%%%%%%%%%%%%%%%%%%%%%%%%%%%

The above theoretical analysis is supplemented with the results obtained through numerical simulations. Let $\bar{\cA}(q)$ be a matrix defined via its $ij$-th entry, $\bar{\cA}_{ij}(q)$, as
 \begin{eqnarray}
\bar{\cA}_{ij}(q)=q^{|j-i|}.  \label{eq:numuncorr1}
\end{eqnarray}
Let $\cA(q)$ then be
 \begin{eqnarray}
\cA(q)=\frac{1}{2} \lp I + \bar{\cA}(q)\rp.  \label{eq:numuncorr2}
\end{eqnarray}
 For the covariance matrices, we take
\begin{eqnarray}
A^TA & = &\cA(q)
\nonumber \\
\overline{A}^T\overline{A} &= & \cA(q_v)
\nonumber \\
\dbover{A}^T\dbover{A} &= & \sigma_e^2\cA(q_e).
  \label{eq:numuncorr3}
\end{eqnarray}
To ensure that $L$ is generated so that it retains the same form as dimensions change,
we take it as the leading $k$ eigenvectors of $\cA(q_L)$. In Figure \ref{fig:fig1}, we choose $q=0.5,q_v=0,q_e=0.3,q_L=0.3,\sigma_e^2=0.01$, $n=600$, and
\begin{eqnarray}
\bar{\beta}=\frac{1}{\sqrt{n}}\begin{bmatrix}
               1 & 1 & \dots & 1
             \end{bmatrix}.
  \label{eq:numuncorr4}
\end{eqnarray}
As Figure \ref{fig:fig1} shows, dependence on the over-parametrization ratio, one needs to ensure that the ratio $\frac{k}{m}$ is kept fixed. We take for all simulations $\kappa=\frac{k}{m}=\frac{1}{2}$ and $\sigma^2=0.2$. There are three curves and each of them corresponds to one of the studied estimators. The theoretical predictions of the GLS and LS estimators are full green and blue curves and the corresponding one of the \emph{optimally tuned} (over $\lambda$) ridge estimator is the dotted red curve. The dots on the curves correspond to the simulated values. Even for fairly small simulated dimensions, we observe an excellent agreement between the theoretical predictions and simulations. As we choose to keep $n$ fixed, as $\frac{1}{\alpha}$ increases both $m$ and $k$ are decreasing. Although it is not necessarily easy to think of relations of such dimensions as being linear the theoretical predictions remain very accurate.
Also, due to decreased dimensions $m$ and $k$, one observes that the null-risk drops from its theoretical value. To see why this happens let us first discuss the theoretical null-risk limit. Recalling on (\ref{eq:model020}) we have for $\hat{\beta}=0$
\begin{eqnarray}
  R(\bar{\beta},0) =
\left  \|\Sigma V^T \bar{\beta}\right \|_2^2
=  \bar{\beta}^T V\Sigma\Sigma V^T \bar{\beta}
= \bar{\beta}^T  A^TA \bar{\beta}
=\bar{\beta}^T \cA(q) \bar{\beta}
=\frac{1}{2}\bar{\beta}^T\lp I + \bar{\cA}(q)\rp \bar{\beta}
=\frac{1}{k}\sum_{i=1}^{k} (k-i)q^i.
  \label{eq:numuncorr5}
\end{eqnarray}
In the limit when $n,k\rightarrow\infty$ we have
\begin{eqnarray}
\lim_{k\rightarrow\infty}  R(\bar{\beta},0)
& = &\lim_{k\rightarrow\infty} \frac{1}{k}\sum_{i=1}^{k} (k-i)q^i
 =\lim_{k\rightarrow\infty} \sum_{i=1}^{k} q^i -\lim_{k\rightarrow\infty}\frac{1}{k}\sum_{i=1}^{k}i q^i \nonumber \\
& = & \frac{1}{1-q}
 -\lim_{k\rightarrow\infty}\frac{1}{k}\frac{d}{dq}\sum_{i=1}^{k}q^{i+1}
 = \frac{1}{1-q}
 -\lim_{k\rightarrow\infty}\frac{1}{k}\frac{d}{dq}\frac{1}{1-q} \nonumber \\
& = & \frac{1}{1-q}
 -\lim_{k\rightarrow\infty}\frac{1}{k} \frac{1}{(1-q)^2}
 =  \frac{1}{1-q}.
   \label{eq:numuncorr5}
\end{eqnarray}
For $q=0.5$ one has $\lim_{k\rightarrow\infty}  R(\bar{\beta},0)=2$ which is exactly as the figure shows for lower values of $\frac{1}{\alpha}$. For these values, $m$ and $k$ are sufficiently large that the above limiting calculations kicks in. On the other hand, as $\frac{1}{\alpha}$ increases, due to \emph{finite dimensions effect}, the true null-risk deviates from the above limiting calculations. Nonetheless the theoretical predictions for both the GLS and the ridge estimator's excess risks remain fully accurate (when $\frac{1}{\alpha}>1$, the LS estimator is not defined). We particularly selected an example where the null-risk changes to showcase the ability of the theoretical risk predictions to remain in a perfect agreement with the simulations.

\begin{figure}[h]
%\begin{minipage}[b]{.5\linewidth}
\centering
\centerline{\includegraphics[width=1\linewidth]{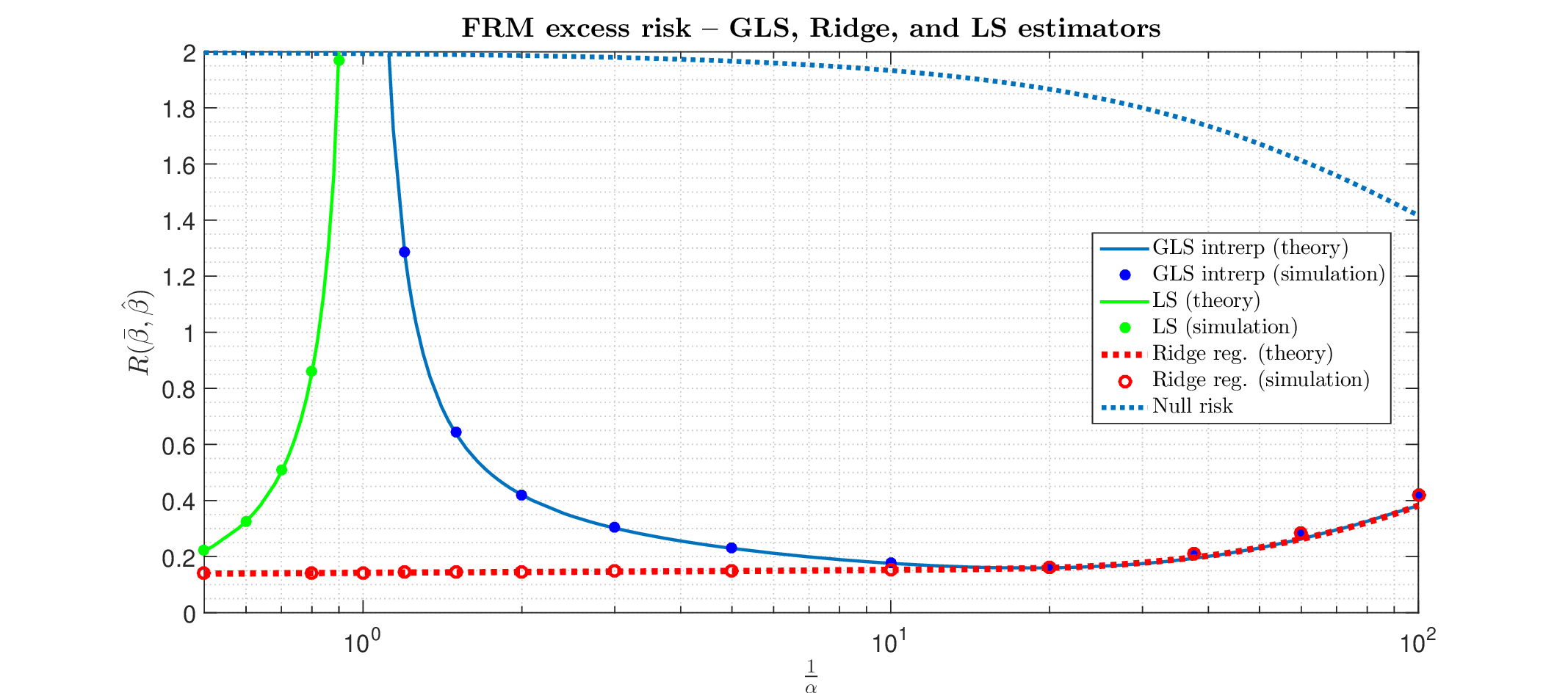}}
%\end{minipage}
%\begin{minipage}[b]{.5\linewidth}
%\centering
%\centerline{\epsfig{figure=finprerral08.eps,width=9cm,height=6.5cm}}
%\end{minipage}
\caption{Excess risk -- low SNR;  Covariance matrices are: $A=\cA(q)$, and $\dbover{A}=\cA(q_e)$; $q=0.5,q_e=0.3$.}
\label{fig:fig1}
\end{figure}

The choice of the scaling of loading matrices is done very particularly as well. Namely, one can note that we do not have dimension dependent scaling which we should have if the loadings are unitary. Instead, we selected a small $0.01$ but fixed scaling of the noise $E$'s covariance. Ultimately we have for the factors associated SNR, $\mbox{SNR}_f$
\begin{eqnarray}
\mbox{SNR}_f
& = & \frac{\tr\lp A LL^T A^T\rp}{\tr\lp \Sigma_{\bar{E}}\rp}
=\frac{\tr\lp A^TA\rp}{\tr\lp \dbover{A}^T\dbover{A} \rp}
=\frac{\tr\lp \cA(q)\rp}{\tr\lp \sigma_{e}^2 \cA(q_e)\rp} \nonumber \\
& = &\frac{\tr\lp \frac{1}{2}\lp 1+\bar{\cA}(q) \rp\rp}{\tr\lp \sigma_{e}^2 \frac{1}{2}\lp 1+\bar{\cA} (q_e)\rp \rp}
=\frac{k}{\sigma_e^2 n}=\frac{1}{\sigma_e^2}\frac{k}{m}\frac{m}{n}=\frac{\alpha\kappa}{\sigma_e^2}.
    \label{eq:numuncorr6}
 \end{eqnarray}
Since $\kappa=\frac{1}{2}$ and $\sigma_e^2=0.01$ are fixed, we obtain that the factor SNR behaves as
\begin{eqnarray}
\mbox{SNR}_f =\frac{\alpha\kappa}{\sigma_e^2} =\frac{50}{\frac{1}{\alpha}},
    \label{eq:numuncorr7}
 \end{eqnarray}
i.e., it is a decaying function of the over-parametrization ratio $\frac{1}{\alpha}$. For example for $\frac{1}{\alpha}=60$ we have
$\mbox{SNR}_f =\frac{5}{6}$, i.e. a fairly low SNR ratio. Yet the risk still remains well below the null one. It should also be noted, that by a no surprise the risk curve exhibits U-shape lack of monotonicity in the over-parameterized regime. However, what is a bit remarkable is that it takes rather strong over-parameterizations of the order of a few tens to ensure that the risk curve starts turning upward. As the above example shows, before reaching such over-parameterizations it remarkably remains way below the corresponding null-risk one (the excess risk for both the GLS and optimally tuned ridge is about ten times smaller than the null-risk). This is in part due to the fact that the SNR of $\y$ linearity is strong ($\mbox{SNR}_{\y}=\frac{\|\bar{\beta}\|_2^2}{\sigma^2}=5$) and in part due to a residual factor effect and the fact that the number of factors is smaller than the sample size, i.e., $\kappa=\frac{1}{2}$. As we will see in the next subsection, when everything is scaled properly the FRM model, in addition to exhibiting the double-descent phenomenon, also allows for basic interpolator and optimally tuned ridge regressor to achieve a monotonic behavior as the over-parametrization increases.

%%%%%%%%%%%%%%%%%%%%%%%%%%%%%%%%%%%%%%%%%%%%%%%%%%%%%%%%%%%%%%%%%
%%%%%%%%%%%%%%%%%%%%%%%%%%%%%%%%%%%%%%%%%%%%%%%%%%%%%%%%%%%%%%%%%
%%%%%%%%%%%%%%%%%%%%%%%%%%%%%%%%%%%%%%%%%%%%%%%%%%%%%%%%%%%%%%%%%
%%%%%%%%%%%%%%%%%%%%%%%%%%%%%%%%%%%%%%%%%%%%%%%%%%%%%%%%%%%%%%%%%
\subsection{An uncorrelated setup}
\label{sec:numresuncor}
%%%%%%%%%%%%%%%%%%%%%%%%%%%%%%%%%%%%%%%%%%%%%%%%%%%%%%%%%%%%%%%%%
%%%%%%%%%%%%%%%%%%%%%%%%%%%%%%%%%%%%%%%%%%%%%%%%%%%%%%%%%%%%%%%%%
%%%%%%%%%%%%%%%%%%%%%%%%%%%%%%%%%%%%%%%%%%%%%%%%%%%%%%%%%%%%%%%%%
%%%%%%%%%%%%%%%%%%%%%%%%%%%%%%%%%%%%%%%%%%%%%%%%%%%%%%%%%%%%%%%%%

As a specific numerical setup, we now take $\Sigma_{\bar{Z}}=I$, $\Sigma_{\bar{E}}=I$, and $LL^T=c_LI$.  We then from (\ref{eq:lseq3})  have for the LS estimator
\begin{equation}
  \lim_{n\rightarrow\infty} \mE_{Z,\v,E} R(\bar{\beta},\beta_{ls}) \hspace{-.02in}
  =
 \frac{ \alpha  }{\alpha -1} \frac{1}{c_L+1}
 +\frac{\sigma^2 }{\alpha -1}.
  \label{eq:nrseq1}
\end{equation}
For the GLS estimator, we utilize (\ref{eq:corcompeq2}) to obtain $\hat{\gamma}$ as a unique positive solution of
\begin{eqnarray}
 \frac{\alpha\kappa}{1+\hat{\gamma}\lp c_L+1\rp} + \frac{1-\alpha\kappa}{1+\hat{\gamma}}
 & = & 1-\alpha.
\label{eq:nrseq2}
\end{eqnarray}
We then find
\begin{eqnarray}
\hat{\gamma}=
\frac{\sqrt{\alpha^2 c_L^2 - 2 \alpha c_L ((c_L + 2) \alpha \kappa - 1) + (c_L \alpha\kappa + 1)^2} + \alpha (c_L + 2) - c_L \alpha\kappa - 1}{2 (1-\alpha ) (c_L + 1)}.
\label{eq:nrseq2a0}
\end{eqnarray}
For such $\hat{\gamma}$, we, from (\ref{eq:corcompeq4}), then also find
\begin{eqnarray}
a_{1}
& = &
1-\frac{\hat{\gamma} c_L}{1+\hat{\gamma}\lp c_L+1\rp}-\frac{\hat{\gamma} c_L}{\lp 1+\hat{\gamma}\lp c_L+1\rp\rp^2}
 \nonumber \\
a_{2}
  & = &
   \frac{\hat{\gamma}^2}{\alpha^2} \lp  \frac{{\alpha\kappa \lp c_L+1 \rp^2}}{\lp 1+\hat{\gamma} (c_L+1) \rp^2}
+\frac{1-\alpha\kappa}{\lp \hat{\gamma}+1 \rp^2} \rp
 \nonumber \\
 \bar{\sigma}^2
 & = &
 \sigma^2.
\label{eq:nrseq3}
\end{eqnarray}
 Then
\begin{eqnarray}\label{eq:nrseq4}
  \lim_{n\rightarrow\infty} \mE_{Z,\v,E} R(\bar{\beta},\beta_{gls})
& = &
 \frac{a_1+\alpha \bar{\sigma}^2a_2 }{1-\alpha a_2}, \nonumber \\
\end{eqnarray}
For the Ridge estimator we utilize (\ref{eq:ridgecorcompeq2}) to first obtain $\hat{\gamma}$ as a unique positive solution of
\begin{eqnarray}
 \frac{\alpha\kappa}{\lambda+\hat{\gamma}\lp c_L+1\rp} + \frac{1-\alpha\kappa}{\lambda+\hat{\gamma}}
 & = & \frac{1-\alpha(1-\hat{\gamma})}{\lambda},
\label{eq:nrseq5}
\end{eqnarray}
and then
\begin{eqnarray}
a_{1,r}
& = &
1-\frac{\hat{\gamma} c_L}{\lambda+\hat{\gamma}\lp c_L+1\rp}-\frac{\lambda\hat{\gamma} c_L}{\lp \lambda+\hat{\gamma}\lp c_L+1\rp\rp^2}
\nonumber \\
a_{2,r}
  & = &
   \frac{\hat{\gamma}^2}{\alpha^2} \lp  \frac{{\alpha\kappa \lp c_L+1 \rp^2}}{\lp \lambda+\hat{\gamma} (c_L+1) \rp^2}
+\frac{1-\alpha\kappa}{\lp \hat{\gamma}+\lambda \rp^2} \rp
 \nonumber \\
 \bar{\sigma}^2
 & = &
 \sigma^2,
\label{eq:nrseq6}
\end{eqnarray}
 and finally
\begin{eqnarray}\label{eq:nrseq7}
  \lim_{n\rightarrow\infty} \mE_{Z,\v,E} R(\bar{\beta},\beta_{rr}(\lambda))
& = &
  \frac{a_{1,r}+\alpha \bar{\sigma}^2a_{2,r} }{1-\alpha a_{2,r}}.
\end{eqnarray}
The above uncorrelated scenario is very similar the one considered in numerical simulations in \cite{BSMW22} and  in both numerical simulations and the theoretical analysis of the latent space model in \cite{HMRT22}. In Figure \ref{fig:fig2} we show the obtained results for fixed $n=600$, $\kappa=0.5$, $\sigma^2=0.2$, while varying $\alpha$ and $c_L=\frac{4n}{k}=\frac{4}{\alpha\kappa}$ . As earlier, the full/solid curves are theoretical predictions and the corresponding dots are the simulated results. In Table \ref{tab:tab1}, we also supplement the visualized representation from Figure \ref{fig:fig2} with several concrete numerical values. One ratio, $\frac{1}{\alpha}=0.7$, from the under-parameterized and the other, $\frac{1}{\alpha}=3$, from the over-parameterized regime are selected. An excellent agreement between the theoretical and simulated values for all key optimizing quantities is observed. Moreover, already for a mild over-parametrization of $\frac{1}{\alpha}=3$, the GLS risk generalizes fairly well and (despite having a zero training error) is fairly close to the ridge one. Needless to say that the GLS requires no tuning and that ridge results are obtained for optimally tuned $\lambda$ (which practically may not always be achievable). This is exactly in agreement with the neural network zero-training observations.

\begin{figure}[h]
%\begin{minipage}[b]{.5\linewidth}
\centering
\centerline{\includegraphics[width=1\linewidth]{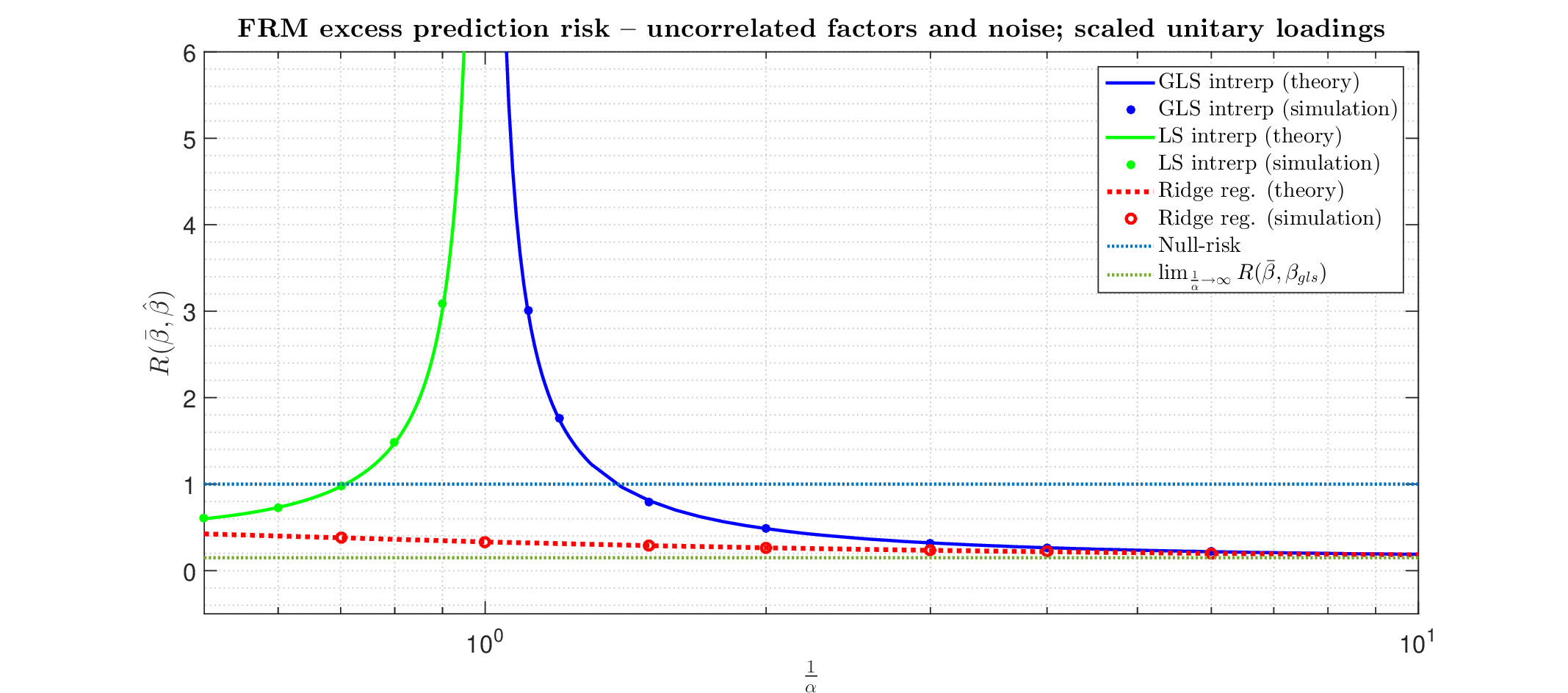}}
%\end{minipage}
%\begin{minipage}[b]{.5\linewidth}
%\centering
%\centerline{\epsfig{figure=finprerral08.eps,width=9cm,height=6.5cm}}
%\end{minipage}
\caption{Excess risk; uncorrelated factors and noise and scaled unitary loadings}
\label{fig:fig2}
\end{figure}

 \begin{table}[h]
  \caption{Optimizing quantities associated with the FRM excess risk -- \textbf{theoretical/\bl{simulated}}}
  \label{tab:tab1}
  \centering
 {\small
  \begin{tabular}{l||c|c||c|c}
    \hline\hline
  \textbf{Over-parametrization} $\frac{1}{\alpha}$ & \multicolumn{2}{c||}{$\mathbf{0.7}$}  & \multicolumn{2}{c}{$\mathbf{3}$}                   \\
    \hline
   \textbf{Estimator}  & \dgr{\textbf{LS}}  & \dgr{\textbf{Ridge}} &  \dgr{\textbf{GLS}} & \dgr{\textbf{Ridge}}
   \\
         \hline\hline
    \dgr{ \textbf{Estimator's norm:}} $\mE\|\hat{\beta}\|_2^2$ & $\mathbf{0.4535}/\bl{\mathbf{0.4546}}$  & $\mathbf{0.0976}/\bl{\mathbf{0.0978}}$           & $\mathbf{0.1031}/\bl{\mathbf{0.1028}}$ &  $\mathbf{0.0408}/\bl{\mathbf{0.0407}}$\\
         \hline
    \dgr{ \textbf{Optimizing objective:}} $\mE\xi$     & $\mathbf{0.1051}/\bl{\mathbf{0.1059}}$  & $\mathbf{0.3455}/\bl{\mathbf{0.3462}}$           & $\mathbf{0.1031}/\bl{\mathbf{0.1028}}$ & $\mathbf{0.1602}/\bl{\mathbf{0.1602}}$
   \\
         \hline\hline
    \dgr{ \textbf{In-sample risk:}} $\mE\frac{1}{m}\|\y-X\hat{\beta}\|_2^2$  & $\mathbf{0.1051}/\bl{\mathbf{0.1059}}$  &  $\mathbf{1.9730}/\bl{\mathbf{1.9765}}$   & $\mathbf{0}/\bl{\mathbf{0}}$  &  $\mathbf{0.0503}/\bl{\mathbf{0.0505}}$
    \\
         \hline\hline
   \dgr{\textbf{Out-of-sample risk:}} $\mE R\lp\bar{\beta},\hat{\beta}\rp$  & $\mathbf{0.9751}/\bl{\mathbf{0.9756}}$  & $\mathbf{0.3764}/\bl{\mathbf{0.3759}}$           & $\mathbf{0.3205}/\bl{\mathbf{0.3205}}$ & $\mathbf{0.2347}/\bl{\mathbf{0.2373}}$  \\
         \hline\hline
  \end{tabular}}
\end{table}

%%%%%%%%%%%%%%%%%%%%%%%%%%%%%%%%%%%%%%%%%%%%%%%%%%%%%%%%%%%%%%%%%
%%%%%%%%%%%%%%%%%%%%%%%%%%%%%%%%%%%%%%%%%%%%%%%%%%%%%%%%%%%%%%%%%
%%%%%%%%%%%%%%%%%%%%%%%%%%%%%%%%%%%%%%%%%%%%%%%%%%%%%%%%%%%%%%%%%
%%%%%%%%%%%%%%%%%%%%%%%%%%%%%%%%%%%%%%%%%%%%%%%%%%%%%%%%%%%%%%%%%
\subsubsection{Monotonicity in the over-parameterized regime}
\label{sec:crate}
%%%%%%%%%%%%%%%%%%%%%%%%%%%%%%%%%%%%%%%%%%%%%%%%%%%%%%%%%%%%%%%%%
%%%%%%%%%%%%%%%%%%%%%%%%%%%%%%%%%%%%%%%%%%%%%%%%%%%%%%%%%%%%%%%%%
%%%%%%%%%%%%%%%%%%%%%%%%%%%%%%%%%%%%%%%%%%%%%%%%%%%%%%%%%%%%%%%%%
%%%%%%%%%%%%%%%%%%%%%%%%%%%%%%%%%%%%%%%%%%%%%%%%%%%%%%%%%%%%%%%%%

Given the above explicit solution for the GLS excess risk, one can precisely determine the type of monotonicity. As the over-parametrization $\frac{1}{\alpha}\rightarrow\infty$, one clearly has $\alpha\rightarrow 0$. We also have for the factor SNR
\begin{eqnarray}
\mbox{SNR}_f=\frac{\tr\lp L^TA^TAL\rp}{\tr\lp\Sigma_{\bar{E}}\rp}=\frac{c_L k}{n}.
\label{eq:nrseq0}
\end{eqnarray}
To have it properly scaled, we set
\begin{eqnarray}
c_L=\frac{c_ln}{k}=\frac{c_l}{\alpha\kappa},
 \label{eq:nrseq0a0}
\end{eqnarray}
where the following simply confirms that $c_l$ is exactly the SNR of factor part of the model,
\begin{eqnarray}
\mbox{SNR}_f =\frac{c_L k}{n}=c_l.
\label{eq:nrseq0a1}
\end{eqnarray}
Keeping in mind that $c_l$ and $\kappa$ are fixed and $\alpha\rightarrow 0$, we from (\ref{eq:nrseq2a0}) further have
\begin{eqnarray}
\hat{\gamma}
& = &
\frac{\sqrt{\alpha^2 c_L^2 - 2 \alpha c_L ((c_L + 2) \alpha \kappa - 1) + (c_L \alpha\kappa + 1)^2} + \alpha (c_L + 2) - c_L \alpha\kappa - 1}{2 (1-\alpha ) (c_L + 1)}  \nonumber \\
& = &
\alpha\kappa\frac{\sqrt{\frac{c_l^2}{\kappa^2} +2\frac{c_l}{\kappa}  - 4 \alpha c_l -2\frac{c_l^2}{\kappa}  + (c_l + 1)^2} + \alpha (c_L + 2) - c_L \alpha\kappa - 1}{2 (1-\alpha ) (c_l + \alpha\kappa)}  \nonumber \\
& \rightarrow &
\alpha\kappa\frac{\sqrt{\frac{c_l^2}{\kappa^2} +2\frac{c_l}{\kappa}  -2\frac{c_l^2}{\kappa}  + (c_l + 1)^2}  +\frac{c_l}{\kappa} -c_l - 1}{2 c_l }
\nonumber \\
& \rightarrow & \alpha\kappa z,
\label{eq:nrseq8}
\end{eqnarray}
where
\begin{eqnarray}
 z=\frac{\sqrt{\frac{c_l^2}{\kappa^2} +2\frac{c_l}{\kappa}  -2\frac{c_l^2}{\kappa}  + (c_l + 1)^2} +\frac{c_l}{\kappa} -c_l- 1}{2 c_l }.
\label{eq:nrseq8a0}
\end{eqnarray}
A combination of (\ref{eq:nrseq0a0}) and  (\ref{eq:nrseq8} gives
\begin{eqnarray}
\hat{\gamma}c_L \rightarrow 0\quad \mbox{and}\quad
\hat{\gamma}c_L
 & \rightarrow & c_L\alpha\kappa z =c_l z,
\label{eq:nrseq8a1}
\end{eqnarray}
From (\ref{eq:nrseq3}), we then further have
\begin{eqnarray}
a_{1}
& = &
1-\frac{\hat{\gamma} c_L}{1+\hat{\gamma}\lp c_L+1\rp}-\frac{\hat{\gamma} c_L}{\lp 1+\hat{\gamma}\lp c_L+1\rp\rp^2}
=1
-\frac{\hat{\gamma} c_L\lp 2+\hat{\gamma}\lp c_L+1\rp\rp}{\lp 1+\hat{\gamma}\lp c_L+1\rp\rp^2}
\nonumber \\
&  \rightarrow &
1
-\frac{zc_l\lp 2+zc_l\rp}{\lp 1+zc_l\rp^2}
= \frac{1}{\lp 1+zc_l\rp^2}
 \nonumber \\
\alpha a_{2}
  & = &
   \frac{\hat{\gamma}^2}{\alpha^2} \lp  \frac{{\alpha^2\kappa \lp c_L+1 \rp^2}}{\lp 1+\hat{\gamma} (c_L+1) \rp^2}
+\alpha\frac{1-\alpha\kappa}{\lp \hat{\gamma}+1 \rp^2} \rp
\rightarrow \kappa^2 z^2 \frac{c_l^2}{\kappa\lp 1+c_lz \rp^2}
= \frac{\kappa z^2c_l^2}{\lp 1+c_lz \rp^2}
 \nonumber \\
 \bar{\sigma}^2
 & = &
 \sigma^2.
\label{eq:nrseq9}
\end{eqnarray}
Finally, from (\ref{eq:nrseq7}), we obtain
\begin{eqnarray}\label{eq:nrseq10}
  \lim_{\alpha\rightarrow 0}   \lim_{n\rightarrow\infty} \mE_{Z,\v,E} R(\bar{\beta},\beta_{gls})
& = &
 \lim_{\alpha\rightarrow 0}  \frac{a_{1}+\alpha \bar{\sigma}^2a_{2} }{1-\alpha a_{2}}
  =    \frac{1 + \kappa z^2c_l^2\bar{\sigma}^2 }{\lp 1+c_lz \rp^2- \kappa z^2c_l^2}.
\end{eqnarray}
For $c_l=4$ and $\kappa=\frac{1}{2}$ as in the numerical example above, we  have
\begin{eqnarray}\label{eq:nrseq11}
  \lim_{\alpha\rightarrow 0}   \lim_{n\rightarrow\infty} \mE_{Z,\v,E} R(\bar{\beta},\beta_{gls})
& \approx &
0.14963.
\end{eqnarray}
as the limiting risk.

Moreover, it is not that difficult to see that choosing $\kappa$ sufficiently small the risk can be made arbitrarily close to zero. For example, as $\kappa\rightarrow 0$, one has from (\ref{eq:nrseq8a0})
\begin{eqnarray}
 z=\frac{\sqrt{\frac{c_l^2}{\kappa^2} +2\frac{c_l}{\kappa}  -2\frac{c_l^2}{\kappa}  + (c_l + 1)^2} +\frac{c_l}{\kappa} -c_l- 1}{2 c_l }
 \rightarrow \frac{1}{\kappa}.
\label{eq:nrseq12}
\end{eqnarray}
From (\ref{eq:nrseq10}) we then find
\begin{eqnarray}\label{eq:nrseq13}
  \lim_{\kappa\rightarrow 0}\lim_{\alpha\rightarrow 0}   \lim_{n\rightarrow\infty} \mE_{Z,\v,E} R(\bar{\beta},\beta_{gls})
& = &
    \lim_{\kappa\rightarrow 0}  \frac{1 + \kappa z^2c_l^2\bar{\sigma}^2 }{\lp 1+c_lz \rp^2- \kappa z^2c_l^2}
    =
        \lim_{\kappa\rightarrow 0}  \frac{1 + \frac{1}{\kappa}c_l^2\bar{\sigma}^2 }{\lp 1+ \frac{c_l}{\kappa} \rp^2- \frac{c_l^2}{\kappa}}
        \nonumber \\
&    = &
        \lim_{\kappa\rightarrow 0}  \frac{ \frac{1}{\kappa}c_l^2\bar{\sigma}^2 }{ \frac{c_l^2}{\kappa^2} - \frac{c_l^2}{\kappa}}
  =    \lim_{\kappa\rightarrow 0}  \frac{ \frac{1}{\kappa}c_l^2\bar{\sigma}^2 }{ \frac{c_l^2}{\kappa^2} }
  =    \lim_{\kappa\rightarrow 0}  \kappa\sigma^2=0.
\end{eqnarray}
In other words, one can get consistency (vanishing risk as over-parametrization increases) even in a fully linear regime and appropriately (constantly) scaled both $\mbox{SNR}_f$ and $\mbox{SNR}_{\y}$.

%%%%%%%%%%%%%%%%%%%%%%%%%%%%%%%%%%%%%%%%%%%%%%%%%%%%%%%%%%%%%%%%%
%%%%%%%%%%%%%%%%%%%%%%%%%%%%%%%%%%%%%%%%%%%%%%%%%%%%%%%%%%%%%%%%%
%%%%%%%%%%%%%%%%%%%%%%%%%%%%%%%%%%%%%%%%%%%%%%%%%%%%%%%%%%%%%%%%%
%%%%%%%%%%%%%%%%%%%%%%%%%%%%%%%%%%%%%%%%%%%%%%%%%%%%%%%%%%%%%%%%%
\subsubsection{A high SNR regime}
\label{sec:highsnr}
%%%%%%%%%%%%%%%%%%%%%%%%%%%%%%%%%%%%%%%%%%%%%%%%%%%%%%%%%%%%%%%%%
%%%%%%%%%%%%%%%%%%%%%%%%%%%%%%%%%%%%%%%%%%%%%%%%%%%%%%%%%%%%%%%%%
%%%%%%%%%%%%%%%%%%%%%%%%%%%%%%%%%%%%%%%%%%%%%%%%%%%%%%%%%%%%%%%%%
%%%%%%%%%%%%%%%%%%%%%%%%%%%%%%%%%%%%%%%%%%%%%%%%%%%%%%%%%%%%%%%%%

It is also interesting to note that the above scenario allows one to immediately sees the effect of the factors $\mbox{SNR}_f=c_l$ on the excess risk.
Looking carefully at (\ref{eq:nrseq10}), we have that no matter how over-parameterized the system is, for a fixed $\kappa$, no further increase in $\mbox{SNR}_f=c_l$ can achieve consistency. To see this, we first from (\ref{eq:nrseq12}) observe
\begin{eqnarray}
 \lim_{\mbox{SNR}_f=c_l\rightarrow\infty} z
 =
  \lim_{\mbox{SNR}_f=c_l\rightarrow\infty}
  \frac{\sqrt{\frac{c_l^2}{\kappa^2} +2\frac{c_l}{\kappa}  -2\frac{c_l^2}{\kappa}  + (c_l + 1)^2} +\frac{c_l}{\kappa} -c_l- 1}{2 c_l }
 = \frac{1}{\kappa}.
\label{eq:nrseq14}
\end{eqnarray}
We then from (\ref{eq:nrseq10}) also have
\begin{eqnarray}\label{eq:nrseq15}
 \lim_{\mbox{SNR}_f=c_l\rightarrow\infty} \lim_{\alpha\rightarrow 0}   \lim_{n\rightarrow\infty} \mE_{Z,\v,E} R(\bar{\beta},\beta_{gls})
& = &
  \lim_{\mbox{SNR}_f=c_l\rightarrow\infty}  \frac{1 + \kappa z^2c_l^2\bar{\sigma}^2 }{\lp 1+c_lz \rp^2- \kappa z^2c_l^2} \nonumber \\
  & = &
     \lim_{\mbox{SNR}_f=c_l\rightarrow\infty}  \frac{1 + \frac{1}{\kappa}c_l^2\bar{\sigma}^2 }{\lp 1+c_l \frac{1}{\kappa} \rp^2-  \frac{1}{\kappa}c_l^2} \nonumber \\
       & = &
  \frac{\frac{1}{\kappa}\bar{\sigma}^2 }{\frac{1}{\kappa^2} -  \frac{1}{\kappa}} \nonumber \\
        & = &
  \frac{\kappa\bar{\sigma}^2 }{1-\kappa}.
\end{eqnarray}
This should, however, come as a no surprise. Namely as $\mbox{SNR}_f=c_l\rightarrow\infty$, for all practical purposes, we have the idealized factor model (i.e. the one without the noise $E$). For such a model one can from $X$ immediately get $Z$ and the resulting estimation/prediction problem transforms into a classical LRM with the under-parametrization ratio $\kappa$ and noise variance $\mbox{SNR}_{\y}=\bar{\sigma}^2$. For such a model the above expression, $\frac{\kappa\bar{\sigma}^2}{1-\kappa}$, obtained in (\ref{eq:nrseq15}) is exactly the associated excess risk.

%%%%%%%%%%%%%%%%%%%%%%%%%%%%%%%%%%%%%%%%%%%%%%%%%%%%%%%%%%%%%%%%%
%%%%%%%%%%%%%%%%%%%%%%%%%%%%%%%%%%%%%%%%%%%%%%%%%%%%%%%%%%%%%%%%%
%%%%%%%%%%%%%%%%%%%%%%%%%%%%%%%%%%%%%%%%%%%%%%%%%%%%%%%%%%%%%%%%%
%%%%%%%%%%%%%%%%%%%%%%%%%%%%%%%%%%%%%%%%%%%%%%%%%%%%%%%%%%%%%%%%%
\subsection{Simulation statistics}
\label{sec:simstat}
%%%%%%%%%%%%%%%%%%%%%%%%%%%%%%%%%%%%%%%%%%%%%%%%%%%%%%%%%%%%%%%%%
%%%%%%%%%%%%%%%%%%%%%%%%%%%%%%%%%%%%%%%%%%%%%%%%%%%%%%%%%%%%%%%%%
%%%%%%%%%%%%%%%%%%%%%%%%%%%%%%%%%%%%%%%%%%%%%%%%%%%%%%%%%%%%%%%%%
%%%%%%%%%%%%%%%%%%%%%%%%%%%%%%%%%%%%%%%%%%%%%%%%%%%%%%%%%%%%%%%%%

In all simulation setups we utilized statistics that exactly match the ones used in the theoretical analyses. However, the analytical results hold way more generally and the Gaussianity of $Z$ or $E$  is not needed. For the completeness, in addition to the presented simulation results, we also simulated two different statistical regimes, the one with the iid $\pm 1$ Bernullis and the other with the iid $\sqrt{12}\mbox{Unif}[-0.5,0.5]$. There was no point to include them as the obtained results were virtually identical to the Gaussian ones even for seemingly relatively small dimensions ($n=600$).

%%%%%%%%%%%%%%%%%%%%%%%%%%%%%%%%%%%%%%%%%%%%%%%%%%%%%%%%%%%%%%%%%%%%%%%%%%%%%%%%
%%%%%%%%%%%%%%%%%%%%%%%%%%%%%%%%%%%%%%%%%%%%%%%%%%%%%%%%%%%%%%%%%%%%%%%%%%%%%%%%
%%%%%%%%%%%%%%%%%%%%%%%%%%%%%%%%%%%%%%%%%%%%%%%%%%%%%%%%%%%%%%%%%%%%%%%%%%%%%%%%
%%%%%%%%%%%%%%%%%%%%%%%%%%%%%%%%%%%%%%%%%%%%%%%%%%%%%%%%%%%%%%%%%%%%%%%%%%%%%%%%
%%%%%%%%%%%%%%%%%%%%%%%%%%%%%%%%%%%%%%%%%%%%%%%%%%%%%%%%%%%%%%%%%%%%%%%%%%%%%%%%
\section{Conclusion}
\label{sec:conc}
%%%%%%%%%%%%%%%%%%%%%%%%%%%%%%%%%%%%%%%%%%%%%%%%%%%%%%%%%%%%%%%%%%%%%%%%%%%%%%%%
%%%%%%%%%%%%%%%%%%%%%%%%%%%%%%%%%%%%%%%%%%%%%%%%%%%%%%%%%%%%%%%%%%%%%%%%%%%%%%%%
%%%%%%%%%%%%%%%%%%%%%%%%%%%%%%%%%%%%%%%%%%%%%%%%%%%%%%%%%%%%%%%%%%%%%%%%%%%%%%%%
%%%%%%%%%%%%%%%%%%%%%%%%%%%%%%%%%%%%%%%%%%%%%%%%%%%%%%%%%%%%%%%%%%%%%%%%%%%%%%%%
%%%%%%%%%%%%%%%%%%%%%%%%%%%%%%%%%%%%%%%%%%%%%%%%%%%%%%%%%%%%%%%%%%%%%%%%%%%%%%%%

We studied classical (linear) factor  regression models (FRM) and analyzed the performance of the following three well known estimators associated with them: \textbf{\emph{(i)}} minimum norm interpolators (generalized least squares (GLS)); \textbf{\emph{(ii)}} ordinary least squares (LS);  and \textbf{\emph{(iii)}} ridge regularized estimators. We considered a statistical Gaussian context with inter-(hidden) factors and noise correlations which by the structure of the model implies inter-features correlations as well. Due to underlying statistics the optimization programs that produce the above three estimators are random. Through a utilization of a powerful mathematical engine, called \emph{Random Duality Theory} (RDT), we analyzed each of these programs and obtained precise closed form characterizations of all associated optimizing quantities. Viewed as a function of the over-parametrization ratio, the \emph{excess prediction risk} (as one of the key such quantities) is shown to exhibit the non-monotonic double-descent behavior. The risk's dependence of  all key model parameters, including the dimensions ratios and covariance matrices can be clearly seen through the provided closed form expressions. Moreover, the expressions turned out to be neatly transparent so that one can easily see the \emph{bias} and \emph{variance} risk's components and how they depend on all system parameters. For the LS estimator, we obtained analytical results that require no numerical optimization and as such are closed form functions of the models underlying covariance matrices.

We showed that (similarly to the linear regression models (LRM)), the optimally tuned ridge regularization can completely smoothen the FRM double-descent phenomenon. The theoretical analyses are supplemented by the corresponding numerical simulations. Two things are observed: \textbf{\emph{(i)}} There is an excellent agrement between the theoretical and simulated results; \textbf{\emph{(ii)}} The ``ridge smoothing'' effect appears limited for over-parametrization ratios larger than $5$ and as virtually nonexistent for those above $10$. This provides a strong confirmation that simple strategies (such as interpolation) can perform as strongly as the more complex/advanced ones.

A fairly generic character of the presented methodology allows for various extensions and/or generalizations. Those mentioned in \cite{Stojnicridge24} apply here as well. For example, imperfectly  observed features, absent features, adversarial responses are only a few among many popular ones. For each of them, different types of estimators can be considered as well. No further conceptual insights are needed to handle all of them  through the program presented here. The concrete technical realizations of these extensions are usually problem specific and need to be accordingly tailored. We will present them in separate papers.

As  pointed out in the companion paper \cite{Stojnicridge24} (and earlier in  \cite{StojnicRegRndDlt10}), the standard Gaussianity is not required for the presented RDT considerations (for the completeness we even simulated a couple of different statistical regimes and obtained virtually identical final results). The presentation is, however, much neater if it is used. While the detailed writing gets more cumbersome, no substantial conceptual adjustments are needed to accommodate deviations from the Gaussianity. We view an application of the Lindeberg central limit theorem variant (see, e.g.,  \cite{Lindeberg22}) as the most convenient way to extend the results to different statistics and its realization via the approach of \cite{Chatterjee06} as particularly elegant.

%\newpage1
%\setcounter{page}{1}
\begin{singlespace}
\bibliographystyle{plain}
\bibliography{nflgscompyxRefs}
\end{singlespace}

\end{document}